\documentclass[lettersize,journal]{IEEEtran}
\usepackage{amsmath,amsfonts,amssymb,amsthm,mathtools}
\usepackage{array}
\usepackage[caption=false,font=normalsize,labelfont=sf,textfont=sf]{subfig}
\usepackage{textcomp}
\usepackage{stfloats}
\usepackage{url}
\usepackage{verbatim}
\usepackage{graphicx}
\usepackage{cite}
\usepackage{booktabs}
\usepackage{multirow}
\usepackage{makecell}
\usepackage{siunitx}
\usepackage{float}
\usepackage{enumitem}
\usepackage{mathtools}
\usepackage{bm}
\usepackage{xcolor}

\newcommand{\Wass}{\mathcal{W}_1}
\newcommand{\Dt}{D_{\mathrm{task}}}

\theoremstyle{plain}
\newtheorem{theorem}{Theorem}[section]
\newtheorem{proposition}[theorem]{Proposition}
\newtheorem{corollary}[theorem]{Corollary}
\theoremstyle{remark}
\newtheorem{remark}[theorem]{Remark}
\newtheorem{lemma}[theorem]{Lemma}
\theoremstyle{definition}
\newtheorem{assumption}[theorem]{Assumption}

\begin{document}

\title{HIR-ALIGN: Enhancing Hyperspectral Image Restoration via Diffusion-Based Data Generation}

\author{Li Pang\textsuperscript{\dag}, Heng Zhao\textsuperscript{\dag}, Yijia Zhang, Deyu Meng, Xiangyong Cao\textsuperscript{*}
\thanks{Corresponding author: Xiangyong Cao (caoxiangyong@xjtu.edu.cn).}
}

\markboth{}%
{}

\IEEEpubid{}

\maketitle

\begin{abstract}
Hyperspectral image (HSI) restoration is crucial for reliable analysis, as real-world HSIs suffer from degradations like noise, blur, and resolution loss. However, existing models trained on source data often fail when deployed on target domains lacking clean references, a common occurrence in real-world scenarios. To address this issue, we present \textbf{HIR-ALIGN}, which enhances \underline{h}yperspectral \underline{i}mage \underline{r}estoration via \underline{a}ugmenting \underline{l}imited training \underline{i}mages by \underline{g}enerating synthetic data that closely matches the target distribution with \underline{n}o-extra clean target-domain HSI data, a plug-and-play target-adaptive augmentation framework. The framework consists of three stages: (i) \textbf{proxy generation}, where off-the-shelf restoration models are applied to degraded target observations to produce semantics-preserving proxy HSIs that approximate the target-domain clean images; (ii) \textbf{distribution-adaptive synthesis}, where a blur-robust unCLIP diffusion model generates target-aligned RGBs from proxy RGBs, with prompt conditioning and embedding-space noise initialization. Subsequently, a warp-based spectral transfer module synthesizes HSIs by aligning each generated RGB with the proxy RGB, estimating soft patch-wise transport weights, and applying the same weights and learnable local interpolation kernels to the proxy HSI; and (iii) \textbf{aligned supervised finetuning}, where restoration networks pretrained on the source distribution are finetuned using both the proxy HSIs and the synthesized target-aligned HSIs, and are then deployed on degraded target images. We further provide theoretical analysis showing that, under stated assumptions, the proposed augmentation-based finetuning can obtain a tighter target-domain restoration-risk upper bound
by jointly improving target-distribution coverage and controlling spectral bias. Extensive experiments on both simulated and real datasets across multiple restoration tasks, such as denoising and super-resolution, demonstrate that HIR-ALIGN achieves superior performance to proxy-only target-adaptation baselines and outperforms representative unsupervised methods in most cases.

\end{abstract}

\begin{IEEEkeywords}
hyperspectral image restoration, diffusion model, data augmentation, domain adaptation
\end{IEEEkeywords}

\section{Introduction}

Hyperspectral imaging records scene radiance over dozens of narrow spectral bands, providing substantially richer material and physical information than conventional RGB imaging. Owing to this dense spectral representation, hyperspectral images (HSIs) have been widely used in material identification \cite{li2012coupled,material1}, object detection \cite{imamoglu2018hyperspectral,object1,object2}, medical imaging \cite{yoon2022hyperspectral,medical1}, and remote sensing \cite{Remote1,Remote2}. In practical acquisition, however, HSIs are often corrupted by sensor noise, low spatial resolution, blur, missing bands, occlusions, and mixed degradations caused by hardware limitations or imaging environments. HSI restoration is therefore a prerequisite for reliable downstream analysis.

\begin{figure}[t]
\centering
\includegraphics[width=\linewidth]{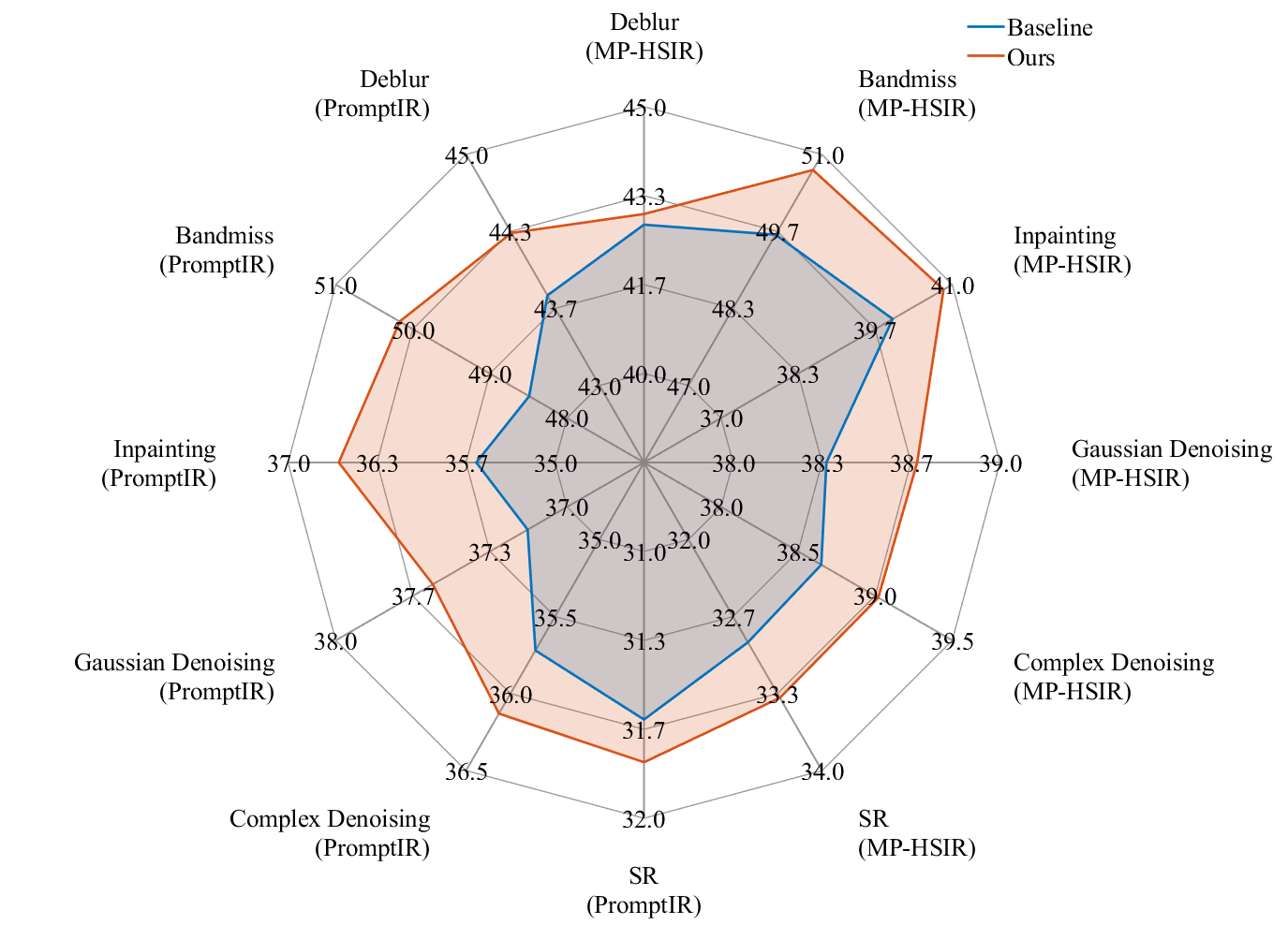}
\caption{PSNR comparison between proxy-only target-adaptive baselines and HIR-ALIGN-adapted models on CAVE \cite{CAVE}. All backbones are pretrained on ICVL \cite{ICVL}; the baseline is further finetuned using proxy data only, whereas HIR-ALIGN is finetuned using both proxy data and synthesized target-aligned data. X (Y) denotes the performance of model Y on task X.}
\label{fig:radar}
\end{figure}

The performance of an image restoration system is mainly determined by three coupled factors: the restoration model, the degradation model, and the training data distribution. The restoration model determines the representation capacity and inference mechanism of the restorer. The degradation model describes how clean HSIs are corrupted during acquisition or during synthetic training-data construction, and thus determines whether the learned inverse mapping is consistent with the target restoration task. The training data distribution determines whether the learned spatial--spectral priors match the scenes to be restored. While substantial progress has been made in architectures (e.g., traditional low-rank, sparsity, and tensor priors \cite{lowrank,sparsity,chen2022hyperspectral,chen2023hyperspectral}, as well as deep CNN, Transformer, Mamba, and diffusion networks \cite{Efficiency,pang2022trq3dnet,pan2023progressive,SERT,SSUMamba,MP-HSIR}) and realistic degradation modeling (e.g., non-i.i.d.\ mixture-of-Gaussians, band-dependent, asymmetric, and mixed noises \cite{chen2018noniid,yue2018complexnoise,ma2021robustnoise,xu2022asymmetricnoise,jiang2018adaptive,zhuang2023fasthymix}), these advancements do not resolve the data-distribution mismatch between training (source) and deployment (target) domains. Shifts in scene content, illumination, or spectral statistics render source-learned spatial--spectral priors suboptimal. This distribution-shift problem is particularly severe for HSI restoration. In many real scenarios, one can easily collect degraded target observations from the deployed sensor, but obtaining the corresponding clean target HSIs is expensive, time-consuming, or physically impossible. Therefore, the central question considered in this paper is: \textbf{\emph{how can we transform a small set of degraded target observations into effective target-aligned supervision, such that a source-trained restorer can achieve better restoration on the observed target data and generalize more reliably to unknown samples from the same target domain, without using additional clean target-domain data?}}

Although generative diffusion models excel at data augmentation, their success has been extensively demonstrated in natural-image synthesis. Representative models, including DDPM \cite{DDPM}, DDIM \cite{DDIM}, score-based diffusion models \cite{DM-Langevin,scores}, ADM \cite{classifier-DM}, GLIDE \cite{GLIDE}, latent diffusion models \cite{latent_diffusion}, unCLIP \cite{unCLIP}, Imagen \cite{imagen}, and DiT \cite{DiT}, have achieved high-fidelity unconditional, class-conditional, text-conditioned, and image-variation generation on RGB natural images. Nevertheless, these models mainly learn RGB appearance priors from large-scale natural-image or text--image data and do not directly provide spectrally faithful HSI supervision. Meanwhile, existing remote-sensing and HSI generators (e.g., CRS-Diff \cite{tang2024crs}, AeroGen \cite{tang2024aerogen}, TerraGen \cite{TerraGen}, HSIGene \cite{pang2024hsigene}) cannot directly solve this target-adaptive problem due to two key limitations:
\begin{itemize}
    \item \textbf{Domain and Modality Specificity:} Natural-image diffusion models mainly operate in the RGB image space and lack explicit spectral-generation constraints, while remote-sensing and HSI generators are constrained to specific domains, layouts, or training corpora. As a result, they struggle to synthesize arbitrary target distributions, such as close-range natural scenes (e.g., CAVE and KAIST), while preserving reliable hyperspectral signatures.
    \item \textbf{Reliance on High-Quality Data:} They typically require clean images, paired supervision, high-quality conditions, or large-scale curated training data. Directly applying these generators to degraded target observations would merely reproduce artifacts (e.g., noise, blur, missing bands) rather than providing clean, target-aligned hyperspectral supervision.
\end{itemize}

To address these issues, we propose \textbf{HIR-ALIGN}, a target-adaptive augmentation framework for HSI restoration under distribution shift. The framework contains three stages: \textbf{proxy generation, semantic-guided distribution-adaptive synthesis, and aligned supervised finetuning}. In the first stage, a source-pretrained restorer is applied to degraded target observations to obtain proxy HSIs. Although these proxies are not perfect ground truth, they remove a large portion of the target degradation and preserve the dominant scene structures and spectral characteristics of the target domain. In the second stage, a blur-robust conditional diffusion model modified from unCLIP \cite{unCLIP} is used to generate target-aligned RGB images from the proxy RGBs. Compared with the original unCLIP pipeline, we introduce a degradation-robust CLIP model, prompt conditioning and embedding-space noise initialization to improve the robustness of RGB synthesis when the conditioning proxies still contain residual restoration errors. To convert the generated RGBs into HSIs, we design a warp-based spectral transfer module. Specifically, we first estimate aggregation weights and interpolation kernels by matching each generated RGB to the proxy RGB through patch descriptors, candidate retrieval, soft pixel aggregation, and learnable local interpolation. These estimated parameters are then applied to the proxy HSI. Consequently, the synthesized HSI inherits authentic spectra from the proxy HSI while adopting the spatial layout and appearance distribution of the generated RGB. This constrained spectral inheritance increases the diversity of target-domain supervision without freely hallucinating spectra, thereby reducing the risk of spectral drift. In the third stage, both proxy samples and synthesized target-aligned samples are used to finetune existing restoration backbones, enabling the adapted models to improve restoration on the observed target images and generalize more reliably to unseen samples from the same target domain. 
This data-construction strategy improves performance because it simultaneously reduces overfitting to a small observed target set, increases target coverage through semantic-guided views, and suppresses spectral drift
through proxy-anchored spectral inheritance. 

The main contributions of this paper are summarized as follows.
\begin{itemize}[leftmargin=*,itemsep=1.5pt,topsep=2pt]
    \item We propose HIR-ALIGN, a plug-and-play data-distribution alignment framework that improves existing supervised HSI restorers through proxy generation, distribution-adaptive synthesis, and aligned supervised finetuning. The framework emphasizes data distribution as a key factor of restoration performance, complementary to model design and degradation simulation.
    \item We design a semantic-guided RGB synthesis module based on improved unCLIP for target-domain augmentation. Specifically, we enhance the original unCLIP framework by introducing a degradation-robust CLIP, prompt conditioning and embedding-space noise initialization, which improve the robustness of conditional diffusion to residual degradations and structural inaccuracies in proxy RGBs, and enable the generation of RGB samples that better match the target-domain appearance distribution.
    \item We propose a warp-based hyperspectral generation module to convert the synthesized RGB images into target-aligned HSIs. Instead of relying on an unconstrained RGB-to-HSI reconstruction network, the proposed module transfers spectra from proxy HSIs to generated RGB layouts through patch matching, sparse candidate aggregation, and shared local interpolation, thereby avoiding spectral hallucination and providing more stable supervision.
    \item We provide a theoretical analysis to justify our data augmentation framework. By formalizing the hyperspectral image synthesis process as a sparse stochastic transport, we derive conditional target-risk bounds. The analysis explains how finetuning on a mixture of proxy and generated samples can expand target-distribution coverage while keeping proxy and warp errors controlled, and it identifies sufficient conditions under which the resulting upper bound is tighter than that of source-domain training.
    \item Extensive experiments on both simulated and real datasets across multiple restoration tasks, such as denoising and super-resolution, demonstrate that existing HSI restoration methods enhanced by HIR-ALIGN achieve better performance than  proxy-only target-adaptation baselines and outperform representative unsupervised methods and generative augmentation baselines in most cases.
\end{itemize}

\section{Related Works}
\subsection{Hyperspectral Image Restoration}

Hyperspectral image restoration aims to recover clean, high-quality data from degraded observations and is typically studied under unsupervised and supervised paradigms. Classical unsupervised methods mainly rely on manually designed priors. Representative examples include LRTV \cite{LRTV}, which combines low-rank decomposition with total variation, LRTFL0 \cite{LRTFL0}, which improves edge preservation through $L_0$ gradient regularization, Enhanced-3DTV \cite{Enhanced-3DTV}, which alleviates the sparsity limitation of traditional 3DTV by subspace learning, and RCTV \cite{RCTV}, which regularizes a representative coefficient matrix for improved efficiency. These methods are attractive because they do not require paired training data, yet their hand-crafted priors are often insufficient to capture the rich and highly nonstationary spatial--spectral structures encountered in real target domains. Consequently, their restoration performance typically falls behind that of supervised methods in in-domain scenarios. Furthermore, these optimization-based approaches generally involve high computational time complexity during the solving process and require laborious hyperparameter tuning.

Supervised restoration methods learn powerful spatial--spectral priors directly from data. For denoising, GRNNet \cite{GRNNet} models both local and global information, TRQ3DNet \cite{pang2022trq3dnet} bridges CNNs and Transformers for improved feature preservation, SERT \cite{SERT} enhances spatial--spectral interactions with a tailored Transformer architecture, and SSUMamba \cite{SSUMamba} adopts the Mamba architecture for efficient long-range modeling. For super-resolution, GDRRN \cite{GDRRN}, SFCSR \cite{SFCSR}, and ESSAformer \cite{ESSAformer} are strong supervised baselines, while diffusion-based variants such as HSR-Diff \cite{HSI_Diff} and ISPDiff \cite{ISPDiff} further explore stochastic generation. More general-purpose restoration backbones such as PromptIR \cite{promptir} and MP-HSIR \cite{MP-HSIR} aim to handle multiple degradations with a unified architecture.

Despite these advances, most supervised methods are still trained under a closed-world assumption: the training and testing domains are expected to share similar degradation statistics, scene composition, and spectral distribution. This assumption is often violated in practical HSI acquisition. Once the target domain shifts away from the source domain, source-pretrained restorers may exhibit large performance drops, even when their in-domain benchmark results are strong. Our work is complementary to the development of restoration backbones: instead of proposing yet another architecture for a fixed task, HIR-ALIGN provides a front-end target adaptation strategy that can be plugged into existing restorers without modifying their inference networks.

\subsection{Diffusion Models}
Diffusion models generate data by reversing a forward perturbation process that gradually maps a clean sample $x_0$ to Gaussian noise $x_t$:
\begin{equation}
q(x_t \mid x_{t-1}) = \mathcal{N}\!\left(x_t; \sqrt{1-\beta_t}\,x_{t-1},\, \beta_t I\right),
\end{equation}
and learn the reverse process
\begin{equation}
\label{eq:reverse}
p_\theta(x_{t-1} \mid x_t) = \mathcal{N}\!\left(x_{t-1} \mid \mu_\theta(x_t,t),\,\Sigma_\theta(x_t,t)\right).
\end{equation}
DDPM \cite{DDPM} and its extensions \cite{DDIM,classifier-DM,scores,DM-Langevin,latent_diffusion} have made diffusion models a dominant family of generative models. In natural-image synthesis, early score-based and denoising diffusion models \cite{DM-Langevin,DDPM,scores} established high-quality generation through iterative denoising or reverse-time stochastic dynamics, while DDIM \cite{DDIM} improved sampling efficiency by introducing a non-Markovian sampling process. ADM \cite{classifier-DM} further improved class-conditional ImageNet synthesis and demonstrated that diffusion models can surpass GAN-based methods in image fidelity. With classifier-free guidance \cite{ho2022classifier} and large-scale text--image pretraining, GLIDE \cite{GLIDE}, latent diffusion models \cite{latent_diffusion}, unCLIP \cite{unCLIP}, and Imagen \cite{imagen} enabled photorealistic text-conditioned or image-conditioned generation. More recently, DiT \cite{DiT} showed that transformer backbones can scale diffusion models effectively for high-quality image synthesis. These natural-image diffusion models provide powerful RGB appearance priors and motivate our use of unCLIP-style image-variation synthesis. The unCLIP framework \cite{unCLIP} is particularly attractive for our problem because it conditions image synthesis on CLIP embeddings and can therefore generate semantically related images from visual prompts rather than only from pure text. However, existing natural-image diffusion models, including unCLIP, are designed primarily for RGB image synthesis and do not explicitly model hyperspectral spectral correlations, making their outputs insufficient as spectrally reliable supervision for HSI synthesis.

Recent works have also explored diffusion-based generation or augmentation in remote sensing and hyperspectral imaging. For example, CRS-Diff \cite{tang2024crs} studies controllable remote-sensing image generation, AeroGen \cite{tang2024aerogen} uses diffusion-driven generation to enhance remote-sensing object detection, and TerraGen \cite{TerraGen} focuses on layout-based remote-sensing data augmentation. In the hyperspectral field, HSIGene \cite{pang2024hsigene} develops a foundation model for HSI generation, and HIR-Diff \cite{pang2024hir} explores diffusion priors for unsupervised HSI restoration. These methods demonstrate the potential of generative models, but they do not directly solve the target-adaptive restoration problem studied here: target adaptation from degraded observations of an arbitrary natural-scene HSI domain. In this setting, simply invoking either an RGB natural-image generator or a domain-specific HSI generator can easily create samples that are visually plausible yet poorly aligned with the target scene distribution or with the degradation-specific semantics contained in the observed target data.

\begin{figure*}[t]
\centering
\includegraphics[width=0.93\linewidth]{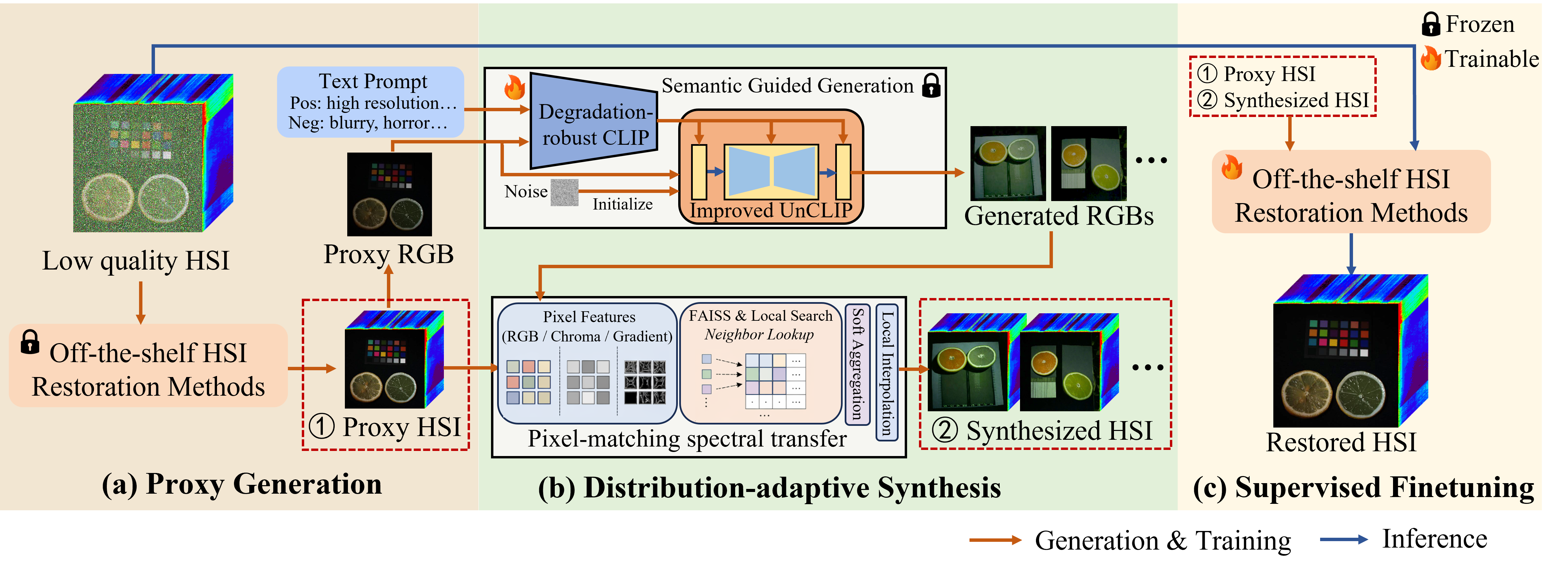}
\caption{Overview of HIR-ALIGN. \textbf{(a) Proxy generation:} source-pretrained HSI restorers produce proxy HSIs and proxy RGBs from degraded target observations. \textbf{(b) Distribution-adaptive synthesis:} improved unCLIP generates target-aligned RGBs, and the proposed pixel-matching spectral transfer module maps proxy spectra to the generated RGB layout by extracting pixel features from patch descriptors, retrieving candidates via neighborhood lookup, and applying soft aggregation with shared local interpolation. \textbf{(c) Aligned supervised finetuning:} re-degraded proxy pairs and synthesized HSI pairs are jointly used to finetune off-the-shelf restoration networks.}
\label{fig:framework}
\end{figure*}
\section{Method}

HIR-ALIGN contains three stages: proxy generation, distribution-adaptive synthesis, and aligned supervised finetuning. The first stage extracts a stable target-domain anchor from each degraded observation. The second stage enlarges the target support by generating target-aligned RGB views and converting them into HSIs through spectral warping. The third stage transforms these data into supervised training pairs and uses them to finetune an arbitrary restoration backbone. The full workflow is summarized in Fig.~\ref{fig:framework}.

\subsection{Proxy Generation}
For each degraded target observation $x_i$, we first obtain a proxy HSI $p_i$ by applying a source-pretrained restoration model $\Phi_{\mathrm{src}}$:
\begin{equation}
p_i=\Phi_{\mathrm{src}}(x_i).
\label{eq:proxy_def}
\end{equation}
The proxy is not expected to be a fully clean reconstruction. Instead, its role is to provide an initial target-domain estimate that preserves the scene layout, object boundaries, and dominant spatial--spectral structures of the latent target image. Therefore, $p_i$ serves as a pseudo-clean anchor for constructing target-adaptive training data. Using proxies rather than raw degraded data as synthesis anchors prevents the entanglement of semantics with artifacts by providing a clean data manifold, enabling effective supervised adaptation.

To couple the proxy HSI with the RGB diffusion model, we convert $p_i$ into a proxy RGB image $r_i=\Gamma(p_i)$. This is performed by selecting three visible-wavelength bands corresponding to red, green, and blue:
\begin{equation}
r_i=\Gamma(p_i)
=
\big[
p_i^{(b_R)},
p_i^{(b_G)},
p_i^{(b_B)}
\big],
\label{eq:proxy_rgb}
\end{equation}
where $b_R$, $b_G$, and $b_B$ denote the bands centered at, or closest to, 660nm, 550nm, and 470nm, respectively. The resulting pair $(p_i,r_i)$ is then used as the target-domain anchor in the subsequent synthesis stage.

Although the proxy might be imperfect, the proposed framework imposes less stringent requirements on them. The proxy mainly needs to preserve sufficient scene semantics and spatial structure for the RGB generator and the subsequent target-adaptive data construction. Moderate restoration errors in $p_i$ do not necessarily destroy this information, because the generation stage is relatively robust to local artifacts and low-level noise. Thus, even a relatively weak proxy can still provide useful target-domain structure for adaptation. 

\subsection{Distribution-Adaptive Synthesis}

This stage contains two coupled components: \textbf{target-aligned RGB generation} and \textbf{warp-based hyperspectral generation}. The first component introduces appearance diversity aligned with the target domain. The second component converts each generated RGB into an HSI while preserving spectral consistency with the proxy HSIs, thereby producing more stable and target-aligned supervision for subsequent finetuning. The details are as follows:



\textbf{1) Target-Aligned RGB Generation with Improved unCLIP.} Given a proxy RGB image $r_i\in[0,1]^{H\times W\times 3}$ for the $i$-th target sample, where $H$ and $W$ denote the spatial height and width, we adopt unCLIP \cite{unCLIP} to generate diverse RGB samples whose spatial appearance matches the target domain. Let $\mathbf{z}_I=E_{\mathrm{clip}}(r_i)$ denote the CLIP image embedding extracted from $r_i$, where $E_{\mathrm{clip}}$ is the CLIP vision encoder. Let $\mathbf{z}_T$ denote the text-conditioning embedding encoded from the fixed prompt guidance. They are fused into a conditioning code
\begin{equation}
\mathbf{z}_C = f_{\mathrm{cond}}(\mathbf{z}_I,\mathbf{z}_T),
\label{eq:cond-fusion}
\end{equation}
where $f_{\mathrm{cond}}(\cdot)$ is the conditioning fusion function and $\mathbf{z}_C$ is the final condition used by the reverse diffusion sampler. Following the DDPM formulation \cite{DDPM}, the mean of the Gaussian reverse transition is
\begin{equation}
\boldsymbol{\mu}_\theta\!\left(\mathbf{x}_t,t\mid\mathbf{z}_C\right)
=
\frac{1}{\sqrt{\alpha_t}}
\left(
\mathbf{x}_t-
\frac{1-\alpha_t}{\sqrt{1-\bar{\alpha}_t}}
\hat{\epsilon}_\theta\!\left(\mathbf{x}_t,t\mid\mathbf{z}_C\right)
\right).
\label{eq:ddpm-mean}
\end{equation}
Here, $\mathbf{x}_t$ is the noisy image latent at diffusion timestep $t$, $\boldsymbol{\mu}_\theta(\cdot)$ is the predicted mean for sampling $\mathbf{x}_{t-1}$, $\theta$ denotes the parameters of the denoising network, $\hat{\epsilon}_\theta(\mathbf{x}_t,t\mid\mathbf{z}_C)$ is the predicted noise conditioned on $\mathbf{z}_C$, $\alpha_t=1-\beta_t$ is determined by the noise variance schedule $\beta_t$, and $\bar{\alpha}_t=\prod_{s=1}^{t}\alpha_s$ is the cumulative noise-retention coefficient.

Directly using the original unCLIP conditioning is not sufficient because the proxy images may still contain blur, residual noise, or missing details. We therefore introduce three modifications:
\begin{itemize}
    \item \textbf{Vision Encoder Fine-tuning:} The CLIP vision encoder is fine-tuned to align embeddings of blurred images with those of their clean counterparts. Denoting the encoder by $E_{\mathrm{clip}}$, we minimize:
    \begin{equation}
    \mathcal{L}_{\mathrm{clip}} = \left\| E_{\mathrm{clip}}(I_{\mathrm{blur}}) - E_{\mathrm{clip}}(I_{\mathrm{clean}}) \right\|_2^2,
    \label{eq:clip_align}
    \end{equation}
    where $I_{\mathrm{blur}}$ and $I_{\mathrm{clean}}$ are a blurred image and its clean counterpart, respectively, $\mathcal{L}_{\mathrm{clip}}$ is the embedding-alignment loss, and $\|\cdot\|_2$ denotes the Euclidean norm. This objective encourages degraded conditions to remain semantically informative in the CLIP embedding space.
    \item \textbf{Prompt Guidance:} Fixed positive and negative prompts are injected as additional guidance to encourage high-quality details and suppress obvious artifacts. 
    \item \textbf{SDEdit-style Initialization:} Instead of starting the reverse diffusion sampling process from pure Gaussian noise, we follow an SDEdit-style initialization \cite{sdedit} by using a noised version of the proxy image latent as the sampling starting point. Let $\mathbf{x}_0^{\mathrm{init}}$ denote the initial clean latent obtained from the proxy RGB, and let $t_0$ be the selected starting timestep of the reverse sampler. We initialize the reverse diffusion trajectory at timestep $t_0$ by
    \begin{equation}
    \mathbf{x}_{t_0}^{\mathrm{init}}=
    \sqrt{\bar{\alpha}_{t_0}}\,\mathbf{x}_0^{\mathrm{init}}+
    \sqrt{1-\bar{\alpha}_{t_0}}\,\boldsymbol{\epsilon},
    \label{eq:sdedit_init}
    \end{equation}
    where $\mathbf{x}_{t_0}^{\mathrm{init}}$ is the initialized sampling state, $\boldsymbol{\epsilon}\sim\mathcal{N}(\mathbf{0},\mathbf{I})$ is standard Gaussian noise, and $\mathbf{I}$ is the identity covariance matrix. The sampler then denoises from $\mathbf{x}_{t_0}^{\mathrm{init}}$ to $\mathbf{x}_0$, so the generated image preserves the semantic structure of the proxy while still allowing diverse target-aligned appearance variations.

\end{itemize}

The output of this stage is a set of generated guide RGBs $\{g_{i,j}\}_{j=1}^{M}$ for each proxy sample, where $M$ is the number of generated RGB guides per proxy and $j$ indexes the $j$-th generated guide. 
By generating multiple RGB guides from each proxy, the synthesized training set can cover a broader portion of the target-domain appearance distribution, which is beneficial for improving adaptation and generalization by enriching local target-domain variability in texture, illumination, and fine geometry.

\begin{figure*}[t]
\centering
\includegraphics[width=\textwidth]{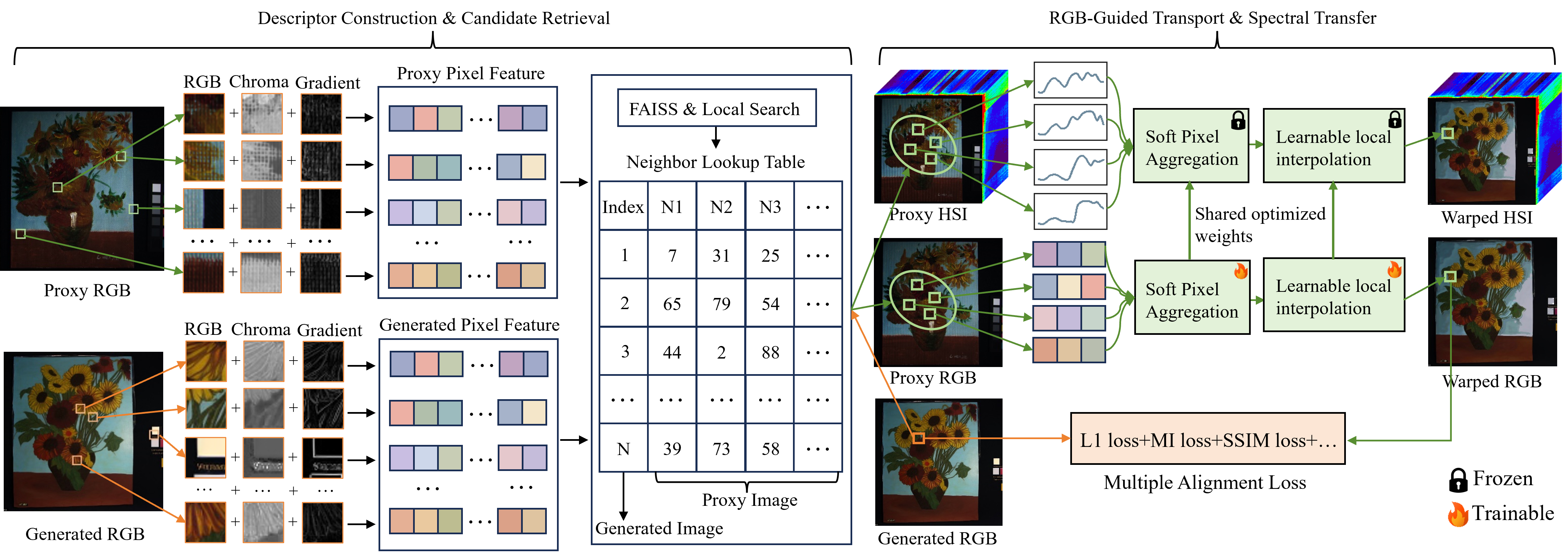}
\caption{Details of the warp based spectral transfer process. We first match the generated and proxy RGB images using multi feature patch descriptors and neighbor retrieval. We then optimize soft aggregation weights and local interpolation kernels guided by RGB alignment losses. Finally we apply these shared learned parameters directly to the proxy hyperspectral data to synthesize the target aligned output.}
\label{fig:warp}
\end{figure*}

\textbf{2) Warp-Based Hyperspectral Generation.} The synthesized RGB guide provides target-aligned appearance, while the proxy HSI provides the spectral anchor. We therefore synthesize a target-aligned HSI by estimating a sparse RGB-conditioned transport operator from the proxy RGB to the guide RGB and then applying the same operator to the proxy spectra. The process is shown in Figure \ref{fig:warp}. Let $i$ index a proxy sample and $j$ index one generated RGB guide associated with this proxy. We denote the proxy HSI by $p_i\in\mathbb{R}^{H\times W\times B}$, where $H$ and $W$ are the spatial height and width and $B$ is the number of spectral bands. The pseudo-RGB image derived from $p_i$ is denoted by $r_i\in[0,1]^{H\times W\times 3}$, and the generated guide RGB is denoted by $g_{i,j}\in[0,1]^{H\times W\times 3}$. The pixel lattice is $\Omega=\{1,\ldots,H\}\times\{1,\ldots,W\}$, and pixel coordinates are written as $u=(u_y,u_x)\in\Omega$ and $v=(v_y,v_x)\in\Omega$.

\textbf{Step1: Descriptor Construction and Candidate Retrieval.}
For both the proxy RGB and the guide RGB, we construct local patch descriptors that encode color, chromaticity, and structural information. Given an RGB image $I\in[0,1]^{H\times W\times 3}$, the descriptor at pixel $u$ is defined as
\begin{equation}
\begin{aligned}
\psi(I,u)
&=
\mathrm{Unfold}_{s_m}\!\Bigl(
[I,\lambda_c\chi(I),\lambda_g\nabla I]
\Bigr)_u,\\
\chi(I)(u)
&=
\frac{I(u)}
{\sum_{c=1}^{3}I_c(u)+\epsilon}.
\end{aligned}
\label{eq:descriptor}
\end{equation}
Here, $\psi(I,u)\in\mathbb{R}^{D_\psi}$ is the descriptor vector, $D_\psi$ is the descriptor dimension, $\mathrm{Unfold}_{s_m}(\cdot)_u$ extracts and vectorizes an $s_m\times s_m$ patch centered at $u$, $I_c(u)$ is the value of the $c$-th RGB channel at $u$, $\epsilon$ is a small positive constant for numerical stability, $\chi(I)$ is the chroma-normalized RGB image, $\nabla I$ denotes image-gradient features, and $\lambda_c$ and $\lambda_g$ are balancing coefficients. The chroma term reduces sensitivity to illumination variation, while the gradient term emphasizes local structural consistency.

For each guide pixel $u$, we retrieve a small set of appearance-consistent seed locations from the proxy image by nearest-neighbor search in the descriptor space. This retrieval is performed with FAISS~\cite{johnson2019billion}, which enables efficient similarity search over dense patch descriptors. Formally, the seed locations are defined as
\begin{equation}
\{q_{u,m}\}_{m=1}^{K_s}
=
\mathrm{NN}_{K_s}
\left(
\psi(g_{i,j},u),
\{\psi(r_i,v)\mid v\in\Omega\}
\right),
\label{eq:seed-retrieval}
\end{equation}
where $q_{u,m}\in\Omega$ is the $m$-th retrieved proxy seed for guide location $u$, and $\mathrm{NN}_{K_s}$ returns the $K_s$ proxy locations whose descriptors are closest to $\psi(g_{i,j},u)$ under the descriptor distance. Each seed is further expanded within a local spatial neighborhood,
\begin{equation}
\mathcal{N}_{\rho}(q)
=
\{v\in\Omega\mid \|v-q\|_{\infty}\le \rho\},
\quad
\mathcal{S}_{i,j}(u)
=
\bigcup_{m=1}^{K_s}\mathcal{N}_{\rho}(q_{u,m}),
\label{eq:local-candidate-pool}
\end{equation}
where $\rho$ is the refinement radius, $\mathcal{N}_{\rho}(q)$ is the neighborhood centered at seed $q$, and $\mathcal{S}_{i,j}(u)$ is the resulting candidate pool for guide pixel $u$. The final candidates are selected from the expanded candidate pool
$S_{i,j}(u)$ according to the mean absolute descriptor discrepancy.
For each guide pixel $u$ and each proxy candidate $v \in S_{i,j}(u)$,
we compute
\begin{equation}
\delta_{\psi}(u,v)=
\frac{1}{D_{\psi}}
\left\|
\psi(g_{i,j},u)-\psi(r_i,v)
\right\|_1,
\quad
v\in \mathcal{S}_{i,j}(u).
\label{eq:descriptor_distance}
\end{equation}
Therefore, $\delta_{\psi}(u,v)$ measures the average $\ell_1$ discrepancy between the guide descriptor at location $u$ and a candidate proxy descriptor at location $v$ drawn from the pool $S_{i,j}(u)$ in Eq.~(11).
We retain the $K$ candidates with the smallest descriptor distances:
\begin{equation}
C_{i,j}(u)=\{v_{u,1},\ldots,v_{u,K}\}\subseteq S_{i,j}(u),
\quad
d_{u,k}=\delta_{\psi}(u,v_{u,k}).
\end{equation} 
Here, $v_{u,k}$ is the $k$-th retained proxy candidate for guide pixel $u$, and $d_{u,k}$ is its corresponding descriptor distance. For notational simplicity, $\mathcal{C}_{i,j}(u)$ is written as a set, although repeated candidate locations may occur. Such repetitions only reduce the effective support of the sparse transport operator and do not change the formulation.

\textbf{Step2: RGB-Guided Estimation of the Sparse Transport Operator.}
The candidate set $\mathcal{C}_{i,j}(u)$ provides a sparse bank of proxy pixels that are visually compatible with guide location $u$. Rather than selecting a single hard correspondence from $\mathcal{C}_{i,j}(u)$, we estimate soft aggregation weights over all retained candidates. Let $\ell_{u,v}$ be the learnable aggregation logit associated with candidate proxy location $v\in\mathcal{C}_{i,j}(u)$, and let $\tau>0$ be the softmax temperature. The aggregation weight assigned to candidate $v$ is defined as
\begin{equation}
a_{u,v}(\ell)
=
\frac{\exp(\ell_{u,v}/\tau)}
{\sum_{v'\in \mathcal{C}_{i,j}(u)}
\exp(\ell_{u,v'}/\tau)},
\quad
v\in \mathcal{C}_{i,j}(u).
\label{eq:soft_aggregation}
\end{equation}
Here, $\ell=\{\ell_{u,v}\mid u\in\Omega,\; v\in\mathcal{C}_{i,j}(u)\}$ denotes all aggregation logits, and $a_{u,v}(\ell)$ is the normalized contribution of proxy pixel $v$ to guide pixel $u$. By construction, the weights over the candidate set satisfy $\sum_{v\in\mathcal{C}_{i,j}(u)}a_{u,v}(\ell)=1$. When the retained candidates are indexed as $v_{u,k}$, we use the shorthand $a_{u,k}(\ell):=a_{u,v_{u,k}}(\ell)$. The pre-interpolation RGB induced by these weights is
\begin{equation}
\bar{r}_{i,j}(u;\ell)
=
\sum_{v\in \mathcal{C}_{i,j}(u)}
a_{u,v}(\ell)\, r_i(v),
\quad
\sum_{v\in \mathcal{C}_{i,j}(u)} a_{u,v}(\ell)=1 .
\label{eq:pre_interp_rgb}
\end{equation}
where $r_i(v)\in\mathbb{R}^{3}$ is the RGB vector of the proxy image $r_i$ at candidate location $v$. 

To reduce local artifacts introduced by discrete candidate aggregation, we further estimate a local interpolation kernel. Let $\Delta_s=\{\eta=(\eta_y,\eta_x)\mid \eta_y,\eta_x\in[-\lfloor s/2\rfloor,\lfloor s/2\rfloor]\}$ denote an $s\times s$ spatial stencil, where $\eta$ is a local offset. Let $m_{u,\eta}$ be the interpolation logit for offset $\eta$ at pixel $u$. The interpolation weight is
\begin{equation}
b_{u,\eta}(m)
=
\frac{\exp(m_{u,\eta})}
{\sum_{\eta'\in\Delta_s}\exp(m_{u,\eta'})},
\quad
\sum_{\eta\in\Delta_s}b_{u,\eta}(m)=1,
\label{eq:interp_weight}
\end{equation}
where $m=\{m_{u,\eta}\mid u\in\Omega,\eta\in\Delta_s\}$ denotes all interpolation logits. The RGB image used for guide alignment is then
\begin{equation}
\tilde{r}_{i,j}(u;\ell,m)
=
\sum_{\eta\in\Delta_s}
b_{u,\eta}(m)\,
\bar{r}_{i,j}\!\left(\pi(u+\eta);\ell\right),
\label{eq:optimized-rgb}
\end{equation}
where $\pi(\cdot)$ maps out-of-bound coordinates back to $\Omega$ according to the boundary handling rule.
The aggregation weights also define an expected proxy coordinate and a dense displacement field:
\begin{equation}
\mathbf{c}_{i,j}(u;\ell)
=
\sum_{k=1}^{K}
a_{u,k}(\ell)\,v_{u,k},
\quad
\mathbf{d}_{i,j}(u;\ell)
=
\mathbf{c}_{i,j}(u;\ell)-u.
\label{eq:soft-coordinate}
\end{equation}
Here, $\mathbf{c}_{i,j}(u;\ell)\in\mathbb{R}^{2}$ is the expected source coordinate in the proxy image, and $\mathbf{d}_{i,j}(u;\ell)\in\mathbb{R}^{2}$ is the displacement vector from the guide coordinate to the expected proxy coordinate. This representation exposes the synthesis process as a soft spatial warp and enables direct regularization of the coordinate field.

The aggregation logits and interpolation logits are estimated by aligning the synthesized RGB with the guide RGB:
\begin{equation}
(\ell^\star,m^\star)
=
\arg\min_{\ell,m}
\mathcal{L}_{\mathrm{warp}}(\ell,m).
\label{eq:warp-argmin}
\end{equation}
The objective is
\begin{equation}
\begin{aligned}
\mathcal{L}_{\mathrm{warp}}
=&\;
\lambda_{1}\|\tilde{r}_{i,j}-g_{i,j}\|_{1}
+\lambda_{p}\mathcal{L}_{\mathrm{patch}}(\tilde{r}_{i,j},g_{i,j})\\
&+\lambda_{\mathrm{mi}}\mathcal{L}_{\mathrm{MI}}(\tilde{r}_{i,j},g_{i,j})
+\lambda_{\mathrm{ssim}}\mathcal{L}_{\mathrm{SSIM}}(\tilde{r}_{i,j},g_{i,j})\\
&+\lambda_{\nabla}\|\nabla\tilde{r}_{i,j}-\nabla g_{i,j}\|_{1}
+\lambda_{s}\mathcal{L}_{\mathrm{smooth}}(\widehat{\mathbf{c}}_{i,j})\\
&+\lambda_{d}\mathcal{L}_{\mathrm{dist}} .
\end{aligned}
\label{eq:warp-loss}
\end{equation}
For compactness, $\tilde{r}_{i,j}$ in Eq.~\eqref{eq:warp-loss} denotes $\tilde{r}_{i,j}(\cdot;\ell,m)$. The normalized coordinate field is $\widehat{\mathbf{c}}_{i,j}(u)=(c_{i,j}^{y}(u)/H,c_{i,j}^{x}(u)/W)$, where $c_{i,j}^{y}(u)$ and $c_{i,j}^{x}(u)$ are the vertical and horizontal components of $\mathbf{c}_{i,j}(u)$. The term $\mathcal{L}_{\mathrm{patch}}$ is an $\ell_1$ loss between unfolded RGB patches, $\mathcal{L}_{\mathrm{MI}}$ is a differentiable mutual-information loss, $\mathcal{L}_{\mathrm{SSIM}}=1-\mathrm{SSIM}$ is the structural-similarity loss, and $\mathcal{L}_{\mathrm{smooth}}$ penalizes spatial variation of the normalized coordinate field. The descriptor-distance prior is defined as
\begin{equation}
\begin{aligned}
\mathcal{L}_{\mathrm{dist}}
=&
\frac{1}{|\Omega|}
\sum_{u\in\Omega}
\sum_{k=1}^{K}
a_{u,k}(\ell)\,\widehat{d}_{u,k},
\\
\widehat{d}_{u,k}
=&
\frac{d_{u,k}}
{\frac{1}{|\Omega|K}\sum_{u'\in\Omega}\sum_{k'=1}^{K}d_{u',k'}+\epsilon}.
\label{eq:distance-prior}
\end{aligned}
\end{equation}
Here, $|\Omega|$ is the number of pixels, $\widehat{d}_{u,k}$ is the normalized descriptor distance, and $\epsilon$ prevents division by zero. The coefficients $\lambda_{1}$, $\lambda_{p}$, $\lambda_{\mathrm{mi}}$, $\lambda_{\mathrm{ssim}}$, $\lambda_{\nabla}$, $\lambda_s$, and $\lambda_d$ balance the RGB fidelity, patch consistency, mutual-information alignment, structural similarity, gradient preservation, coordinate smoothness, and candidate-distance prior terms, respectively.

\textbf{Step3: Shared Spectral Transfer and Local Interpolation.}
After estimating the RGB-conditioned transport parameters, we fix the optimized aggregation and interpolation weights as
\begin{equation}
a^{\star}_{u,v}=a_{u,v}(\ell^{\star}),\quad v\in\mathcal{C}_{i,j}(u),
\quad
b^{\star}_{u,\eta}=b_{u,\eta}(m^{\star}),\quad \eta\in\Delta_s .
\label{eq:fixed_weights}
\end{equation}
Here, $a^{\star}_{u,v}$ is the optimized aggregation weight assigned to candidate proxy location $v$ for guide pixel $u$, and $b^{\star}_{u,\eta}$ is the optimized local interpolation weight for spatial offset $\eta$.

The proxy HSI is then transferred using exactly the same aggregation and interpolation weights estimated from RGB alignment. The pre-interpolation HSI is
\begin{equation}
\bar{y}_{i,j}(u)
=
\sum_{v\in\mathcal{C}_{i,j}(u)}
a^{\star}_{u,v}\,p_i(v),
\label{eq:pre_interp_hsi}
\end{equation}
where $p_i(v)\in\mathbb{R}^{B}$ is the full spectral vector of the proxy HSI $p_i$ at candidate location $v$, and $B$ is the number of spectral bands. The final synthesized HSI is
\begin{equation}
\tilde{y}_{i,j}(u)
=
\sum_{\eta\in\Delta_s}
b^{\star}_{u,\eta}\,
\bar{y}_{i,j}(\pi(u+\eta)).
\label{eq:final_hsi}
\end{equation}

This shared-operator design is essential for spectral consistency. At each guide location, all spectral bands are transported using the same candidate weights $a_{u,k}^{\star}$ and the same local interpolation weights $b_{u,\eta}^{\star}$. Therefore, the method does not assign different spatial correspondences to different spectral channels. Once a set of proxy pixels has been selected for a guide location, the entire spectral vector is transferred through a common spatial rule, preserving the proxy-derived spectral structure while adapting the spatial layout to the guide RGB.

\textbf{Discussion.}
Although the implementation of the warp-based generation contains multiple steps, including sparse candidate aggregation and shared local interpolation, these operations can be equivalently composed into a single sparse stochastic warp. Specifically, if $A_{i,j}$ denotes the soft aggregation matrix and $B_{i,j}$ denotes the shared local interpolation matrix for the proxy-guide pair $(i,j)$, then the overall transport can be written as
\begin{equation}
T_{i,j}=B_{i,j}A_{i,j},
\label{eq:one_step_warp_discussion}
\end{equation}
and the transferred RGB and HSI satisfy
\begin{equation}
\tilde r_{i,j}=T_{i,j}r_i,
\quad
\tilde y_{i,j}=T_{i,j}p_i.
\label{eq:shared_warp_discussion}
\end{equation}
Thus, the generated HSI remains interpretable as a one-step sparse stochastic transport of proxy spectra. The detailed proof of this operator-level equivalence, including the nonnegativity, row-stochasticity, and sparsity of $T_{i,j}$, is provided in the supplementary material.

The qualitative results are visualized in Fig.~\ref{fig:points} and Fig.~\ref{fig:ge}, in which it can be seen that the improved unCLIP model produces spatially plausible target-aligned RGBs, while the proposed spectral warp preserves spectral consistency. 

\begin{figure}[t]
\centering
\includegraphics[width=\linewidth]{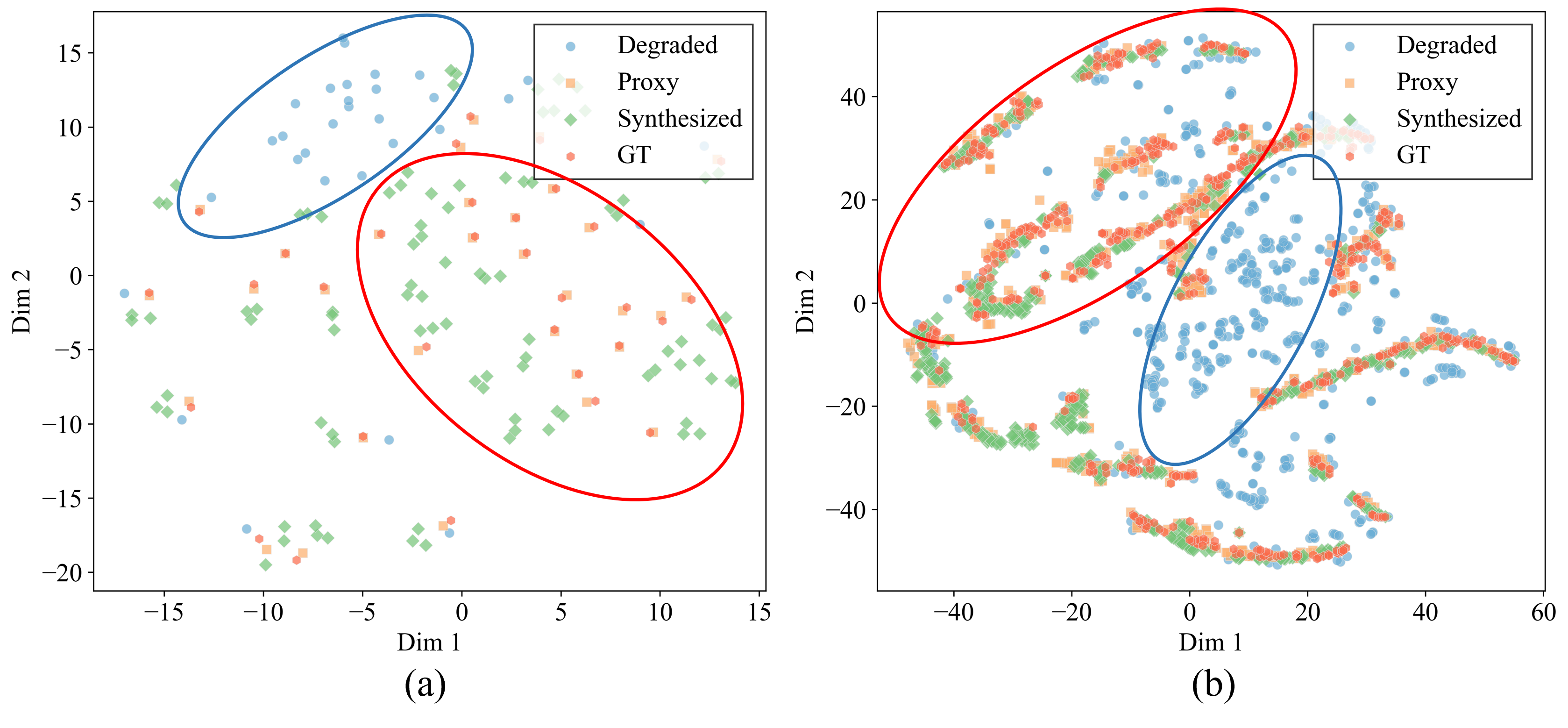}
\caption{t-SNE visualizations of \textbf{(a)} CLIP features derived from RGB images and \textbf{(b)} spectra for degraded, proxy, synthesized, and GT samples. The synthesized samples move toward the target distribution in both spatial and spectral spaces while preserving diversity.}
\label{fig:points}
\end{figure}

\begin{figure}[t]
\centering
\includegraphics[width=1\linewidth]{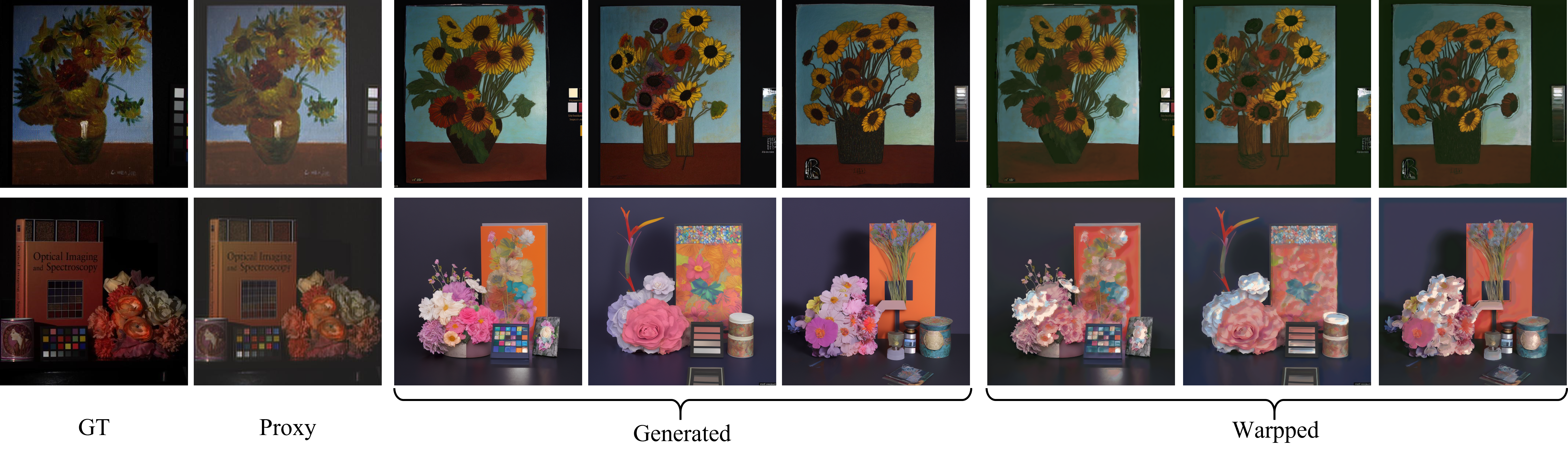}
\caption{Examples of GT images, proxy images, and synthesized samples. The synthesized RGBs retain the scene semantics of the proxies while introducing target-aligned appearance variations.}
\label{fig:ge}
\end{figure}

\subsection{Aligned Supervised Finetuning}

After proxy and synthesized HSIs are obtained, we convert them into supervised training pairs for adapting the downstream restoration backbone. Let $D_{\mathrm{task}}(\cdot)$ denote the task-specific degradation operator, such as the degradation process for denoising or super-resolution. The proxy branch uses the proxy HSI $p_i$ as a pseudo-clean target and constructs its corresponding degraded input by applying $D_{\mathrm{task}}$:
\begin{equation}
\mathcal{T}_{p}
=
\left\{
\left(
D_{\mathrm{task}}(p_i),p_i
\right)
\right\}_{i=1}^{N}.
\label{eq:proxy_pairs}
\end{equation}
The synthesized branch follows the same supervised format:
\begin{equation}
\mathcal{T}_{g}
=
\left\{
\left(
D_{\mathrm{task}}(\tilde{y}_{i,j}),\tilde{y}_{i,j}
\right)
\right\}_{i=1,\ldots,N;\,j=1,\ldots,M}.
\label{eq:gen_pairs}
\end{equation}
Here, $N$ is the number of target observations, $M$ is the number of synthesized HSIs generated from each proxy, and $\tilde{y}_{i,j}$ is the synthesized HSI obtained from the warp-based generation module. This construction ensures that each degraded input is paired with the HSI from which it is derived, yielding geometrically and spectrally aligned supervision.

Let $\Phi_{\theta}$ denote the restoration network to be adapted, initialized from the source-pretrained model. The finetuning set is formed by combining the proxy and synthesized branches,
\begin{equation}
\mathcal{T}_{\mathrm{align}}
=
\mathcal{T}_{p}\cup\mathcal{T}_{g}.
\label{eq:aligned_set}
\end{equation}
The target-adapted parameters are obtained by minimizing the supervised restoration objective
\begin{equation}
\theta^{\star}
=
\arg\min_{\theta}
\frac{1}{|\mathcal{T}_{\mathrm{align}}|}
\sum_{(x,y)\in\mathcal{T}_{\mathrm{align}}}
\mathcal{L}_{\mathrm{sup}}
\left(
\Phi_{\theta}(x),y
\right),
\label{eq:aligned_finetune}
\end{equation}
where $(x,y)$ denotes a degraded-clean training pair, and $\mathcal{L}_{\mathrm{sup}}$ is the supervised loss used by the corresponding restoration backbone.



\subsection{Theoretical Analysis}
\label{sec:theoretical_insight}

We provide a conditional analysis to explain why semantic-guided augmentation improves target-domain restoration. 
Let $Y\sim\mu_t$ denote a clean HSI from the full target distribution, and let $Y_o\sim\mu_o$ denote a clean HSI from the observed target subset used for adaptation. The full target test distribution is
\begin{equation}
P_t:=\mathcal L\bigl(D_{\mathrm{task}}(Y),Y\bigr).
\label{eq:main_target_dist}
\end{equation}
For an observed scene, write the proxy HSI as
\begin{equation}
P_o=Y_o+E,
\label{eq:main_proxy_error}
\end{equation}
where $E$ is the proxy error. The clean counterpart and the actual proxy branch are
\begin{equation}
\begin{aligned}
P_p^{\star}&:=\mathcal L\bigl(D_{\mathrm{task}}(Y_o),Y_o\bigr),\\
P_p&:=\mathcal L\bigl(D_{\mathrm{task}}(P_o),P_o\bigr).
\end{aligned}
\label{eq:main_proxy_dists}
\end{equation}
For the generated branch, the same realized semantic warp $T$ is frozen when comparing the actual generated pair with its clean counterpart:
\begin{equation}
\begin{aligned}
P_g^{\star}&:=\mathcal L\bigl(D_{\mathrm{task}}(TY_o),TY_o\bigr),\\
P_g&:=\mathcal L\bigl(D_{\mathrm{task}}(TP_o),TP_o\bigr).
\end{aligned}
\label{eq:main_generated_dists}
\end{equation}
This construction gives the exact generated-label error identity $TP_o-TY_o=TE$.

Let $\rho$ denote an ideal family of admissible target-aligned semantic warps, and define the corresponding vicinal distribution
\begin{equation}
P_v:=\mathcal L\bigl(D_{\mathrm{task}}(\widetilde T Y_o),\widetilde T Y_o\bigr),
\quad \widetilde T\sim\rho .
\label{eq:main_vicinal_dist}
\end{equation}
The coverage and generator-mismatch quantities are
\begin{equation}
\begin{aligned}
\Delta_p&:=\Wass(P_t,P_p^{\star}),\\
\Delta_w&:=\Wass(P_t,P_v),\\
e_w&:=\Wass(P_g^{\star},P_v),
\end{aligned}
\label{eq:main_delta_terms}
\end{equation}
where $\Wass(\cdot,\cdot)$ denotes the 1-Wasserstein distance under the pair metric $d((x,y),(x^{\prime},y^{\prime}))=\|x-x^{\prime}\|_F+\|y-y^{\prime}\|_F$. Here $\Delta_p$ measures the support gap of the observed proxy scenes, $\Delta_w$ measures the support gap of the ideal semantic-warp distribution, and $e_w$ measures how far the realized warp is from that ideal vicinal family. 

For a mixture ratio $\alpha\in[0,1]$, where $\alpha$ is the fraction of generated samples in finetuning, define the clean and actual mixed training distributions as
\begin{equation}
P_{\alpha}^{\star}:=(1-\alpha)P_p^{\star}+\alpha P_g^{\star},
\quad
P_{\alpha}:=(1-\alpha)P_p+\alpha P_g.
\label{eq:main_mixture_dist}
\end{equation}
The clean coverage term and actual-vs-clean perturbation term are
\begin{equation}
\begin{aligned}
\Delta_{\alpha}&:=(1-\alpha)\Delta_p+
\alpha\Delta_w+\alpha e_w,\\
\varepsilon(\alpha)&:=(1-\alpha)\varepsilon_p+
\alpha\varepsilon_g,
\end{aligned}
\label{eq:main_delta_epsilon_alpha}
\end{equation}
with
\begin{equation}
\varepsilon_p:=\Wass(P_p,P_p^{\star}),
\quad
\varepsilon_g:=\Wass(P_g,P_g^{\star}).
\label{eq:main_pair_perturb_terms}
\end{equation}

For any pair distribution $Q$, let $R_Q(f)$ be the expected supervised risk over $Q$, and let $R_t(f):=R_{P_t}(f)$.  Let $F^{\star}$ be the Bayes restorer under $P_t$, and let $f_{\mathcal F}^{\star}$ be a fixed comparator in the chosen hypothesis class $\mathcal F$. Define the risk-level bias terms as
\begin{equation}
\begin{aligned}
B_0^2
&:=\bigl[R_{P_p^{\star}}(f_{\mathcal F}^{\star})-R_t(F^{\star})\bigr]_+,\\
B_{\mathrm{warp}}^2
&:=\bigl[R_{P_g^{\star}}(f_{\mathcal F}^{\star})-R_{P_p^{\star}}(f_{\mathcal F}^{\star})\bigr]_+,
\end{aligned}
\label{eq:main_bias_terms}
\end{equation}
where $[a]_+:=\max\{a,0\}$. The first term is the comparator bias on the clean proxy branch, and the second term is the additional positive risk allowance caused by replacing clean observed proxy scenes with clean warped scenes in the generated branch.

\begin{theorem}[Conditional target-risk bound for HIR-ALIGN]
\label{thm:main_bound}
Assume that $D_{\mathrm{task}}$ is $L_D$-Lipschitz. Also assume that for every $f\in\mathcal F$,
\begin{equation}
|R_P(f)-R_Q(f)|\le L\Wass(P,Q)
\label{eq:main_lipschitz_risk}
\end{equation}
for any two pair distributions $P$ and $Q$. If finetuning on the actual mixture distribution returns $f_{\alpha}$ satisfying the effective oracle-type condition
\begin{equation}
R_{P_{\alpha}}(f_{\alpha})
\le
\inf_{f\in\mathcal F} R_{P_{\alpha}}(f)
+\frac{C_v}{m_{\mathrm{eff}}(\alpha)},
\label{eq:main_effective_oracle}
\end{equation}
then
\begin{equation}
\begin{aligned}
R_t(f_{\alpha})
\le\;&
R_t(F^{\star})+B_0^2+
\alpha B_{\mathrm{warp}}^2\\
&+\frac{C_v}{m_{\mathrm{eff}}(\alpha)}
+L\Delta_{\alpha}+2L\varepsilon(\alpha).
\end{aligned}
\label{eq:main-risk-bound}
\end{equation}
$C_v$ is an effective-view complexity constant. Moreover, if the realized warp satisfies $T^{\top}T\preceq\kappa I$ almost surely, then
\begin{equation}
\begin{aligned}
\varepsilon_p
&\le (1+L_D)\,\mathbb{E}\|E\|_F,\\
\varepsilon_g
&\le (1+L_D)
\sqrt{\kappa\,\mathbb{E}\|E\|_F^2}.
\end{aligned}
\label{eq:main_pair_perturb_bound}
\end{equation}
\end{theorem}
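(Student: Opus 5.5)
The bound comes from a chain of risk comparisons linked by the Lipschitz-risk assumption \eqref{eq:main_lipschitz_risk}. For the first claim we pass from $P_t$ to the actual mixture $P_\alpha$, use the oracle condition \eqref{eq:main_effective_oracle} there, and then pass back to the comparator $f_{\mathcal F}^\star$ on the clean branches.

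\emph{Step 1.} Since $f_\alpha\in\mathcal F$, \eqref{eq:main_lipschitz_risk} gives
\[
R_t(f_\alpha)\le R_{P_\alpha}(f_\alpha)+L\,\Wass(P_t,P_\alpha).
\]
Split $\Wass(P_t,P_\alpha)\le \Wass(P_t,P_\alpha^\star)+\Wass(P_\alpha^\star,P_\alpha)$. The second term is at most $\varepsilon(\alpha)$ by joint convexity of $\Wass$ under mixtures: mix the optimal couplings of $(P_p,P_p^\star)$ and $(P_g,P_g^\star)$ with weights $1-\alpha,\alpha$. For the first term, convexity gives
\[
\Wass(P_t,P_\alpha^\star)\le(1-\alpha)\Wass(P_t,P_p^\star)+\alpha\Wass(P_t,P_g^\star),
\]
and $\Wass(P_t,P_g^\star)\le\Delta_w+e_w$ by the triangle inequality through $P_v$. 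This yields $\Wass(P_t,P_\alpha)\le\Delta_\alpha+\varepsilon(\alpha)$.

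\emph{Step 2.} Apply \eqref{eq:main_effective_oracle}, bound the infimum by the value at $f_{\mathcal F}^\star$, and use \eqref{eq:main_lipschitz_risk} again:
\[
R_{P_\alpha}(f_{\mathcal F}^\star)\le R_{P_\alpha^\star}(f_{\mathcal F}^\star)+L\varepsilon(\alpha).
\]
This second $L\varepsilon(\alpha)$ produces the factor $2$. Risk is linear in the distribution, so
\[
R_{P_\alpha^\star}(f_{\mathcal F}^\star)=(1-\alpha)R_{P_p^\star}(f_{\mathcal F}^\star)+\alpha R_{P_g^\star}(f_{\mathcal F}^\star)\le R_{P_p^\star}(f_{\mathcal F}^\star)+\alpha B_{\mathrm{warp}}^2,
\]
using $a\le [a]_+$. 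Finally $R_{P_p^\star}(f_{\mathcal F}^\star)\le R_t(F^\star)+B_0^2$ by the definition of $B_0^2$. Collecting terms gives \eqref{eq:main-risk-bound}.

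\emph{Step 3: perturbation bounds.} Use the synchronous coupling that pairs $(D_{\mathrm{task}}(P_o),P_o)$ with $(D_{\mathrm{task}}(Y_o),Y_o)$ for the same realization. We assume $D_{\mathrm{task}}$ is deterministic here, or that any random degradation noise is shared across the two, which is the step needing care. Under this coupling the pair cost is
\[
\|D_{\mathrm{task}}(P_o)-D_{\mathrm{task}}(Y_o)\|_F+\|E\|_F\le(1+L_D)\|E\|_F,
\]
and taking expectations bounds $\varepsilon_p$. For $\varepsilon_g$, freeze the same $T$: the label gap is $\|TE\|_F$ and the input gap is at most $L_D\|TE\|_F$. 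Now
\[
\|TE\|_F^2=\operatorname{tr}(E^\top T^\top T E)\le\kappa\|E\|_F^2,
\]
so Jensen gives $\mathbb E\|TE\|_F\le\sqrt{\kappa\,\mathbb E\|E\|_F^2}$.

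The main obstacle is rigor rather than computation. One issue is making the coupling legitimate when $D_{\mathrm{task}}$ carries independent noise; it must be shared, or the statement read conditionally. The other is justifying mixture convexity of $\Wass$; the mixture-of-couplings argument handles it cleanly.
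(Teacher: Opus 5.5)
Your proposal is correct and follows the paper's proof essentially step for step: Lipschitz transfer from $P_t$ to $P_\alpha$, the oracle condition evaluated at $f_{\mathcal F}^{\star}$, a second Lipschitz transfer to $P_\alpha^{\star}$ (the source of the factor $2$), mixture-linearity with the $[\cdot]_+$ bias bounds, convexity plus the triangle inequality through $P_v$, and the synchronous coupling with Jensen for $\varepsilon_p,\varepsilon_g$. Your justification of $\Wass(P_\alpha^{\star},P_\alpha)\le\varepsilon(\alpha)$ by mixing couplings (joint convexity), and your caveat that $D_{\mathrm{task}}$ must be deterministic (the paper's Lipschitz assumption implicitly treats it that way), are if anything slightly more careful than the paper's own wording.
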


Theorem~\ref{thm:main_bound} is the key inequality used to interpret HIR-ALIGN. It separates the target risk into five interpretable quantities: the comparator bias $B_0^2$, the warp bias $\alpha B_{\mathrm{warp}}^2$, the effective-view term $C_v/m_{\mathrm{eff}}(\alpha)$, the clean coverage term $L\Delta_{\alpha}$, and the actual pair-perturbation term $2L\varepsilon(\alpha)$. The supplementary material contains the complete proof, including the Wasserstein mixture bound, the actual-vs-clean perturbation bound, and the empirical-risk argument behind Eq.~\eqref{eq:main_effective_oracle}.

\textbf{When the mixed distribution improves over sourcedomain training. }
Let $P_s$ denote the source-domain training distribution and define
\begin{equation}
\Delta_s:=\Wass(P_t,P_s).
\label{eq:main_source_gap}
\end{equation}
Assuming that the source-trained predictor $f_s$ satisfies the analogous source-domain effective-oracle condition $R_{P_s}(f_s)\le \inf_{f\in\mathcal F}R_{P_s}(f)+C_s/m_s$, the same argument as in Theorem~\ref{thm:main_bound} yields
\begin{equation}
R_t(f_s)
\le
R_t(F^{\star})
+B_{\mathrm{src}}^2
+\frac{C_s}{m_s}
+L\Delta_s,
\label{eq:main_source_bound}
\end{equation}
where $B_{\mathrm{src}}^2:=\bigl[R_{P_s}(f_{\mathcal F}^{\star})-R_t(F^{\star})\bigr]_+$ is the comparator bias associated with the source-trained hypothesis, $m_s$ is the effective source training size, and $L\Delta_s$ is the direct source-to-target distribution-shift penalty. 


The bound suggests that HIR-ALIGN can improve over source-domain training when the target-aligned gap $\Delta_{\alpha}$ is smaller than the source-to-target gap $\Delta_s$ by enough to offset the generated-branch bias and perturbation terms. Subtracting Eq.~\eqref{eq:main-risk-bound} from Eq.~\eqref{eq:main_source_bound} shows that the HIR-ALIGN upper bound is tighter whenever

\begin{equation}
\begin{aligned}
&L(\Delta_s - \Delta_{\alpha})
+\left(\frac{C_s}{m_s}
-\frac{C_v}{m_{\mathrm{eff}}(\alpha)}\right)
+\bigl(B_{\mathrm{src}}^2-B_0^2\bigr) \\
&\qquad >
\alpha B_{\mathrm{warp}}^2 + 2L\varepsilon(\alpha).
\end{aligned}
\label{eq:improvement_condition_full}
\end{equation}

Under the additional bias-ordering condition $B_{\mathrm{src}}^2\ge B_0^2$, this reduces to the simpler sufficient condition
\begin{equation}
L(\Delta_s - \Delta_{\alpha}) + \left(\frac{C_s}{m_s} - \frac{C_v}{m_{\mathrm{eff}}(\alpha)}\right) > \alpha B_{\mathrm{warp}}^2 + 2L\varepsilon(\alpha).
\label{eq:improvement_condition}
\end{equation}
Our algorithmic design is intended to make this sufficient condition easier to satisfy through three mechanisms:
\begin{itemize}
    \item \textbf{Bridging the Gap:} The \textit{semantic-guided branch} uses diffusion-based synthesis to create diverse target-compatible layouts. This explicitly expands the training support in the target domain, causing $L(\Delta_s - \Delta_{\alpha})$ to become a large positive gain that dominates the bound.
    \item \textbf{Increasing the Effective Number of Views:}
    The generated samples provide additional target-aligned training views around the observed proxy scenes, which enlarges the effective training support and increase $m_{\mathrm{eff}}(\alpha)$.    
    \item \textbf{Bounding the Warp Bias:} While generation introduces $B_{\mathrm{warp}}^2$, our \textit{sparse spectral warp} constrains the transfer operator $T_{i,j}$ to be a sparse, row-stochastic combination of observed spectra. This yields convex-combination spectral transfer, and together with the explicit overlap-control assumption $T^{\top}T\preceq\kappa I$ used in Eq.~\eqref{eq:main_pair_perturb_bound}, it keeps the inherited perturbation term controlled.
\end{itemize}

\textbf{When the mixed distribution improves over using only proxy or generated samples.}
For proxy-only, generated-only, and mixed finetuning, the nonconstant parts of Eq.~\eqref{eq:main-risk-bound} are
\begin{align}
U_{\mathrm{proxy}}
&:=
B_0^2+\frac{C_v}{m_{\mathrm{eff}}(0)}
+L\Delta_p+2L\varepsilon_p,
\label{eq:main_u_proxy}\\
U_{\mathrm{gen}}
&:=
B_0^2+B_{\mathrm{warp}}^2
+\frac{C_v}{m_{\mathrm{eff}}(1)}
+L(\Delta_w+e_w)+2L\varepsilon_g,
\label{eq:main_u_gen}\\
U_{\mathrm{mix}}(\alpha)
&:=
B_0^2+\alpha B_{\mathrm{warp}}^2
+\frac{C_v}{m_{\mathrm{eff}}(\alpha)}
+L\Delta_{\alpha}+2L\varepsilon(\alpha).
\label{eq:main_u_mix}
\end{align}

The proxy-only branch has stable supervision because it is directly derived from observed target scenes. Its perturbation term is controlled by $\varepsilon_p$, but its coverage remains limited by $\Delta_p$ because it cannot create target-domain views beyond the observed samples. The generated-only branch improves coverage by replacing $\Delta_p$ with $\Delta_w+e_w$ and provides more diverse target-compatible views, but it fully depends on generated samples and therefore fully carries $B_{\mathrm{warp}}^2$ and $\varepsilon_g$. The mixed distribution keeps the stable proxy anchor while adding generated samples only as an $\alpha$-weighted branch. As a result, the coverage term becomes the interpolation $\Delta_{\alpha}$, the perturbation becomes the interpolation $\varepsilon(\alpha)$, and the warp bias is reduced from $B_{\mathrm{warp}}^2$ to $\alpha B_{\mathrm{warp}}^2$. This is precisely why HIR-ALIGN uses a proxy-plus-generated mixture, as proxy samples reduce the risk of drifting away from the observed target content, while semantic-guided generated samples enlarge the target support and increase $m_{\mathrm{eff}}(\alpha)$.

\paragraph{Why semantic-guided augmentation is preferable to random shuffling.}
Let \(\Delta_{\mathrm{shuf}}\) denote the clean coverage gap induced by a
randomly shuffled counterpart. Although a shuffle/permutation may preserve the
proxy-error norm, it breaks material layouts, object boundaries, and local
spatial coherence. Thus the shuffled samples are typically off the target clean
manifold and lead to a much larger coverage gap, i.e.,
\[
\Delta_{\mathrm{shuf}} \gg \Delta_w+e_w .
\]
Therefore, the semantic-guided warp gives a smaller coverage term.
Moreover, random shuffling should contribute little to \(m_{\mathrm{eff}}\),
because \(m_{\mathrm{eff}}\) counts useful target-compatible views rather than
arbitrary permutations. In contrast, semantic-guided samples expand the target
support with plausible layouts while keeping spectra anchored to the proxy HSIs.


\textbf{Remark.}
The bound shows that the proposed augmentation introduces controllable perturbations from proxy and warp errors, while improving target-domain coverage through additional target-aligned views.
\textbf{Overall, HIR-ALIGN improves restoration performance because the in-distribution gain brought by better target coverage outweighs the controlled errors introduced by semantic-guided augmentation.}

\section{Experiments}

\subsection{Datasets}

The main simulated experiments use CAVE \cite{CAVE} and KAIST \cite{KAIST}. CAVE contains 32 HSIs of spatial size $512\times512$ with 31 spectral bands spanning visible wavelengths from 400\,nm to 700\,nm. KAIST contains 30 HSIs with 31 bands spanning 420\,nm to 720\,nm; following common practice, we resize each image to $512\times512$ for training and evaluation. These two datasets are well suited to our study because they represent natural-scene hyperspectral domains with different scene statistics and acquisition characteristics, thus forming a meaningful source-to-target adaptation benchmark. We further report real-world denoising results on HSIDwrD \cite{zhang2021hyperspectral} in Sec.~\ref{sec:real_world}. For the main simulated experiments, we adopt a transductive
target-adaptation setting. All degraded target observations from CAVE or KAIST are used to construct proxy and synthesized samples for finetuning, while the corresponding clean HSIs are strictly withheld and used only for metric computation. Thus, the target datasets are not divided into conventional supervised training and testing splits in the main experiments. This protocol matches the practical scenario where degraded target observations are available but clean target references are unavailable. We further evaluate generalization to unseen target images using an observed/unobserved split in ablation study.

\subsection{Compared Methods}
We compare HIR-ALIGN with source-only supervised restorers and unsupervised or optimization-based restoration methods. For all supervised backbones, each model is first trained on the source dataset ICVL \cite{ICVL}. 
We denote source-trained backbone ''Src'', and denote the model fine-tuned with the proposed HIR-ALIGN data as ''Ours''. To evaluate the effectiveness of the generated target-aligned data and avoid performance differences caused by unequal optimization steps, the ''Src'' baseline is also fine-tuned with the same schedule and the same number of epochs as ''Ours''. Specifically, ''Src'' is fine-tuned only with proxy data which reflects the restoration capability and bias inherited from the source-trained model, while ''Ours'' is fine-tuned with the full HIR-ALIGN training set, which contains both proxy samples and synthesized target-aligned samples, to ensure fairness.
The supervised methods include MP-HSIR \cite{MP-HSIR}, PromptIR \cite{promptir}, T3SC \cite{T3SC}, SERT \cite{SERT}, SSUMamba \cite{SSUMamba}, ESSAformer \cite{ESSAformer}, MCNet \cite{MCNet}, SFCSR \cite{SFCSR}, Bi3DQRNN \cite{Bi3D}, GDRRN \cite{GDRRN}, SwinIR \cite{liang2021swinir}, Restormer \cite{zamir2022restormer}, and PromptHSI \cite{cheng2026prompthsi}. The unsupervised, self-supervised, or optimization-based methods include LRTV \cite{LRTV}, LRTDTV \cite{LRTDTV}, Enhanced-3DTV \cite{Enhanced-3DTV}, RCTV \cite{RCTV}, HIR-Diff \cite{pang2024hir}, DIP \cite{UlyanovVL17}, DHP \cite{9022040}, DIP-DKP \cite{10658451}, PnP \cite{wang2023tuning}, UAU \cite{Tang_2023_CVPR}, HyperEI \cite{Li2024EquivariantIF}, and R-DLRHyIn \cite{10032531}. HSIGene \cite{pang2024hsigene} is additionally used in the generation-source ablation study.

\subsection{Implementation Details}

\textbf{Degradation simulation.} For denoising, we simulate both non-i.i.d. Gaussian noise (noise levels 10-70) and complex mixture noise which consisted of: 1) impulse noise on 1/3 of the bands, with intensities ranging from 10\% to 70\%; 2) stripes on 5\%-15\% of columns on 1/3 of the bands; 3) deadlines on 5\%-15\% of columns on 1/3 of the bands. For super-resolution, we use bicubic $4\times$ downsampling. For band completion, inpainting, and deblurring, the band-missing ratio, masking ratio, and blur radius are set to 0.3, 0.9, and 15, respectively. During aligned finetuning, the same degradation families are replayed on proxy and synthesized HSIs so that each restoration backbone is trained in a conventional supervised manner.

\textbf{Proxy generation.} In the first stage, degraded target HSIs are processed by ICVL-pretrained restoration models to obtain proxy HSIs. These proxies are the only target-domain supervisory labels used by our framework. Proxy RGBs are constructed from three visible proxy bands centered around 660\,nm, 550\,nm, and 470\,nm.

\textbf{Distribution-adaptive RGB synthesis.} 
We fine-tune the CLIP-vision encoder of unCLIP on COCO2017 \cite{COCO2017} using synthetic Gaussian blur levels. During RGB generation, we employ fixed positive and negative prompts to encourage detailed, clean outputs and suppress obviously blurry or noisy generations. Reverse diffusion is initialized from Gaussian-perturbed image embeddings rather than from pure noise. Each proxy sample produces three target-aligned RGB guides.

\textbf{Warp-based HSI generation.} Descriptor extraction uses $5\times5$ patches, and the gradient and chroma channels are reweighted by 0.5 and 0.25, respectively. For each guide pixel, we first retrieve 16 full-resolution appearance neighbors, expand each seed inside a $3\times3$ neighborhood, and retain 16 soft candidates after refinement. A shared $7\times7$ local interpolation kernel is then optimized jointly with the aggregation weights for 200 iterations using a learning rate of 0.05. The softmax temperature is fixed to one in all reported experiments. Importantly, the retrieval is performed directly at the original image resolution and does not use an additional explicit spatial-distance penalty in the reported setting. Therefore the effective forward path is exactly the appearance-driven sparse aggregation plus shared local interpolation described in Sec.~III-B.

\textbf{Training details.}
For source pretraining, we randomly select 100 HSIs from the ICVL dataset as source training images. SERT, T3SC, and SSUMamba are pretrained for 50 epochs using the Adam optimizer with an initial learning rate of \num{1e-4}. ESSAformer, MCNet, and SFCSR are pretrained for 80 epochs using Adam with an initial learning rate of \num{1e-4}. MP-HSIR and PromptIR are pretrained for 50 epochs using the AdamW optimizer with an initial learning rate of \num{2e-4}.

For target-adaptive finetuning, each ''Src'' and corresponding ''Ours'' model is initialized from the same ICVL-pretrained checkpoint and trained with the same optimization settings. Specifically, SERT, T3SC, and SSUMamba are finetuned for 10 epochs using Adam with an initial learning rate of \num{1e-4}, which is reduced by half after the 5th epoch. ESSAformer, MCNet, and SFCSR are finetuned for 20 epochs using Adam with an initial learning rate of \num{1e-4}, which is also reduced by half after the 10th epoch. MP-HSIR and PromptIR are finetuned for 20 epochs using AdamW with an initial learning rate of \num{2e-4}. For each backbone, the same optimizer, learning rate, number of epochs, and degradation replay strategy are used for the paired ''Src'' and ''Ours'' models; the only difference lies in the finetuning data, where ''Src'' uses proxy samples only and ''Ours'' uses the full HIR-ALIGN training set containing both proxy and synthesized target-aligned samples.

\textbf{Metrics.} We report PSNR, SSIM, and SAM throughout the paper. Since our goal is target-aligned supervised adaptation rather than one-off visual enhancement, the metrics are always computed on the restoration outputs of the final finetuned model rather than only on the intermediate synthesized data.

\subsection{Main Results}
\begin{table}[ht]
\renewcommand{\arraystretch}{1.1}
\setlength{\tabcolsep}{3pt}
  \centering
  \caption{Performance on the Gaussian denoising task on CAVE and KAIST. The best and second-best results are shown in \textbf{bold} and \underline{underlined}, respectively.}
  \resizebox{.98\columnwidth}{!}{
  \begin{tabular}{c| c c c | c c c}
    \Xhline{1.2pt}
    Dataset & \multicolumn{3}{c|}{CAVE} & \multicolumn{3}{c}{KAIST} \\
    \hline
    Method & PSNR & SSIM & SAM & PSNR & SSIM & SAM \\
    \hline
    HIR-Diff \cite{pang2024hir} & 29.17 & 0.744 & 25.11 & 26.56 & 0.632 & 26.34 \\
    \hline
    LRTV \cite{LRTV} & 31.30 & 0.854 & 9.67 & 31.19 & 0.861 & 11.79 \\
    LRTDTV \cite{LRTDTV} & 30.92 & 0.745 & 15.18 & 30.82 & 0.725 & 17.19 \\
    Enhanced-3DTV \cite{Enhanced-3DTV} & 31.48 & 0.903 & 10.87 & 31.16 & 0.880 & 14.48 \\
    RCTV \cite{RCTV} & 34.71 & 0.910 & 6.99 & 34.30 & 0.912 & 8.52 \\
    \hline
    T3SC (Src) \cite{T3SC} & 37.39 & 0.924 & 12.37 & 38.41 & 0.942 & 9.23 \\
    T3SC (Ours) & 38.08 & 0.933 & 11.46 & 39.10 & 0.948 & 8.09 \\
    \hline
    SERT (Src) \cite{SERT} & 39.94 & 0.964 & 7.67 & 40.12 & 0.965 & 6.62 \\
    SERT (Ours) & 40.57 & \underline{0.967} & \textbf{6.67} & 40.74 & 0.967 & 6.55 \\
    \hline
    PromptIR (Src) \cite{promptir} & 37.17 & 0.948 & 10.05 & 37.25 & 0.946 & 8.23 \\
    PromptIR (Ours) & 37.58 & 0.952 & 9.30 & 37.67 & 0.946 & 7.65 \\
    \hline
    SSUMamba (Src) \cite{SSUMamba} & \underline{41.65} & \textbf{0.995} & 7.64 & \underline{41.96} & \underline{0.995} & \underline{6.07} \\
    SSUMamba (Ours) & \textbf{42.36} & \textbf{0.995} & \underline{6.81} & \textbf{42.04} & \textbf{0.996} & \textbf{5.86} \\
    \hline
    MP-HSIR (Src) \cite{MP-HSIR} & 38.35 & 0.960 & 7.74 & 38.57 & 0.956 & 7.03 \\
    MP-HSIR (Ours) & 38.69 & 0.963 & 7.55 & 38.88 & 0.954 & 7.26 \\
    \Xhline{1.2pt}
  \end{tabular}
  }
  \label{tab:denoise1}
\end{table}

\begin{table}[t]
\renewcommand{\arraystretch}{1.1}
\setlength{\tabcolsep}{3pt}
  \centering
  \caption{Performance on the complex denoising task on CAVE and KAIST. The best and second-best results are shown in \textbf{bold} and \underline{underlined}, respectively.}
  \resizebox{.98\columnwidth}{!}{
  \begin{tabular}{c| c c c | c c c}
    \Xhline{1.2pt}
    Dataset & \multicolumn{3}{c|}{CAVE} & \multicolumn{3}{c}{KAIST} \\
    \hline
    Method & PSNR & SSIM & SAM & PSNR & SSIM & SAM \\
    \hline
    HIR-Diff \cite{pang2024hir} & 17.75 & 0.589 & 43.88 & 17.40 & 0.532 & 50.67 \\
    \hline
    LRTV \cite{LRTV} & 25.05 & 0.683 & 40.22 & 24.89 & 0.682 & 35.09 \\
    LRTDTV \cite{LRTDTV} & 33.32 & 0.815 & 32.03 & 33.46 & 0.811 & 27.92 \\
    Enhanced-3DTV \cite{Enhanced-3DTV} & 33.77 & 0.959 & 12.09 & 35.34 & \underline{0.965} & 10.50 \\
    RCTV \cite{RCTV} & 39.40 & 0.979 & \textbf{4.76} & 36.99 & 0.956 & \textbf{6.93} \\
    \hline
    T3SC (Src) \cite{T3SC} & 33.90 & 0.797 & 22.03 & 34.70 & 0.800 & 19.07 \\
    T3SC (Ours) & 35.13 & 0.841 & 19.61 & 35.51 & 0.823 & 19.26 \\
    \hline
    SERT (Src) \cite{SERT} & 39.46 & 0.954 & 14.07 & 39.98 & 0.935 & 8.92 \\
    SERT (Ours) & 39.96 & 0.959 & 9.76 & 40.59 & 0.958 & 8.90 \\

    \hline
    PromptIR (Src) \cite{promptir} & 35.72 & 0.940 & 15.41 & 35.93 & 0.929 & 15.70 \\
    PromptIR (Ours) & 36.13 & 0.941 & 14.79 & 36.44 & 0.934 & 13.13 \\
    \hline
    SSUMamba (Src) \cite{SSUMamba} & \underline{42.27} & \underline{0.995} & 9.48 & \underline{41.69} & \textbf{0.994} & 7.79 \\
    SSUMamba (Ours) & \textbf{42.43} & \textbf{0.996} & \underline{9.31} & \textbf{41.80} & \textbf{0.994} & \underline{7.54} \\
    \hline
    MP-HSIR (Src) \cite{MP-HSIR} & 38.65 & 0.953 & 10.36 & 38.69 & 0.945 & 8.34 \\
    MP-HSIR (Ours) & 39.02 & 0.957 & 10.60 & 39.52 & 0.954 & 7.69 \\

    \Xhline{1.2pt}
  \end{tabular}
  }
  \label{tab:denoise2}
\end{table}

\begin{table}[t]
\renewcommand{\arraystretch}{1.1}
\setlength{\tabcolsep}{3pt}
  \centering
  \caption{Performance on the super-resolution task on CAVE and KAIST. The best and second-best results are shown in \textbf{bold} and \underline{underlined}, respectively.}
  \resizebox{.98\columnwidth}{!}{
  \begin{tabular}{c| c c c | c c c}
    \Xhline{1.2pt}
    Dataset & \multicolumn{3}{c|}{CAVE} & \multicolumn{3}{c}{KAIST} \\
    \hline
    Method & PSNR & SSIM & SAM & PSNR & SSIM & SAM \\
    \hline
    Bicubic & 34.31 & 0.932 & 4.12 & 32.20 & 0.926 & 3.55 \\    
    HIR-Diff \cite{pang2024hir} & 31.09 & 0.852 & 16.71 & 30.47 & 0.849 & 11.46 \\
    DIP \cite{UlyanovVL17} & 34.73 & 0.920 & 5.81 & 33.20 & 0.920 & 4.54 \\
    DHP \cite{9022040} & 35.26 & 0.932 & 5.69 & 33.31 & 0.923 & 4.60 \\
    DIP-DKP \cite{10658451} & 24.19 & 0.800 & 7.19 & 24.53 & 0.837 & 5.60 \\
    \hline
    Bi3DQRNN \cite{Bi3D} & 35.72 & 0.946 & 6.36 & 34.60 & 0.940 & 5.63 \\
    GDRRN \cite{GDRRN} & 34.39 & 0.911 & 7.60 & 32.35 & 0.889 & 7.52 \\
    \hline
    MCNet (Src) \cite{MCNet} & 37.27 & \underline{0.956} & 3.80 & 34.92 & 0.951 & 3.32 \\
    MCNet (Ours) & \underline{37.65} & \textbf{0.958} & \underline{3.76} & \textbf{35.63} & \textbf{0.955} & \textbf{3.16} \\
    \hline
    SFCSR (Src) \cite{SFCSR} & 37.38 & 0.955 & 3.77 & 34.83 & 0.950 & 3.31 \\
    SFCSR (Ours) & \textbf{37.66} & \textbf{0.958} & \textbf{3.67} & \underline{35.08} & \underline{0.952} & \underline{3.23} \\
    \hline
    ESSAformer (Src) \cite{ESSAformer} & 31.17 & 0.871 & 15.79 & 29.31 & 0.863 & 12.57 \\
    ESSAformer (Ours) & 35.12 & 0.933 & 7.54 & 32.98 & 0.913 & 7.08 \\
    \hline
    PromptIR (Src) \cite{promptir} & 31.63 & 0.914 & 4.95 & 29.68 & 0.906 & 4.30 \\
    PromptIR (Ours) & 31.79 & 0.916 & 5.30 & 29.76 & 0.907 & 4.56 \\
    \hline
    MP-HSIR (Src) \cite{MP-HSIR} & 32.89 & 0.928 & 5.02 & 30.76 & 0.920 & 4.28 \\
    MP-HSIR (Ours) & 33.37 & 0.931 & 4.70 & 31.22 & 0.924 & 4.14 \\
    \Xhline{1.2pt}
  \end{tabular}
  }
  \label{tab:SR}
\end{table}

Tables~\ref{tab:denoise1}, \ref{tab:denoise2}, and \ref{tab:SR} report the results on Gaussian denoising, complex denoising, and super-resolution. On Gaussian denoising, HIR-ALIGN improves the PSNR of all paired supervised backbones on both CAVE and KAIST. SSUMamba (Ours) achieves the best PSNR on both datasets, while SERT (Ours) and SSUMamba (Ours) obtain the best SAM values on CAVE and KAIST, respectively. The gains over the corresponding Src rows show that adding synthesized target-aligned data is more effective than proxy-only target adaptation under the same optimization budget. In complex denoising, all paired supervised backbones obtain higher PSNR after HIR-ALIGN adaptation, and SSUMamba (Ours) gives the highest PSNR on both datasets. RCTV remains competitive in SAM, indicating that optimization-based priors can still be strong for some spectral-angle measurements. In super-resolution, the adapted MCNet and SFCSR variants achieve the best or second-best PSNR and SSIM on both datasets, while unsupervised or self-supervised methods such as DIP, DHP, and DIP-DKP provide additional references. These results show that the proposed proxy-anchored synthesis benefits both degradation removal and spatial detail recovery.

\begin{table}[t]
\renewcommand{\arraystretch}{1.1}
\setlength{\tabcolsep}{3pt}
  \centering
  \caption{Performance on the deblurring task on CAVE and KAIST. The best and second-best results are shown in \textbf{bold} and \underline{underlined}, respectively.}
  \resizebox{.98\columnwidth}{!}{
  \begin{tabular}{c| c c c | c c c}
    \Xhline{1.2pt}
    Dataset & \multicolumn{3}{c|}{CAVE} & \multicolumn{3}{c}{KAIST} \\
    \hline
    Method & PSNR & SSIM & SAM & PSNR & SSIM & SAM \\
    \hline
    HIR-Diff \cite{pang2024hir} & 28.37 & 0.782 & 19.68 & 27.70 & 0.778 & 14.10 \\
    PnP \cite{wang2023tuning} & 34.09 & 0.944 & 4.81 & 32.85 & 0.938 & 4.68 \\
    UAU \cite{Tang_2023_CVPR} & 35.32 & 0.906 & 12.57 & 33.13 & 0.894 & 13.54 \\
    \hline
    SwinIR \cite{liang2021swinir} & 42.53 & 0.981 & \textbf{3.22} & 39.25 & 0.974 & \textbf{2.93} \\
    Restormer \cite{zamir2022restormer} & 42.00 & \textbf{0.986} & 3.64 & 40.19 & \underline{0.979} & 3.51 \\
    PromptHSI \cite{cheng2026prompthsi} & 23.33 & 0.646 & 17.00 & 28.02 & 0.777 & 9.49 \\
    \hline
    PromptIR (Src) \cite{promptir} & \underline{43.78} & \underline{0.985} & 3.54 & \underline{40.90} & 0.978 & 2.98 \\
    PromptIR (Ours) & \textbf{44.32} & \textbf{0.986} & 3.47 & \textbf{41.67} & \textbf{0.980} & \underline{2.96} \\
    \hline
    MP-HSIR (Src) \cite{MP-HSIR} & 42.79 & 0.982 & 3.47 & 40.48 & 0.976 & 3.03 \\
    MP-HSIR (Ours) & 42.99 & 0.982 & \underline{3.36} & 40.73 & 0.977 & 3.03 \\

    \Xhline{1.2pt}
  \end{tabular}
  }
  \label{tab:deblur}
\end{table}

\begin{table}[t]
\renewcommand{\arraystretch}{1.1}
\setlength{\tabcolsep}{3pt}
  \centering
  \caption{Performance on the band-completion task on CAVE and KAIST. The best and second-best results are shown in \textbf{bold} and \underline{underlined}, respectively.}
  \resizebox{.98\columnwidth}{!}{
  \begin{tabular}{c| c c c | c c c}
    \Xhline{1.2pt}
    Dataset & \multicolumn{3}{c|}{CAVE} & \multicolumn{3}{c}{KAIST} \\
    \hline
    Method & PSNR & SSIM & SAM & PSNR & SSIM & SAM \\
    \hline
    HIR-Diff \cite{pang2024hir} & 27.47 & 0.674 & 38.15 & 27.42 & 0.674 & 38.16 \\
    DIP \cite{UlyanovVL17} & 34.66 & 0.756 & 32.24 & 32.52 & 0.743 & 33.99 \\
    DHP \cite{9022040} & 31.02 & 0.730 & 13.77 & 32.33 & 0.766 & 11.54 \\
    \hline
    SwinIR \cite{liang2021swinir} & 45.92 & 0.990 & 5.36 & 47.56 & \textbf{0.990} & 4.17 \\
    Restormer \cite{zamir2022restormer} & 50.16 & \underline{0.991} & \underline{4.56} & 50.57 & 0.985 & \textbf{3.66} \\
    PromptHSI \cite{cheng2026prompthsi} & 36.76 & 0.965 & 10.25 & 37.02 & 0.922 & 12.22 \\
    \hline
    PromptIR (Src) \cite{promptir} & 48.49 & \underline{0.991} & 5.37 & 48.52 & 0.987 & 5.69 \\
    PromptIR (Ours) & \underline{50.17} & \textbf{0.993} & 4.77 & 50.67 & \underline{0.989} & 4.73 \\
    \hline
    MP-HSIR (Src) \cite{MP-HSIR} & 49.61 & \underline{0.991} & 4.95 & \underline{50.78} & \textbf{0.990} & \underline{3.88} \\
    MP-HSIR (Ours) & \textbf{50.73} & \textbf{0.993} & \textbf{4.25} & \textbf{51.84} & \textbf{0.990} & 4.39 \\

    \Xhline{1.2pt}
  \end{tabular}
  }
  \label{tab:bandmiss}
\end{table}

\begin{table}[t]
\renewcommand{\arraystretch}{1.1}
\setlength{\tabcolsep}{3pt}
  \centering
  \caption{Performance on the inpainting task on CAVE and KAIST. The best and second-best results are shown in \textbf{bold} and \underline{underlined}, respectively.}
  \resizebox{.98\columnwidth}{!}{
  \begin{tabular}{c| c c c | c c c}
    \Xhline{1.2pt}
    Dataset & \multicolumn{3}{c|}{CAVE} & \multicolumn{3}{c}{KAIST} \\
    \hline
    Method & PSNR & SSIM & SAM & PSNR & SSIM & SAM \\
    \hline
    HIR-Diff \cite{pang2024hir} & 23.77 & 0.715 & 37.27 & 24.31 & 0.748 & 26.36 \\
    DHP \cite{9022040} & 34.87 & 0.923 & 6.40 & 32.97 & 0.921 & \underline{4.80} \\
    HyperEI \cite{Li2024EquivariantIF} & 15.47 & 0.297 & 30.06 & 17.04 & 0.378 & 30.54 \\
    R-DLRHyIn \cite{10032531} & 38.83 & 0.956 & \underline{5.79} & 39.07 & 0.955 & 5.27 \\
    \hline
    SwinIR \cite{liang2021swinir} & \underline{40.10} & \underline{0.969} & 7.78 & 39.35 & 0.961 & 6.58 \\
    Restormer \cite{zamir2022restormer} & 35.76 & 0.943 & 9.13 & 36.34 & 0.944 & 7.08 \\
    PromptHSI \cite{cheng2026prompthsi} & 31.85 & 0.899 & 18.97 & 33.40 & 0.899 & 14.66 \\
    \hline
    PromptIR (Src) \cite{promptir} & 35.59 & 0.939 & 9.16 & 36.36 & 0.935 & 7.72 \\
    PromptIR (Ours) & 36.62 & 0.945 & 8.28 & 37.42 & 0.936 & 7.36 \\
    \hline
    MP-HSIR (Src) \cite{MP-HSIR} & 39.97 & 0.967 & 6.14 & \underline{39.83} & \underline{0.964} & 5.38 \\
    MP-HSIR (Ours) & \textbf{40.85} & \textbf{0.973} & \textbf{5.77} & \textbf{40.64} & \textbf{0.970} & \textbf{4.78} \\

    \Xhline{1.2pt}
  \end{tabular}
  }
  \label{tab:inpainting}
\end{table}

Tables~\ref{tab:deblur}, \ref{tab:bandmiss}, and \ref{tab:inpainting} report the results on deblurring, band completion, and inpainting. For deblurring, PromptIR (Ours) achieves the best PSNR and SSIM on both datasets, and MP-HSIR also benefits from HIR-ALIGN in PSNR. For band completion, MP-HSIR (Ours) obtains the best PSNR on both CAVE and KAIST, and PromptIR also shows clear improvements over its corresponding Src baseline. For inpainting, MP-HSIR (Ours) achieves the best PSNR, SSIM, and SAM on both datasets. Across these three tasks, HIR-ALIGN consistently improves the paired MP-HSIR and PromptIR rows in PSNR relative to proxy-only target-adaptive finetuning. Overall, the paired comparisons show that the proposed data-construction strategy improves general-purpose HSI restoration backbones across multiple degradation families.

Another noteworthy pattern is the consistency across datasets. CAVE and KAIST differ in content, acquisition conditions, and the severity of the source-to-target mismatch, yet the direction of the PSNR gain is unchanged for the adapted backbones. This robustness suggests that HIR-ALIGN is not exploiting a benchmark-specific artifact. Instead, it captures a broadly useful principle: when a restorer trained on a source corpus is exposed to target-domain structure through proxy-anchored synthesis, the resulting supervised fine-tuning becomes substantially more reliable.

\begin{figure*}[t]
\centering
\includegraphics[width=0.8\linewidth]{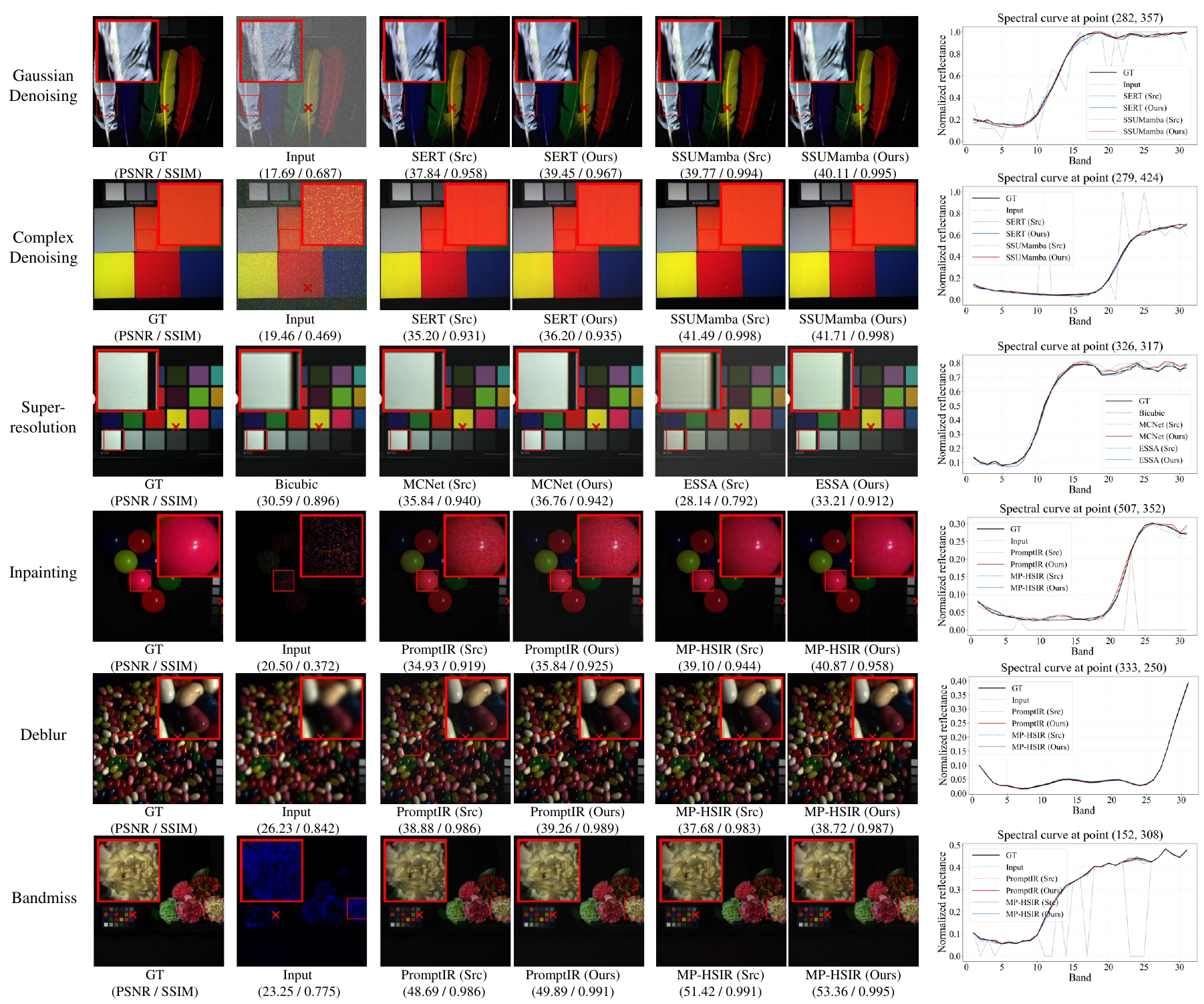}
\caption{Visual comparison between "Src" labeled baselines and the corresponding HIR-ALIGN-adapted models across six restoration tasks.}
\label{fig:comp}
\end{figure*}

The visual comparison is provided in Fig.~\ref{fig:comp}. Baselines without HIR-ALIGN-generated augmentation often suffer from residual noise, oversmoothing, incorrect high-frequency textures, or incomplete structure recovery. By contrast, HIR-ALIGN tends to produce cleaner fine details and more coherent object boundaries. The qualitative improvement is particularly noticeable when the proxy generation stage has already captured the dominant target semantics but the final baseline restorer still lacks in-domain training examples; in such cases the synthesized data effectively fill in the missing target-aligned variations that the backbone needs for finetuning.

\subsection{Experiment on Real-World Datasets}
\label{sec:real_world}

In this section, we conduct denoising experiments on the real-world dataset HSIDwrD \cite{zhang2021hyperspectral}. HSIDwrD consists of 59 hyperspectral images, each with dimensions $34 \times 696 \times 520$. In the experiment, we remove three anomalous channels (0, 11, and 22) that are visibly corrupted in the ground-truth (GT) data and crop the images to $512 \times 512$.

We evaluate HIR-ALIGN on three denoising backbones: T3SC \cite{T3SC}, SERT \cite{SERT}, and SSUMamba \cite{SSUMamba}. Complex noise is used as the degradation model for constructing the adaptation pairs. The “Src” rows denote models fine-tuned with proxy data only, while the “Ours” rows denote models finetuned with the full HIR-ALIGN training set containing both proxy data and synthesized target-aligned data. The performance of these models and several comparative methods is shown in Table~\ref{tab:real}.

Due to the distribution gap between ICVL and HSIDwrD, the proxy-only ``Src'' baselines still have limited adaptation ability on this real-world dataset. By introducing synthesized target-aligned samples, HIR-ALIGN provides additional target-domain variations for fine-tuning. As shown in Table~\ref{tab:real}, HIR-ALIGN improves T3SC, SERT, and SSUMamba over their corresponding ``Src'' baselines, indicating that the proposed data construction strategy is also effective for real-world denoising.


\begin{table}[t]
\renewcommand{\arraystretch}{1.1}
\setlength{\tabcolsep}{3pt}
  \centering
  \caption{Denoising performance on HSIDwrD. We compare Src baselines with the corresponding HIR-ALIGN-adapted models. The best and second-best results are shown in \textbf{bold} and \underline{underlined}, respectively.}
  \resizebox{0.6\columnwidth}{!}{
  \begin{tabular}{c| c c c}
    \Xhline{1.2pt}
    Method & PSNR & SSIM & SAM  \\
    \hline
    LRTV \cite{LRTV}                    & 23.81 & 0.780 & 7.72 \\
    LRTDTV \cite{LRTDTV}                & 24.19 & 0.811 & 7.94 \\
    Enhanced-3DTV \cite{Enhanced-3DTV}  & 24.24 & 0.819 & 7.75 \\
    RCTV \cite{RCTV}                    & 24.09 & 0.807 & 8.11 \\
    \hline
    T3SC (Src) \cite{T3SC}             & 24.69 & \underline{0.862} & \underline{4.10} \\
    T3SC (Ours)                         & \underline{24.71} & \textbf{0.863} & \textbf{4.03} \\
    \hline
    SERT (Src) \cite{SERT}             & 24.67 & 0.827 & 4.53 \\
    SERT (Ours)                         & \textbf{24.72} & 0.831 & 4.46 \\
    \hline
    SSUMamba (Src) \cite{SSUMamba}     & 19.89 & 0.767 & 12.37 \\
    SSUMamba (Ours)                     & 20.10 & 0.776 & 9.67 \\
    \Xhline{1.2pt}
  \end{tabular}
  }
  \label{tab:real}
\end{table}

\subsection{Ablation Study}
We conduct ablation experiments to examine the main design choices of HIR-ALIGN, including the composition of the adapted training set, the quality of the proxy spectra, the amount of synthesized data, the RGB synthesis module, the choice of generation source, generalization to unobserved target images, and the remaining gap to clean-target finetuning.

\subsubsection{Comparison of Different Training Data}
\begin{table}[t]
\setlength{\tabcolsep}{3pt}
  \centering
  \caption{Ablation study on the effect of different training data combinations.}
  \resizebox{.8\columnwidth}{!}{
  \begin{tabular}{c c | c c c | c c c}
   \Xhline{1.2pt}
   \multicolumn{2}{c|}{Training set} & \multicolumn{3}{c|}{Denoising} & \multicolumn{3}{c}{Super-resolution} \\
   \hline
   Proxy & Generated & PSNR & SSIM & SAM & PSNR & SSIM & SAM\\
   \hline
   $\checkmark$ &            & 37.39&0.924&12.37 & 37.27 &0.956 &3.80 \\
                & $\checkmark$ & 38.04 & 0.922 & 11.75 & 37.34 & 0.952 & 3.83\\
   $\checkmark$ & $\checkmark$ &\textbf{38.08}&\textbf{0.933}&\textbf{11.46} & \textbf{37.65}&\textbf{0.958}&\textbf{3.76}\\
   \Xhline{1.2pt}
  \end{tabular}
  }
  \label{tab:ab1}
\end{table}

We first evaluate the contribution of the proxy branch and the generated branch on CAVE. Specifically, we conduct Gaussian denoising with T3SC and super-resolution with MCNet, and fine-tune the restorers using proxy samples only, generated samples only, or their combination. The results are reported in Table~\ref{tab:ab1}.

The results show that using both proxy and generated samples gives the best overall performance. Proxy samples provide stable target-domain supervision because they are directly derived from degraded target observations, while generated samples introduce additional target-aligned variations. Using only one branch leaves one of these advantages unused, whereas combining the two branches improves both restoration accuracy and spectral fidelity.

\subsubsection{Impact of Proxy Quality}
\begin{table}[h]
\centering
\setlength{\abovecaptionskip}{-0.0cm}
\setlength{\belowcaptionskip}{-0.2cm}
\small
\renewcommand\arraystretch{1.2}
\setlength{\tabcolsep}{2.6pt}
\captionsetup{font=scriptsize}
\caption{Impact of proxy quality. Base denotes the ICVL-trained restorer without target-adaptive fine-tuning. Weak and Strong denote HIR-ALIGN adaptation using weak and strong proxy generators, respectively.}
\label{tab:proxy_quality}
\resizebox{\linewidth}{!}{
\begin{tabular}{c|c|c|cc|cc}
\hline
 &  &  & \multicolumn{2}{c|}{CAVE} & \multicolumn{2}{c}{KAIST} \\
\hline
Degradation & Restorer & \makecell{Proxy Generator\\(Weak/Strong)}
& \makecell{Proxy PSNR\\(Weak/Strong)}
& \makecell{Restore PSNR\\(Base/Weak/Strong)}
& \makecell{Proxy PSNR\\(Weak/Strong)}
& \makecell{Restore PSNR\\(Base/Weak/Strong)} \\
\hline
Complex Noise
& SERT
& T3SC / SSUMamba
& 33.26/\textbf{39.61}
& 37.11/38.87/\textbf{40.29}
& 33.84/\textbf{39.17}
& 37.21/39.69/\textbf{41.45} \\
\hline
Super-resolution
& MCNet
& Bicubic / SFCSR
& 34.31/\textbf{37.11}
& 36.83/36.97/\textbf{37.65}
& 32.20/\textbf{34.57}
& 34.55/34.88/\textbf{35.13} \\
\hline
\end{tabular}
}
\vspace{-8.8pt}
\end{table}

We then study how proxy quality affects the final adaptation performance. The experiment is conducted on CAVE and KAIST under complex denoising and super-resolution. For complex denoising, SERT is used as the final restorer, while T3SC and SSUMamba are used as weak and strong proxy generators, respectively. For super-resolution, MCNet is used as the final restorer, while Bicubic and SFCSR are used as weak and strong proxy generators, respectively. Table~\ref{tab:proxy_quality} reports the proxy PSNR and the final restoration PSNR under the Base setting, which corresponds to the model pretrained only on ICVL without additional target-domain finetuning, as well as the Weak and Strong settings, which use weak and strong proxy generators for HIR-ALIGN adaptation, respectively.

The results show that stronger proxy generators produce higher-quality proxy HSIs and lead to better final restoration performance. This trend is expected because the synthesized HSIs inherit spectra from the proxy HSIs through the proposed spectral transfer module. Therefore, a more accurate proxy reduces the supervision error in both the proxy branch and the generated branch, making the adapted restorer more reliable.

Meanwhile, even the weak-proxy setting still improves over the Base model in all reported cases. This indicates that HIR-ALIGN does not require perfect proxy reconstructions. As long as the proxy preserves useful target-domain structures and approximate spectral information, the generated target-aligned samples can still provide effective supervision for adaptation. The stronger proxy further improves this process by providing cleaner spectral anchors for synthesis.

\subsubsection{Effects of Generated Data Volume}

\begin{table}[t]
\setlength{\tabcolsep}{4pt}
\centering
\caption{Ablation on the ratio $r$ of synthesized images to observed target images. Moderate synthesis improves coverage, whereas overly large ratios lead to saturation and slight degradation.}
\resizebox{0.7\linewidth}{!}{
\begin{tabular}{c|ccc|ccc}
\Xhline{1.2pt}
& \multicolumn{3}{c|}{Denoising} & \multicolumn{3}{c}{Super-resolution} \\
\hline
$r$ & PSNR & SSIM & SAM & PSNR & SSIM & SAM \\
\hline
1:1  & 40.15 & 0.950 & 10.10 & 35.15 & 0.952 & 3.31 \\
2:1  & 40.21 & 0.948 & 9.44  & 35.36 & 0.953 & 3.26 \\
3:1  & \textbf{40.59} & \textbf{0.958} & 8.90 & \textbf{35.63} & \textbf{0.955} & \textbf{3.16} \\
5:1  & 40.39 & 0.956 & 9.45 & 35.44 & 0.954 & 3.18 \\
10:1 & 40.46 & 0.948 & \textbf{8.71} & 35.30 & 0.953 & 3.35 \\
\Xhline{1.2pt}
\end{tabular}}
\label{tab:ab2}
\end{table}

We further investigate the effect of the amount of generated data. The experiment is conducted on KAIST using SERT for complex denoising and MCNet for super-resolution. We vary the ratio $r$ between synthesized images and observed target images from 1:1 to 10:1, while keeping the rest of the training protocol unchanged. The results are shown in Table~\ref{tab:ab2}.

As the synthesis ratio increases from 1:1 to 3:1, performance improves because more target-aligned views are introduced and the effective training distribution covers a larger portion of the target domain. This is exactly the regime in which synthesis is most useful: each additional generated sample contributes new appearance variation while still remaining anchored to the proxy spectral manifold.

When the ratio further increases to 5:1 and 10:1, the benefit saturates and slightly degrades. This saturation is theoretically meaningful. The coverage term cannot decrease indefinitely, because many generated samples are correlated views derived from the same small proxy set. Beyond a certain point, adding more generated data mainly increases redundancy and exposes the learner more strongly to generator mismatch and warp bias. The interior optimum observed in Table~\ref{tab:ab2} therefore matches the corollary derived from our unified risk bound.

\subsubsection{Effects of the Improved unCLIP}
\begin{table}[t]
\setlength{\tabcolsep}{5pt}
  \centering
  \caption{Ablation study on the effects of the improved unCLIP.}
  \resizebox{.85\columnwidth}{!}{
  \begin{tabular}{c|c c c |c c c}
    \Xhline{1.2pt}
    & \multicolumn{3}{c|}{Inpainting} & \multicolumn{3}{c}{Deblur} \\
    \hline
    Training set & PSNR & SSIM & SAM & PSNR & SSIM & SAM\\
    \hline
    Baseline (Src) & 36.36 &0.935 &7.72 & 40.90 &0.978 &2.98 \\
    UnCLIP (Original) & 36.92 & 0.922 & 7.68 & 41.10 & 0.978 & \textbf{2.93} \\
    UnCLIP (Ours) & \textbf{37.42} & \textbf{0.936} & \textbf{7.36}  & \textbf{41.67} & \textbf{0.980} & 2.96 \\
    \Xhline{1.2pt}
  \end{tabular}
}
\label{tab:ab4}
\end{table}
We next evaluate the effect of the improved unCLIP module used for target-aligned RGB generation. The experiment is conducted on KAIST with PromptIR under inpainting and deblurring. We compare three settings: ``Src'' baseline, HIR-ALIGN using the original unCLIP, and HIR-ALIGN using the improved unCLIP. The results are presented in Table~\ref{tab:ab4}. The ablation in Table~\ref{tab:ab4} shows that the proposed modifications to unCLIP are essential. Replacing the original unCLIP with the improved version yields clear gains on both inpainting and deblurring, confirming the value of degradation-robust CLIP conditioning, text-prompt guidance, and noise-perturbed initialization. These gains are consistent with the role of the RGB generator in our pipeline: if the RGB guides are poorly aligned with the target domain, the subsequent warp stage must compensate for mismatched appearance, which is both harder and less stable.

The improved unCLIP therefore does not need to produce a perfect final image by itself; rather, it needs to produce an RGB guide whose semantics and local appearance are sufficiently consistent with the target domain to support reliable spectral warping. The ablation verifies that the three proposed modifications improve exactly this property.

\subsubsection{Comparison with HSIGene-generated Training Data}
\begin{table}[t]
\renewcommand{\arraystretch}{1.1}
\setlength{\tabcolsep}{3pt}
\centering
\caption{Ablation on the generation source. We compare training with HSIGene-generated data and training with the proposed HIR-ALIGN data construction on CAVE and KAIST.}
\resizebox{\linewidth}{!}{
\begin{tabular}{c|c|ccc|ccc}
\Xhline{1.2pt}
 &  & \multicolumn{3}{c|}{CAVE} & \multicolumn{3}{c}{KAIST} \\
\hline
Degradation & Method & PSNR $\uparrow$ & SSIM $\uparrow$ & SAM $\downarrow$ & PSNR $\uparrow$ & SSIM $\uparrow$ & SAM $\downarrow$ \\
\hline
Complex Noise & SERT (HSIGene/Ours) & 37.15/\textbf{39.96} & 0.934/\textbf{0.959} & 13.93/\textbf{9.76} & 37.77/\textbf{40.59} & 0.935/\textbf{0.958} & 13.51/\textbf{8.90} \\
Super-resolution & MCNet (HSIGene/Ours) & 36.92/\textbf{37.65} & 0.952/\textbf{0.958} & 5.17/\textbf{3.76} & 34.47/\textbf{35.63} & 0.947/\textbf{0.955} & 4.14/\textbf{3.16} \\
\Xhline{1.2pt}
\end{tabular}}
\label{tab:hsigene_ablation}
\end{table}

We compare the proposed proxy-anchored generation strategy with a direct HSI generation baseline. The experiment is conducted on CAVE and KAIST using SERT for complex denoising and MCNet for super-resolution. In the HSIGene setting, the restorers are fine-tuned with HSIGene-generated data; in the HIR-ALIGN setting, they are fine-tuned with the target-aligned data constructed by our proxy-anchored synthesis pipeline. Table~\ref{tab:hsigene_ablation} reports the comparison. Directly using HSIGene-generated data improves the training distribution to some extent, but the proposed proxy-anchored synthesis yields higher PSNR and SSIM and lower SAM on both CAVE and KAIST. This result indicates that preserving target-domain spectra through semantic warping is more reliable for restoration-oriented adaptation than relying solely on a domain-specific HSI generator.

\subsubsection{Generalization on Unobserved Data}
\begin{table}[t]
\setlength{\tabcolsep}{3pt}
\centering
\caption{Ablation study on generalization capability under observed and unobserved sets.}
\resizebox{0.95\columnwidth}{!}{
\begin{tabular}{c | c | c c c | c c c}
\Xhline{1.2pt}
\multicolumn{2}{c|}{Tasks} & \multicolumn{3}{c|}{Denoising} & \multicolumn{3}{c}{Super-resolution} \\
\hline
Testsets & Training set & PSNR & SSIM & SAM & PSNR & SSIM & SAM \\
\hline
\multirow{2}{*}{Observed Set} 
& Src & 38.13 & 0.953 & 12.94 & 37.36 & 0.948 & 3.91 \\
& Ours & \textbf{38.45} & \textbf{0.953} & \textbf{11.04} & \textbf{37.50} & \textbf{0.949} & \textbf{3.89} \\
\hline
\multirow{2}{*}{Unobserved Set} 
& Src & 39.53 & 0.957 & 11.30 & 36.99 & 0.958 & 3.97 \\
& Ours & \textbf{40.13} & \textbf{0.961} & \textbf{9.38} & \textbf{37.03} & \textbf{0.959} & \textbf{3.94} \\
\hline
\multirow{2}{*}{Total Set} 
& Src & 39.08 & 0.955 & 11.83 & 37.11 & 0.955 & 3.95 \\
& Ours & \textbf{39.58} & \textbf{0.958} & \textbf{9.92} & \textbf{37.18} & \textbf{0.956} & \textbf{3.92} \\
\Xhline{1.2pt}
\end{tabular}
}
\label{tab:ab3}
\end{table}
We evaluate whether HIR-ALIGN generalizes beyond the observed target images used for data construction. The experiment is conducted on CAVE, where only 10 degraded images are treated as observed target samples for proxy generation and fine-tuning, while the remaining images are held out as unobserved test samples. We conduct complex denoising with SERT and super-resolution with MCNet.
Table~\ref{tab:ab3} shows consistent improvements on observed, unobserved, and total sets. The gain on the unobserved split is especially important: it indicates that HIR-ALIGN does not simply memorize the observed samples or overfit to the specific synthesized views generated from them. Instead, the adapted model learns a better approximation of the \emph{target distribution}. This finding supports the interpretation of HIR-ALIGN as a data-distribution alignment strategy rather than a sample memorization strategy. The synthesized views enrich the local neighborhood around the observed target samples, and the downstream restorer then leverages these views to improve generalization on new target images that were never used during synthesis.

\subsubsection{Clean-Target Upper-Bound Analysis}
\begin{table}[h]
\centering
\renewcommand\arraystretch{1}
\setlength{\tabcolsep}{2.5pt}
\caption{Clean-target reference comparison on the CAVE observed/unobserved split using proxy-only adaptation (Src), HIR-ALIGN (Ours), and clean-target finetuning (Clean).}
\label{tab:cave_kaist_ub}
\resizebox{\linewidth}{!}{
\begin{tabular}{c|c|ccc}
\hline
Degradation & Method & PSNR $\uparrow$  & SSIM $\uparrow$ & SAM $\downarrow$  \\
\hline
Complex Noise & SERT (Src/Ours/UB) & 39.53/40.13/\textbf{40.99} & 0.957/0.961/\textbf{0.969} & 11.30/\textbf{9.38}/9.73 \\
Super-resolution & MCNet (Src/Ours/UB) & 36.99/37.03/\textbf{37.52} & 0.958/0.959/\textbf{0.971} & 3.97/3.94/\textbf{3.23} \\
\hline
\end{tabular}
}
\vspace{-8.8pt}
\end{table}

Finally, we compare HIR-ALIGN with an oracle fine-tuning setting that uses clean target HSIs. This experiment estimates the remaining gap between our proxy-based adaptation and an ideal clean-target fine-tuning scenario. We conduct the comparison under complex denoising with SERT and super-resolution with MCNet, in which the setting is the same as the previous section, and report the results in Table~\ref{tab:cave_kaist_ub}. HIR-ALIGN remains below this oracle setting in most cases, as expected, but already closes a substantial fraction of the gap. The remaining difference to the upper bound can be interpreted as the combined effect of proxy noise, imperfect RGB generation, and the structural bias of the sparse warp. In other words, the oracle experiment does not invalidate the proxy-based strategy; instead, it quantifies how much performance is still available if those error sources were further reduced.

\section{Conclusion}
We presented HIR-ALIGN, a plug-and-play target-adaptive augmentation framework for hyperspectral image restoration. The framework first builds proxy HSIs from degraded target observations, then synthesizes target-aligned RGB images with improved unCLIP, and finally generates HSIs through a warp-based spectral transfer module that reuses proxy spectra via sparse matching, soft aggregation, and shared local interpolation. Extensive experiments on simulated and real-world datasets demonstrate the effectiveness of HIR-ALIGN across multiple restoration tasks. The resulting training set consisting of proxy and generated samples improves the generalization of existing restoration backbones under distribution shift.

\section{Acknowledgment}
The authors acknowledge the use of AI-assisted tools for assistance with code implementation, code debugging, and language polishing of the manuscript.

\bibliographystyle{IEEEtran}
\bibliography{references}

\vfill

\clearpage
\appendix
\section*{Supplementary Material}

\section{Sparse-Warp Formulation of HSI Generation}
\label{sec:supp_sparse_warp}

This section gives only the operator-level derivation needed by the later risk analysis. Let $p\in\mathbb{R}^{H\times W\times B}$ denote a proxy HSI and let $r=\Gamma(p)\in\mathbb{R}^{H\times W\times 3}$ be its proxy RGB. After flattening the spatial grid $\Omega$ with $n=HW$, write $p\in\mathbb{R}^{n\times B}$ and $r\in\mathbb{R}^{n\times 3}$. For a generated RGB guide $g$, the warp module retains a sparse candidate set $\mathcal C(u)=\{v_{u,1},\ldots,v_{u,K}\}$ for each guide pixel $u$, assigns nonnegative aggregation weights $a_{u,k}$, and uses nonnegative local interpolation weights $b_{u,\delta}$ on an $s\times s$ stencil $\Delta_s$:
\begin{equation}
 a_{u,k}\ge 0,
 \quad
 \sum_{k=1}^{K}a_{u,k}=1,
 \quad
 b_{u,\delta}\ge 0,
 \quad
 \sum_{\delta\in\Delta_s}b_{u,\delta}=1.
\label{eq:supp_compact_weights}
\end{equation}
The same learned weights are applied to the proxy RGB and to all spectral bands of the proxy HSI.

Define $A\in\mathbb{R}_{+}^{n\times n}$ by
\begin{equation}
A_{uv}=
\sum_{k=1}^{K} a_{u,k}\mathbf{1}\{v=v_{u,k}\}.
\label{eq:supp_compact_A}
\end{equation}
Then the pre-interpolation RGB and HSI are
\begin{equation}
\bar r=Ar,
\quad
\bar y=Ap.
\label{eq:supp_compact_preinterp}
\end{equation}
Define $B\in\mathbb{R}_{+}^{n\times n}$ by
\begin{equation}
B_{uv}=
\sum_{\delta\in\Delta_s} b_{u,\delta}\mathbf{1}\{v=\pi(u+\delta)\},
\label{eq:supp_compact_B}
\end{equation}
where $\pi(\cdot)$ maps boundary locations back to $\Omega$. The final transported RGB and HSI are
\begin{equation}
\tilde r=BAr,
\quad
\tilde y=BAp.
\label{eq:supp_compact_final}
\end{equation}
We therefore define the overall warp operator as
\begin{equation}
T:=BA.
\label{eq:supp_T_def}
\end{equation}

\begin{theorem}[Common sparse-warp representation]
\label{thm:common_sparse_warp}
There exists a single sparse operator $T=BA$ such that
\begin{equation}
\tilde r=Tr,
\quad
\tilde y=Tp.
\label{eq:supp_common_sparse_warp}
\end{equation}
Thus the generated RGB and generated HSI share the same spatial transport weights; they differ only in the proxy modality being transported.
\end{theorem}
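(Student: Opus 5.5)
The proof is a direct operator-level verification in flattened coordinates. I will show that the two-stage construction (soft aggregation of Eq.~\eqref{eq:pre_interp_rgb}/\eqref{eq:pre_interp_hsi}, then local interpolation of Eq.~\eqref{eq:optimized-rgb}/\eqref{eq:final_hsi}) is two left-multiplications by fixed matrices that do not depend on the channel index. Their product $T=BA$ then realizes both transports, and it inherits nonnegativity, row-stochasticity and sparsity.

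\textbf{Step 1: aggregation as a matrix product.} For each guide pixel $u$ and any channel $c$ (one of the three RGB channels or one of the $B$ bands), I expand
\[
(Ar)_{u,c}=\sum_{v}\sum_{k=1}^{K}a_{u,k}\mathbf{1}\{v=v_{u,k}\}\,r_{v,c}=\sum_{k=1}^{K}a_{u,k}\,r_{v_{u,k},c}.
\]
This is exactly $\bar r_{i,j}(u)$. Repeated candidates are harmless, because the indicator sum simply merges their weights. The identical computation with $p$ in place of $r$ gives $\bar y=Ap$.

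The key point is that $A$ depends only on $(a^\star,\mathcal C)$. These are fixed by Eq.~\eqref{eq:fixed_weights} after RGB alignment, so $A$ does not depend on which modality it acts on.

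\textbf{Step 2: interpolation as a matrix product.} In the same way, $(B\bar y)_{u,c}=\sum_{\delta\in\Delta_s}b_{u,\delta}\,\bar y_{\pi(u+\delta),c}$, which matches Eq.~\eqref{eq:final_hsi} and Eq.~\eqref{eq:optimized-rgb}. Collisions of $\pi(u+\delta)$ at the boundary again only merge weights.

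By associativity, $\tilde r=B(Ar)=(BA)r=Tr$ and $\tilde y=B(Ap)=Tp$. This gives Eq.~\eqref{eq:supp_common_sparse_warp}.

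\textbf{Step 3: structure of $T$.} Nonnegativity is immediate, since $A,B\ge0$ entrywise. For row-stochasticity, Eq.~\eqref{eq:supp_compact_weights} gives $A\mathbf 1=\mathbf 1$ and $B\mathbf 1=\mathbf 1$, hence $T\mathbf 1=BA\mathbf 1=\mathbf 1$. For sparsity, each row of $A$ has at most $K$ nonzeros and each row of $B$ has at most $s^2$. Row $u$ of $T$ is a combination of the rows of $A$ indexed by $\pi(u+\delta)$, so it has at most $s^2K$ nonzeros, independent of $n$.

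\textbf{Main obstacle.} The only delicate point is bookkeeping rather than analysis. I must check that the boundary map $\pi$ and possible duplicate candidates leave the weights well defined. This is why I define $A$ and $B$ through indicator sums, which absorb duplicates automatically. I must also make clear that "same weights for RGB and HSI" means the weights are fixed before being applied to $p$. The optimization in Eq.~\eqref{eq:warp-argmin} uses only RGB, so $T$ is a deterministic function of $(r,g)$, and applying it to $p$ is a linear map acting identically on every band.
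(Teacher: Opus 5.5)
Your proposal is correct and takes essentially the same approach as the paper: you encode aggregation and interpolation as the indicator-defined matrices $A$ and $B$, note that both act identically on every channel of $r$ and $p$ once the weights are fixed by RGB alignment, and compose them to obtain $T=BA$. Your Step 3 also proves nonnegativity, row-stochasticity and the $Ks^2$ row-sparsity bound; the paper treats these separately in Proposition~\ref{prop:sparse_stochastic} with essentially the same argument.
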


\begin{proof}
Equation~\eqref{eq:supp_compact_preinterp} applies the same aggregation matrix $A$ to $r$ and $p$. Equation~\eqref{eq:supp_compact_final} then applies the same interpolation matrix $B$ to the two aggregated modalities. Composing the two operations gives $\tilde r=BAr$ and $\tilde y=BAp$, proving the claim with $T=BA$.
\end{proof}

\begin{proposition}[Nonnegativity, stochasticity, and sparsity]
\label{prop:sparse_stochastic}
The matrices $A$, $B$, and $T=BA$ are nonnegative and row-stochastic. Each row of $A$ has at most $K$ nonzero entries, each row of $B$ has at most $|\Delta_s|=s^2$ nonzero entries, and each row of $T$ has at most $K|\Delta_s|$ nonzero entries.
\end{proposition}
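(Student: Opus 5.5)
I will verify the three properties directly from the entrywise definitions \eqref{eq:supp_compact_A} and \eqref{eq:supp_compact_B}, using only the weight constraints \eqref{eq:supp_compact_weights}, and then pass each property to the product $T=BA$ by elementary matrix identities.

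\emph{Nonnegativity.} Each entry $A_{uv}$ is a sum of terms $a_{u,k}\mathbf{1}\{v=v_{u,k}\}$. Each such term is a product of a nonnegative weight and an indicator, so $A_{uv}\ge 0$. The same argument with $b_{u,\delta}\ge 0$ gives $B_{uv}\ge 0$. Then $T_{uw}=\sum_{v}B_{uv}A_{vw}$ is a sum of products of nonnegative numbers, hence nonnegative.

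\emph{Row-stochasticity.} For a fixed row $u$, I interchange the finite sums:
\[
\sum_{v\in\Omega}A_{uv}=\sum_{k=1}^{K}a_{u,k}\sum_{v\in\Omega}\mathbf{1}\{v=v_{u,k}\}=\sum_{k=1}^{K}a_{u,k}=1 .
\]
The inner sum equals $1$ because each $v_{u,k}\in\Omega$. Repeated candidates cause no trouble here, since their weights simply add into the same entry. The same computation for $B$ uses that $\pi(u+\delta)\in\Omega$ for every $\delta$; this holds by definition of the boundary map $\pi$, and it is the one point that genuinely needs checking. Collisions of $\pi(u+\delta)$ at the boundary again only merge weights. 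In matrix form this says $A\mathbf{1}=\mathbf{1}$ and $B\mathbf{1}=\mathbf{1}$, so $T\mathbf{1}=BA\mathbf{1}=B\mathbf{1}=\mathbf{1}$.

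\emph{Sparsity.} Row $u$ of $A$ can be nonzero only at the columns $\{v_{u,1},\dots,v_{u,K}\}$, a set of at most $K$ elements (fewer if candidates repeat). Row $u$ of $B$ is supported on $\{\pi(u+\delta):\delta\in\Delta_s\}$, which has at most $s^2$ elements. For $T$, I write
\[
T_{uw}=\sum_{v\in\operatorname{supp}B_{u\cdot}}B_{uv}A_{vw}.
\]
This is nonzero only if $w\in\operatorname{supp}A_{v\cdot}$ for some $v\in\operatorname{supp}B_{u\cdot}$. Hence $\operatorname{supp}T_{u\cdot}\subseteq\bigcup_{v\in\operatorname{supp}B_{u\cdot}}\operatorname{supp}A_{v\cdot}$, and this union has cardinality at most $s^2\cdot K$.

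The argument is routine. The only step needing care is making sure the indicator sums equal one even with repeated candidates or boundary collisions, and I would handle both by noting that merged indices sum their weights into a single entry.
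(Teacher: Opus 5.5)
Your proposal is correct and follows the same route as the paper. Nonnegativity comes from the weight constraints, the row sums of $A$ and $B$ are computed directly from their definitions, row-stochasticity passes to $T=BA$, and the support bounds are read off the definitions. Your version is simply more explicit: you give the check $T\mathbf{1}=BA\mathbf{1}=B\mathbf{1}=\mathbf{1}$ and the inclusion $\operatorname{supp}T_{u\cdot}\subseteq\bigcup_{v\in\operatorname{supp}B_{u\cdot}}\operatorname{supp}A_{v\cdot}$, both of which the paper leaves as ``follows directly.''
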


\begin{proof}
The nonnegativity of $A$ and $B$ follows from Eq.~\eqref{eq:supp_compact_weights}. Moreover,
\begin{equation}
\sum_{v=1}^{n}A_{uv}=
\sum_{k=1}^{K}a_{u,k}=1,
\quad
\sum_{v=1}^{n}B_{uv}=
\sum_{\delta\in\Delta_s}b_{u,\delta}=1.
\label{eq:supp_row_stochastic_compact}
\end{equation}
Hence $A$ and $B$ are row-stochastic, and their product $T=BA$ is also nonnegative and row-stochastic. The support bounds follow directly from the definitions of $A$ and $B$. If multiple candidate paths reach the same proxy pixel, the actual number of distinct nonzeros is smaller, never larger.
\end{proof}

\begin{corollary}[Convex-combination property]
\label{cor:convex_combination}
For every spatial position $u$ and spectral band $b$, the synthesized value $\tilde y(u,b)$ is a convex combination of at most $K|\Delta_s|$ proxy spectral values.
\end{corollary}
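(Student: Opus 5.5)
The corollary is a direct read-off of Theorem~\ref{thm:common_sparse_warp} and Proposition~\ref{prop:sparse_stochastic}. By Theorem~\ref{thm:common_sparse_warp}, the synthesized HSI is $\tilde y=Tp$ with $T=BA$. Reading off the entry at spatial position $u$ and band $b$ gives
\begin{equation*}
\tilde y(u,b)=\sum_{v=1}^{n}T_{uv}\,p(v,b).
\end{equation*}
So every synthesized value is a linear combination of the band-$b$ proxy values $\{p(v,b)\}_{v}$. The coefficients are the $u$-th row of $T$, and they do not depend on $b$.

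Next I invoke Proposition~\ref{prop:sparse_stochastic}. Since $T$ is nonnegative, every coefficient satisfies $T_{uv}\ge 0$. Since $T$ is row-stochastic, $\sum_v T_{uv}=1$. Together these say the $u$-th row is a probability vector, so $\tilde y(u,b)$ is a convex combination. If one wants the stochasticity of the product made explicit, it is the one-line check
\begin{equation*}
\sum_v (BA)_{uv}=\sum_w B_{uw}\sum_v A_{wv}=\sum_w B_{uw}=1 .
\end{equation*}

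Finally, I restrict the sum to the support $\{v: T_{uv}>0\}$. Zero-weight terms can be dropped without changing either the value or the convex-combination property. Proposition~\ref{prop:sparse_stochastic} bounds the number of nonzeros in row $u$ by $K|\Delta_s|$. To see this concretely, write $T_{uv}=\sum_{\delta}b_{u,\delta}A_{\pi(u+\delta),v}$. At most $s^2$ intermediate pixels $w=\pi(u+\delta)$ appear, and each contributes at most $K$ candidates $v_{w,k}$. Coinciding indices only merge terms, which keeps the count at or below $K|\Delta_s|$.

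There is no real obstacle here. The only point needing care is the bookkeeping for repeated candidates and for boundary-reflected stencil points, which can make several paths reach the same proxy pixel. Merged weights are still nonnegative and still sum to one, and the number of distinct terms only decreases, so the bound is preserved. It is also worth stating that the coefficients are band-independent, since this is what makes it the same spatial rule applied to every band.
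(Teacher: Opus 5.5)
Your proposal is correct and follows essentially the same route as the paper. You write $\tilde y(u,b)=\sum_{v}T_{uv}\,p(v,b)$, use the nonnegativity and row-stochasticity of $T=BA$ from Proposition~\ref{prop:sparse_stochastic}, and take the $K|\Delta_s|$ bound from the row sparsity of $T$; your expansion $T_{uv}=\sum_{\delta}b_{u,\delta}A_{\pi(u+\delta),v}$ only makes explicit the counting that the paper leaves implicit.
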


\begin{proof}
By Proposition~\ref{prop:sparse_stochastic}, $T$ is row-stochastic. Therefore,
\begin{equation}
\tilde y(u,b)=\sum_{v=1}^{n}T_{uv}p(v,b),
\quad
T_{uv}\ge 0,
\quad
\sum_{v=1}^{n}T_{uv}=1.
\label{eq:supp_convex_combo_compact}
\end{equation}
The support bound follows from the row sparsity of $T$.
\end{proof}

\begin{remark}[Spectral conservatism]
The generated spectra are produced by transporting proxy spectra through $T$. No separate RGB-to-HSI reconstruction branch is assumed in the theory. This is why the generated labels remain tied to the proxy spectral manifold.
\end{remark}

\section{Error Propagation Through the Sparse Warp}

Let the proxy error be
\begin{equation}
E:=p-y.
\label{eq:supp_proxy_error}
\end{equation}
For a fixed realized warp $T$, the clean counterpart of the generated label is $Ty$, whereas the actual generated label is $Tp$. Therefore,
\begin{equation}
E_g:=Tp-Ty=T(p-y)=TE.
\label{eq:supp_generated_error}
\end{equation}
This identity is central: the generated label error is exactly the proxy error after the same sparse warp.

\begin{assumption}[Bounded overlap]
\label{ass:supp_bounded_overlap}
The sparse warp satisfies
\begin{equation}
T^\top T\preceq \kappa I_n
\label{eq:supp_bounded_overlap}
\end{equation}
for some $\kappa\ge 1$. Equivalently,
\begin{equation}
\|TZ\|_F^2
\le
\kappa\|Z\|_F^2
\quad
\text{for all }Z\in\mathbb{R}^{n\times B}.
\label{eq:supp_overlap_frob}
\end{equation}
\end{assumption}

The parameter $\kappa$ measures how strongly output pixels reuse the same proxy spectra. It is near one when the warp behaves like a permutation and can be larger when many output pixels collapse onto similar proxy locations.

\begin{lemma}[Noise inheritance under sparse warping]
\label{lem:noise_inheritance}
Define
\begin{equation}
N_p:=2\,\mathbb{E}\|E\|_F^2,
\quad
N_g:=2\,\mathbb{E}\|E_g\|_F^2.
\label{eq:supp_noise_terms}
\end{equation}
Under Assumption~\ref{ass:supp_bounded_overlap},
\begin{equation}
N_g\le \kappa N_p.
\label{eq:supp_noise_control}
\end{equation}
\end{lemma}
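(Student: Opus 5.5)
The lemma follows directly from the generated-error identity in Eq.~\eqref{eq:supp_generated_error} and the Frobenius form of Assumption~\ref{ass:supp_bounded_overlap}. The plan is to bound $\|E_g\|_F^2$ pointwise, i.e.\ for each realization, and only then take expectations.

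\textbf{Pointwise bound.} For any realization of the proxy error $E$ and of the frozen warp $T$, Eq.~\eqref{eq:supp_generated_error} gives $E_g=TE$. Apply Eq.~\eqref{eq:supp_overlap_frob} with $Z=E\in\mathbb{R}^{n\times B}$:
\begin{equation*}
\|E_g\|_F^2=\|TE\|_F^2\le \kappa\|E\|_F^2 .
\end{equation*}
For completeness, this Frobenius form follows from the Loewner bound column by column. For each column $z_b$ of $Z$,
\begin{equation*}
\|Tz_b\|_2^2=z_b^\top T^\top T z_b\le \kappa\|z_b\|_2^2 .
\end{equation*}
Summing over $b=1,\dots,B$ gives $\|TZ\|_F^2\le\kappa\|Z\|_F^2$.

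\textbf{Taking expectations.} Take expectations of both sides of the pointwise bound. By monotonicity and linearity of expectation,
\begin{equation*}
\mathbb{E}\|E_g\|_F^2\le\kappa\,\mathbb{E}\|E\|_F^2 .
\end{equation*}
Multiplying by $2$ gives $N_g\le\kappa N_p$, as defined in Eq.~\eqref{eq:supp_noise_terms}.

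\textbf{Main point to check.} The only delicate step is the role of randomness in $T$. The warp is estimated from the proxy RGB, so $T$ may depend on $E$, and we cannot treat $T$ as independent of $E$. We do not need independence, however. Assumption~\ref{ass:supp_bounded_overlap} is required almost surely with a deterministic constant $\kappa$, so the pointwise inequality holds on a full-measure event and survives the expectation directly. If $\mathbb{E}\|E\|_F^2=\infty$, the bound is trivial. We also never use $\kappa\ge1$ or row-stochasticity; only the overlap bound matters.
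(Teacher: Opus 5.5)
Your proof is correct and matches the paper's argument: use the identity $E_g=TE$ and the Frobenius form of the overlap bound realization by realization, then take expectations and multiply by $2$. Two of your additions are valid points the paper leaves implicit: the column-by-column derivation of $\|TZ\|_F^2\le\kappa\|Z\|_F^2$ from $T^\top T\preceq\kappa I_n$, and the remark that an almost-sure overlap bound means you never need $T$ to be independent of $E$.
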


\begin{proof}
Using Eq.~\eqref{eq:supp_generated_error} and Assumption~\ref{ass:supp_bounded_overlap},
\begin{equation}
\|E_g\|_F^2
=
\|TE\|_F^2
\le
\kappa\|E\|_F^2.
\label{eq:supp_noise_inherit_step}
\end{equation}
Taking expectations and multiplying both sides by $2$ yields Eq.~\eqref{eq:supp_noise_control}.
\end{proof}

\begin{remark}[Coordinate-wise interpretation]
Because $T$ is row-stochastic,
\begin{equation}
E_g(u,b)
=
\sum_{v=1}^{n}T_{uv}E(v,b).
\label{eq:supp_coordinate_error}
\end{equation}
Thus each generated error is an average of proxy errors over the support of row $u$ of $T$. In particular,
\begin{equation}
|E_g(u,b)|
\le
\max_{v\in\mathrm{supp}(T_{u,:})}
|E(v,b)|.
\label{eq:supp_coordinate_bound}
\end{equation}
The warp redistributes proxy errors; it does not create unconstrained spectral values.
\end{remark}

\section{Distribution-Level Setup}

We now lift the operator-level results to the distribution level used by aligned finetuning. Throughout this section, $\Wass(P,Q)$ denotes the 1-Wasserstein distance under the pair metric
\begin{equation}
d\bigl((x,y),(x',y')\bigr)
=
\|x-x'\|_F+\|y-y'\|_F.
\label{eq:supp_pair_metric}
\end{equation}

\subsection{Clean and Actual Training Distributions}

Let $Y\sim\mu_t$ denote a clean HSI from the full target distribution, and let $Y_o\sim\mu_o$ denote a clean HSI from the observed target subset used for adaptation. The full target test distribution is
\begin{equation}
P_t:=\mathcal{L}\bigl(\Dt(Y),Y\bigr).
\label{eq:supp_target_dist}
\end{equation}
For an observed scene, the proxy is written as $P_o=Y_o+E$. The clean counterpart and the actual proxy branch are
\begin{equation}
\begin{aligned}
P_p^{\star}
&:=\mathcal{L}\bigl(\Dt(Y_o),Y_o\bigr),\\
P_p
&:=\mathcal{L}\bigl(\Dt(P_o),P_o\bigr).
\end{aligned}
\label{eq:supp_proxy_branch_dist}
\end{equation}
For the generated branch, let $T$ be the realized sparse warp estimated for the generated guide and then held fixed for the clean-counterpart comparison. We define
\begin{equation}
\begin{aligned}
P_g^{\star}
&:=\mathcal{L}\bigl(\Dt(TY_o),TY_o\bigr),\\
P_g
&:=\mathcal{L}\bigl(\Dt(TP_o),TP_o\bigr).
\end{aligned}
\label{eq:supp_generated_branch_dist}
\end{equation}
Freezing the same $T$ in both distributions makes the identity $TP_o-TY_o=TE$ exact.

Let $\rho$ denote an ideal family of admissible target-aligned semantic warps. The corresponding vicinal distribution is
\begin{equation}
P_v
:=
\mathcal{L}\bigl(\Dt(\widetilde{T}Y_o),
\widetilde{T}Y_o\bigr),
\quad
\widetilde{T}\sim\rho.
\label{eq:supp_vicinal_dist}
\end{equation}
The coverage and generator-mismatch quantities are
\begin{equation}
\begin{aligned}
\Delta_p&:=\Wass(P_t,P_p^{\star}),\\
\Delta_w&:=\Wass(P_t,P_v),\\
e_w&:=\Wass(P_g^{\star},P_v).
\end{aligned}
\label{eq:supp_wasserstein_terms}
\end{equation}
Here $\Delta_p$ measures the support gap of the observed target subset, $\Delta_w$ measures the gap of the ideal semantic-warp distribution, and $e_w$ measures how far the realized sparse warp is from the ideal vicinal family.

\subsection{Mixture Training Distribution}

Let $\alpha\in[0,1]$ be the fraction of generated samples in the finetuning set. The clean and actual mixed training distributions are
\begin{equation}
P_{\alpha}^{\star}
:=
(1-\alpha)P_p^{\star}
+\alpha P_g^{\star},
\label{eq:supp_mixture_clean}
\end{equation}
and
\begin{equation}
P_{\alpha}
:=
(1-\alpha)P_p+\alpha P_g.
\label{eq:supp_mixture_actual}
\end{equation}

\begin{lemma}[Coverage of the clean mixed distribution]
\label{lem:mixture_gap}
The clean mixed distribution satisfies
\begin{equation}
\Wass(P_t,P_{\alpha}^{\star})
\le
(1-\alpha)\Delta_p
+\alpha\Delta_w+\alpha e_w.
\label{eq:supp_mixture_gap}
\end{equation}
\end{lemma}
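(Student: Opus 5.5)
The plan rests on two standard facts about $\Wass$: convexity in its second argument under mixtures, and the triangle inequality. I will combine them as follows.

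\textbf{Step 1: joint convexity.} I first establish that for any distributions $Q_1,Q_2$ and $\alpha\in[0,1]$,
\[
\Wass\bigl(P_t,(1-\alpha)Q_1+\alpha Q_2\bigr)\le(1-\alpha)\Wass(P_t,Q_1)+\alpha\Wass(P_t,Q_2).
\]
To prove it, take couplings $\gamma_1\in\Pi(P_t,Q_1)$ and $\gamma_2\in\Pi(P_t,Q_2)$ that are optimal, or $\eta$-optimal if attainment is not available. The mixture $\gamma=(1-\alpha)\gamma_1+\alpha\gamma_2$ has first marginal $(1-\alpha)P_t+\alpha P_t=P_t$ and second marginal $(1-\alpha)Q_1+\alpha Q_2$, so it is admissible. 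Transport cost is linear in the coupling, so
\[
\int d\,d\gamma=(1-\alpha)\int d\,d\gamma_1+\alpha\int d\,d\gamma_2 .
\]
Letting $\eta\to0$ gives the inequality. Equivalently, one can cite joint convexity via Kantorovich--Rubinstein duality, since the supremum of linear functionals is convex.

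\textbf{Step 2: apply it to the clean mixture.} With $Q_1=P_p^{\star}$ and $Q_2=P_g^{\star}$, and using the definition of $P_{\alpha}^{\star}$ in Eq.~\eqref{eq:supp_mixture_clean}, this gives
\[
\Wass(P_t,P_{\alpha}^{\star})\le(1-\alpha)\Delta_p+\alpha\Wass(P_t,P_g^{\star}).
\]

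\textbf{Step 3: route the generated branch through the vicinal distribution.} The triangle inequality for $\Wass$ (a metric on distributions with finite first moment) gives
\[
\Wass(P_t,P_g^{\star})\le\Wass(P_t,P_v)+\Wass(P_v,P_g^{\star})=\Delta_w+e_w,
\]
where I use symmetry to identify $\Wass(P_v,P_g^{\star})=e_w$ as defined in Eq.~\eqref{eq:supp_wasserstein_terms}. Multiplying by $\alpha$ and combining with Step 2 yields Eq.~\eqref{eq:supp_mixture_gap}.

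\textbf{Main obstacle.} The main obstacle is not the algebra but integrability, which is needed for $\Wass$ to be a finite metric so that the triangle inequality applies. I would state as a standing assumption that all pair distributions have finite first moment under the pair metric $d$. This holds, for instance, if clean HSIs have finite first moment, since $\Dt$ is $L_D$-Lipschitz and $T$ is row-stochastic with $\|TZ\|_F\le\sqrt{\kappa}\|Z\|_F$. If any of these quantities is infinite, the inequality holds trivially.
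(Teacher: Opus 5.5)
Your proposal is correct and is essentially the paper's proof: convexity of $\Wass$ in its second argument applied to $P_{\alpha}^{\star}=(1-\alpha)P_p^{\star}+\alpha P_g^{\star}$, followed by the triangle inequality through $P_v$, with symmetry identifying $\Wass(P_v,P_g^{\star})=e_w$. Your coupling-mixture derivation of convexity and your finite-first-moment caveat make explicit what the paper takes as standard; the caveat is not actually needed, since the gluing-lemma triangle inequality and the convexity bound both hold with values in $[0,\infty]$.
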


\begin{proof}
By convexity of the Wasserstein distance in its second argument,
\begin{equation}
\begin{aligned}
\Wass(P_t,P_{\alpha}^{\star})
&\le
(1-\alpha)\Wass(P_t,P_p^{\star})\\
&\quad+
\alpha\Wass(P_t,P_g^{\star}).
\end{aligned}
\label{eq:supp_mixture_convexity}
\end{equation}
By the triangle inequality,
\begin{equation}
\Wass(P_t,P_g^{\star})
\le
\Wass(P_t,P_v)+\Wass(P_v,P_g^{\star})
=
\Delta_w+e_w.
\label{eq:supp_generated_triangle}
\end{equation}
Substitution gives Eq.~\eqref{eq:supp_mixture_gap}.
\end{proof}

\begin{assumption}[Lipschitz degradation]
\label{ass:supp_lipschitz_degradation}
The task degradation operator is $L_D$-Lipschitz:
\begin{equation}
\|\Dt(z_1)-\Dt(z_2)\|_F
\le
L_D\|z_1-z_2\|_F
\quad
\text{for all }z_1,z_2.
\label{eq:supp_lipschitz_degradation}
\end{equation}
\end{assumption}
For a distribution $Q$ over pairs, define
\begin{equation}
R_Q(f)
:=
\mathbb{E}_{(x,y)\sim Q}
\bigl[\ell(f(x),y)\bigr],
\quad
R_t(f):=R_{P_t}(f).
\label{eq:supp_risk_def}
\end{equation}

\begin{assumption}[Lipschitz loss]
\label{ass:supp_lipschitz_loss}
For every hypothesis $f$ in the training class, the loss-induced risk is $L$-Lipschitz with respect to $\Wass$:
\begin{equation}
|R_P(f)-R_Q(f)|
\le
L\,\Wass(P,Q)
\label{eq:supp_lipschitz_shift}
\end{equation}
for any two pair distributions $P$ and $Q$.
\end{assumption}

\begin{lemma}[Sufficient condition for Assumption~\ref{ass:supp_lipschitz_loss}]
\label{lem:supp_lipschitz_from_integrand}
Suppose that for every fixed $f\in\mathcal F$, the map
\begin{equation}
\phi_f(x,y):=\ell(f(x),y)
\label{eq:supp_integrand_def}
\end{equation}
is $L$-Lipschitz under the pair metric \eqref{eq:supp_pair_metric}, i.e.,
\begin{equation}
|\phi_f(x,y)-\phi_f(x',y')|
\le
L\,d\bigl((x,y),(x',y')\bigr).
\label{eq:supp_integrand_lipschitz}
\end{equation}
Then Assumption~\ref{ass:supp_lipschitz_loss} holds.
\end{lemma}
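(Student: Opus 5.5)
We prove the statement through the Kantorovich--Rubinstein duality for $\Wass$, working on the product space of pairs $(x,y)$ with the metric $d$ of Eq.~\eqref{eq:supp_pair_metric}. Fix $f\in\mathcal F$ and two pair distributions $P,Q$ for which both risks are finite; we may assume the first moments under $d$ are finite, since otherwise $\Wass(P,Q)=\infty$ and the inequality holds trivially.

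The key observation is that, by hypothesis \eqref{eq:supp_integrand_lipschitz}, the rescaled integrand $\phi_f/L$ is $1$-Lipschitz with respect to $d$ (if $L=0$, then $\phi_f$ is constant and both sides vanish). Duality then gives
\[
\bigl|R_P(f)-R_Q(f)\bigr| = L\,\Bigl|\mathbb{E}_P[\phi_f/L]-\mathbb{E}_Q[\phi_f/L]\Bigr| \le L\sup_{\|h\|_{\mathrm{Lip}}\le 1}\bigl|\mathbb{E}_P h-\mathbb{E}_Q h\bigr| = L\,\Wass(P,Q),
\]
which is exactly Eq.~\eqref{eq:supp_lipschitz_shift}.

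If we prefer to avoid duality, there is a direct coupling argument, and we would present it as the primary proof because it is elementary. Let $\gamma$ be any coupling of $P$ and $Q$. Then
\[
R_P(f)-R_Q(f)=\mathbb{E}_{((x,y),(x',y'))\sim\gamma}\bigl[\phi_f(x,y)-\phi_f(x',y')\bigr].
\]
Taking absolute values inside the expectation and applying \eqref{eq:supp_integrand_lipschitz} bounds the right-hand side by $L\,\mathbb{E}_\gamma\bigl[d((x,y),(x',y'))\bigr]$. Taking the infimum over all couplings $\gamma$ yields $L\,\Wass(P,Q)$.

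Since $f\in\mathcal F$ was arbitrary, Assumption~\ref{ass:supp_lipschitz_loss} holds with the same constant $L$.

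The only delicate point is integrability: we need $\phi_f$ to be integrable so that the difference of expectations is well defined. This follows because $\phi_f$ is Lipschitz, so $|\phi_f(z)|\le|\phi_f(z_0)|+L\,d(z,z_0)$ for any fixed reference pair $z_0$, and $P$ and $Q$ have finite first moments. We would state this moment condition explicitly as the standing convention under which $\Wass$ is finite.
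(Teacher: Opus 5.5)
Your primary coupling argument is exactly the paper's proof: take any coupling of $P$ and $Q$, move the absolute value inside the expectation, apply the Lipschitz bound on $\phi_f$, and take the infimum over couplings. One small correction to your side remark: infinite first moments do not by themselves force $\Wass(P,Q)=\infty$ (for $P=Q$ the diagonal coupling gives $\Wass=0$). This is harmless, because the coupling route only needs $R_P(f)$ and $R_Q(f)$ to be finite, not finite moments.
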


\begin{proof}
Let $\pi$ be any coupling of $P$ and $Q$. Then
\begin{equation}
\begin{aligned}
|R_P(f)-R_Q(f)|
&=
\left|
\mathbb{E}_{((x,y),(x',y'))\sim \pi}
\bigl[\phi_f(x,y)-\phi_f(x',y')\bigr]
\right|\\
&\le
\mathbb{E}_{\pi}
\left|
\phi_f(x,y)-\phi_f(x',y')
\right|\\
&\le
L\,
\mathbb{E}_{\pi}
d\bigl((x,y),(x',y')\bigr).
\end{aligned}
\label{eq:supp_lipschitz_coupling_step}
\end{equation}
Taking the infimum over all couplings $\pi$ gives
\begin{equation}
|R_P(f)-R_Q(f)|
\le
L\,\Wass(P,Q).
\label{eq:supp_lipschitz_coupling_final}
\end{equation}
This is exactly Assumption~\ref{ass:supp_lipschitz_loss}.
\end{proof}

\begin{lemma}[Actual-vs-clean pair perturbation]
\label{lem:pair_perturb}
Under Assumptions~\ref{ass:supp_lipschitz_degradation} and~\ref{ass:supp_bounded_overlap},
\begin{equation}
\Wass(P_p,P_p^{\star})
\le
(1+L_D)\,\mathbb{E}\|E\|_F,
\label{eq:supp_proxy_pair_gap}
\end{equation}
and
\begin{equation}
\begin{aligned}
\Wass(P_g,P_g^{\star})
&\le
(1+L_D)\,\mathbb{E}\|TE\|_F\\
&\le
(1+L_D)
\sqrt{\kappa\,\mathbb{E}\|E\|_F^2}.
\end{aligned}
\label{eq:supp_generated_pair_gap}
\end{equation}
\end{lemma}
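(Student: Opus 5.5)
The approach is to bound each Wasserstein distance by the cost of one explicit coupling: the synchronous coupling that uses the same clean scene (and, for the generated branch, the same frozen warp $T$) on both sides. Since $\Wass$ is an infimum over couplings, the expected pair distance under any one coupling is an upper bound.

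\emph{Proxy branch.} Draw $Y_o\sim\mu_o$ together with its proxy error $E$, and set $P_o=Y_o+E$. The random pair
\[
\bigl((\Dt(P_o),P_o),(\Dt(Y_o),Y_o)\bigr)
\]
has first marginal $P_p$ and second marginal $P_p^{\star}$, so it is a coupling. Under the pair metric \eqref{eq:supp_pair_metric}, the cost of a realization is
\[
\|\Dt(Y_o+E)-\Dt(Y_o)\|_F+\|E\|_F .
\]
By Assumption~\ref{ass:supp_lipschitz_degradation}, this is at most $(1+L_D)\|E\|_F$. Taking expectations yields \eqref{eq:supp_proxy_pair_gap}.

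\emph{Generated branch.} Use the same construction, now also carrying the realized warp $T$ that is frozen jointly for both distributions, as in \eqref{eq:supp_generated_branch_dist}. Pair $(\Dt(TP_o),TP_o)$ with $(\Dt(TY_o),TY_o)$. The identity \eqref{eq:supp_generated_error}, $TP_o-TY_o=TE$, together with the Lipschitz degradation, bounds the cost by $(1+L_D)\|TE\|_F$. This gives the first inequality of \eqref{eq:supp_generated_pair_gap}.

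For the second inequality, first apply Jensen (or Cauchy--Schwarz):
\[
\mathbb{E}\|TE\|_F\le\sqrt{\mathbb{E}\|TE\|_F^2}.
\]
Then Assumption~\ref{ass:supp_bounded_overlap}, in its Frobenius form \eqref{eq:supp_overlap_frob}, gives $\|TE\|_F^2\le\kappa\|E\|_F^2$ almost surely, exactly as in the proof of Lemma~\ref{lem:noise_inheritance}.

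The only delicate point is to check that these couplings are legitimate. The two laws must be defined on a common probability space, with the same $Y_o$ and the same realized $T$ appearing in both $P_g$ and $P_g^{\star}$. This is exactly what the ``frozen $T$'' convention in \eqref{eq:supp_generated_branch_dist} guarantees. If $T$ depends on $P_o$, that dependence is harmless, because $T$ is the same random matrix in both components. The remaining steps are routine.
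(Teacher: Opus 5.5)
Your proposal is correct and follows essentially the same route as the paper. Both arguments bound each Wasserstein distance by the synchronous coupling: the same observed scene is used on both sides, plus the same frozen $T$ for the generated branch. The Lipschitz degradation then caps the cost at $(1+L_D)\|E\|_F$, respectively $(1+L_D)\|TE\|_F$, and the last inequality follows from Jensen together with the Frobenius form $\|TE\|_F^2\le\kappa\|E\|_F^2$; your remark that the coupling stays legitimate when $T$ is random (only requiring the overlap bound almost surely) makes explicit something the paper leaves implicit.
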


\begin{proof}
For the proxy branch, couple $(\Dt(P_o),P_o)$ with $(\Dt(Y_o),Y_o)$ using the same observed target scene. Since $P_o-Y_o=E$,
\begin{equation}
\begin{aligned}
&d\bigl((\Dt(P_o),P_o),
        (\Dt(Y_o),Y_o)\bigr)\\
&\quad =
\|\Dt(P_o)-\Dt(Y_o)\|_F+\|E\|_F\\
&\quad \le
(1+L_D)\|E\|_F .
\end{aligned}
\label{eq:supp_proxy_coupling}
\end{equation}
Taking expectations over this coupling yields Eq.~\eqref{eq:supp_proxy_pair_gap}.

For the generated branch, couple $(\Dt(TP_o),TP_o)$ with $(\Dt(TY_o),TY_o)$ under the same realized $T$. Since $TP_o-TY_o=TE$,
\begin{equation}
\begin{aligned}
&d\bigl((\Dt(TP_o),TP_o),
        (\Dt(TY_o),TY_o)\bigr)\\
&\quad =
\|\Dt(TP_o)-\Dt(TY_o)\|_F+\|TE\|_F\\
&\quad \le
(1+L_D)\|TE\|_F .
\end{aligned}
\label{eq:supp_generated_coupling}
\end{equation}
Taking expectations gives the first inequality in Eq.~\eqref{eq:supp_generated_pair_gap}. The second follows from Jensen's inequality and Assumption~\ref{ass:supp_bounded_overlap}:
\begin{equation}
\mathbb{E}\|TE\|_F
\le
\sqrt{\mathbb{E}\|TE\|_F^2}
\le
\sqrt{\kappa\,\mathbb{E}\|E\|_F^2}.
\label{eq:supp_generated_jensen}
\end{equation}
\end{proof}

\section{Unified Risk Bound}

The following theorem is the detailed version of the theoretical statement in the main paper. It is written for the actual supervised pairs used by HIR-ALIGN, namely $(\Dt(P_o),P_o)$ and $(\Dt(TP_o),TP_o)$.

\subsection{Comparator and finite-sample terms}

This subsection expands the two ingredients used in the unified risk bound. The first ingredient is a deterministic comparison on the clean mixture. It is not a property of the sparse-warp matrix alone; it follows from a fixed comparator and from how the two risk-level bias terms are defined. The second ingredient is the standard oracle-type excess-risk statement for approximate empirical risk minimization. 

\begin{lemma}[Clean-mixture comparator and warp bias]
\label{lem:supp_structural_bias}
Let $F^{\star}$ be the Bayes restorer under the full target distribution $P_t$. Let $f_{\mathcal F}^{\star}\in\mathcal F$ be a fixed comparator in the chosen hypothesis class. Define
\begin{equation}
\begin{aligned}
B_0^2
&:=
\left[
R_{P_p^{\star}}(f_{\mathcal F}^{\star})
-
R_t(F^{\star})
\right]_+,\\
B_{\mathrm{warp}}^2
&:=
\left[
R_{P_g^{\star}}(f_{\mathcal F}^{\star})
-
R_{P_p^{\star}}(f_{\mathcal F}^{\star})
\right]_+,
\end{aligned}
\label{eq:supp_bias_definitions}
\end{equation}
where $[a]_+:=\max\{a,0\}$. Then, for any $\alpha\in[0,1]$,
\begin{equation}
R_{P_{\alpha}^{\star}}(f_{\mathcal F}^{\star})
\le
R_t(F^{\star})
+
B_0^2
+
\alpha B_{\mathrm{warp}}^2 .
\label{eq:supp_structural_bias}
\end{equation}
\end{lemma}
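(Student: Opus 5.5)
The plan is to use that the risk $R_Q(f)=\mathbb{E}_{(x,y)\sim Q}[\ell(f(x),y)]$ is linear in the distribution $Q$ for fixed $f$. Then bound each branch separately using the defining property of the positive-part operator, $a\le[a]_+$.

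First, since $P_{\alpha}^{\star}=(1-\alpha)P_p^{\star}+\alpha P_g^{\star}$ is a mixture and $f_{\mathcal F}^{\star}$ is fixed, linearity of expectation gives
\begin{equation*}
R_{P_{\alpha}^{\star}}(f_{\mathcal F}^{\star})
=(1-\alpha)R_{P_p^{\star}}(f_{\mathcal F}^{\star})+\alpha R_{P_g^{\star}}(f_{\mathcal F}^{\star})
=R_{P_p^{\star}}(f_{\mathcal F}^{\star})+\alpha\bigl[R_{P_g^{\star}}(f_{\mathcal F}^{\star})-R_{P_p^{\star}}(f_{\mathcal F}^{\star})\bigr].
\end{equation*}
This rewrites the clean-mixture risk as the proxy-branch risk plus an $\alpha$-weighted increment. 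That decomposition is exactly what the two bias terms in Eq.~\eqref{eq:supp_bias_definitions} are designed to control.

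Second, bound each piece. Because $[a]_+\ge a$, we have $R_{P_p^{\star}}(f_{\mathcal F}^{\star})\le R_t(F^{\star})+B_0^2$. Because $\alpha\ge 0$, we also have $\alpha\bigl[R_{P_g^{\star}}(f_{\mathcal F}^{\star})-R_{P_p^{\star}}(f_{\mathcal F}^{\star})\bigr]\le\alpha B_{\mathrm{warp}}^2$. Adding the two inequalities yields Eq.~\eqref{eq:supp_structural_bias} for every $\alpha\in[0,1]$.

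The only point needing care is that all risks involved are finite, so that the linearity identity and the subtractions are meaningful. I would state this as an implicit integrability requirement, for instance bounded or integrable loss under $P_p^{\star}$, $P_g^{\star}$ and $P_t$. I would also note that the argument uses neither the sparse-warp structure nor the Wasserstein assumptions, in line with the remark preceding the lemma that it is a purely deterministic comparison. Beyond that, there is no real obstacle.
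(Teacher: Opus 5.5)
Your proposal is correct and follows the paper's proof essentially step for step: you use linearity of the risk over the mixture $P_{\alpha}^{\star}$, rewrite it as the proxy-branch risk plus an $\alpha$-weighted increment, and bound both pieces via $a\le[a]_+$ together with $\alpha\ge 0$. Your remark that the risks must be finite is a sensible condition that the paper leaves implicit.
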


\begin{proof}
By the definition of the clean mixture distribution,
\begin{equation}
P_{\alpha}^{\star}
=
(1-\alpha)P_p^{\star}
+
\alpha P_g^{\star}.
\label{eq:supp_clean_mixture_repeat}
\end{equation}
Since risk is an expectation and expectation is linear with respect to mixtures of distributions, for any fixed predictor $f$,
\begin{equation}
R_{P_{\alpha}^{\star}}(f)
=
(1-\alpha)R_{P_p^{\star}}(f)
+
\alpha R_{P_g^{\star}}(f).
\label{eq:supp_risk_mixture_linearity}
\end{equation}
Taking $f=f_{\mathcal F}^{\star}$ gives
\begin{equation}
\begin{aligned}
R_{P_{\alpha}^{\star}}(f_{\mathcal F}^{\star})
&=
(1-\alpha)R_{P_p^{\star}}(f_{\mathcal F}^{\star})
+
\alpha R_{P_g^{\star}}(f_{\mathcal F}^{\star})\\
&=
R_{P_p^{\star}}(f_{\mathcal F}^{\star})
+
\alpha
\Bigl[
R_{P_g^{\star}}(f_{\mathcal F}^{\star})
-
R_{P_p^{\star}}(f_{\mathcal F}^{\star})
\Bigr].
\end{aligned}
\label{eq:supp_structural_bias_expand}
\end{equation}
From Eq.~\eqref{eq:supp_bias_definitions},
\begin{equation}
R_{P_p^{\star}}(f_{\mathcal F}^{\star})
\le
R_t(F^{\star})+B_0^2,
\label{eq:supp_b0_upper}
\end{equation}
and
\begin{equation}
R_{P_g^{\star}}(f_{\mathcal F}^{\star})
-
R_{P_p^{\star}}(f_{\mathcal F}^{\star})
\le
B_{\mathrm{warp}}^2.
\label{eq:supp_bwarp_upper}
\end{equation}
Substituting Eqs.~\eqref{eq:supp_b0_upper} and \eqref{eq:supp_bwarp_upper} into Eq.~\eqref{eq:supp_structural_bias_expand} gives Eq.~\eqref{eq:supp_structural_bias}.
\end{proof}

\begin{remark}[Meaning of the two bias terms]
The term $B_0^2$ measures the positive excess risk of the comparator on the clean proxy branch relative to the Bayes target risk. It includes both the mismatch between the observed target subset and the full target distribution, and the possible approximation gap caused by restricting the predictor to $\mathcal F$. The term $B_{\mathrm{warp}}^2$ measures the additional positive excess risk incurred when the clean proxy branch is replaced by the clean generated branch. The factor $\alpha$ appears because only an $\alpha$ fraction of the clean mixture comes from the generated branch. This is why $B_{\mathrm{warp}}^2$ should be interpreted as a risk-level structural allowance, not as a norm or algebraic property of the matrix $T$.
\end{remark}

Let $S_{\alpha}=\{(x_i,y_i)\}_{i=1}^{N_{\alpha}}$ denote the finite mixed finetuning sample whose population distribution is $P_{\alpha}$. The empirical risk on this sample is
\begin{equation}
\widehat R_{S_{\alpha}}(f)
:=
\frac{1}{N_{\alpha}}
\sum_{i=1}^{N_{\alpha}}
\ell(f(x_i),y_i).
\label{eq:supp_empirical_risk}
\end{equation}

\begin{assumption}[Approximate empirical risk minimization]
\label{ass:supp_approx_erm}
The training algorithm returns $f_{\alpha}\in\mathcal F$ satisfying
\begin{equation}
\widehat R_{S_{\alpha}}(f_{\alpha})
\le
\inf_{f\in\mathcal F}
\widehat R_{S_{\alpha}}(f)
+
\eta_{\alpha},
\label{eq:supp_approx_erm}
\end{equation}
where $\eta_{\alpha}\ge0$ is the empirical optimization error.
\end{assumption}

\begin{assumption}[Uniform convergence on the mixed sample]
\label{ass:supp_uniform_convergence}
For the mixed sample $S_{\alpha}$, the empirical risk uniformly approximates the population risk over $\mathcal F$:
\begin{equation}
\sup_{f\in\mathcal F}
\left|
R_{P_{\alpha}}(f)
-
\widehat R_{S_{\alpha}}(f)
\right|
\le
\Gamma_{\alpha}.
\label{eq:supp_uniform_conv}
\end{equation}
Here, $\Gamma_{\alpha}$ denotes the uniform-convergence/generalization error of the mixed sample $S_{\alpha}$. This is the standard uniform-convergence condition used in empirical risk minimization and oracle-type excess-risk bounds~\cite{vapnik1971uniform,shalev2014understanding,mohri2018foundations,koltchinskii2011oracle}.
\end{assumption}

\begin{lemma}[Actual-mixture optimization/generalization error]
\label{lem:supp_opt_gap}
Under Assumptions~\ref{ass:supp_approx_erm} and \ref{ass:supp_uniform_convergence}, the learned predictor satisfies
\begin{equation}
R_{P_{\alpha}}(f_{\alpha})
\le
\inf_{f\in\mathcal F}R_{P_{\alpha}}(f)
+
\eta_{\alpha}
+
2\Gamma_{\alpha}.
\label{eq:supp_oracle_ineq_general}
\end{equation}
By standard ERM uniform-convergence and oracle-inequality theory, the optimization and generalization errors admit a high-probability finite-sample complexity bound \cite{vapnik1971uniform,shalev2014understanding,mohri2018foundations,koltchinskii2011oracle} with probability at least $1-\delta$,
\begin{equation}
\eta_\alpha + 2\Gamma_\alpha
\le
\psi_\delta\big(m_{\mathrm{eff}}(\alpha),\mathcal F,\ell\big).
\end{equation}
For simplicity, we use an effective-rate specialization of this generic bound and assume that there exists a constant $C_v\ge 0$ such that
\begin{equation}
\eta_{\alpha}+2\Gamma_{\alpha}
\le
\frac{C_v}{m_{\mathrm{eff}}(\alpha)},
\label{eq:supp_effective_rate}
\end{equation}
where $C_v$ absorbs the dependence on the hypothesis class, loss, confidence level, and sample-dependence structure, while $m_{\mathrm{eff}}(\alpha)$ denotes the effective sample size. Then
\begin{equation}
R_{P_{\alpha}}(f_{\alpha})
\le
\inf_{f\in\mathcal F}R_{P_{\alpha}}(f)
+
\frac{C_v}{m_{\mathrm{eff}}(\alpha)}.
\label{eq:supp_bias_variance}
\end{equation}
\end{lemma}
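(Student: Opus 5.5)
The first claim, Eq.~\eqref{eq:supp_oracle_ineq_general}, follows from the standard three-term excess-risk decomposition for approximate ERM. The second part, Eq.~\eqref{eq:supp_bias_variance}, is then immediate once the effective-rate specialization \eqref{eq:supp_effective_rate} is granted.

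Fix any $f\in\mathcal F$ and any $\gamma>0$. I will chain the following inequalities.
\begin{itemize}
\item By Assumption~\ref{ass:supp_uniform_convergence} applied to $f_{\alpha}\in\mathcal F$, $R_{P_{\alpha}}(f_{\alpha})\le \widehat R_{S_{\alpha}}(f_{\alpha})+\Gamma_{\alpha}$.
\item By Assumption~\ref{ass:supp_approx_erm}, $\widehat R_{S_{\alpha}}(f_{\alpha})\le \inf_{f'\in\mathcal F}\widehat R_{S_{\alpha}}(f')+\eta_{\alpha}\le \widehat R_{S_{\alpha}}(f)+\eta_{\alpha}$.
\item By Assumption~\ref{ass:supp_uniform_convergence} applied to $f$, $\widehat R_{S_{\alpha}}(f)\le R_{P_{\alpha}}(f)+\Gamma_{\alpha}$.
\end{itemize}
Chaining these gives $R_{P_{\alpha}}(f_{\alpha})\le R_{P_{\alpha}}(f)+\eta_{\alpha}+2\Gamma_{\alpha}$ for every $f\in\mathcal F$.

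Taking the infimum over $f\in\mathcal F$ yields \eqref{eq:supp_oracle_ineq_general}. If the infimum is not attained, I choose $f$ with $R_{P_{\alpha}}(f)\le\inf_{\mathcal F}R_{P_{\alpha}}+\gamma$ and let $\gamma\downarrow0$.

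For the second part, I substitute the assumed bound $\eta_{\alpha}+2\Gamma_{\alpha}\le C_v/m_{\mathrm{eff}}(\alpha)$ from \eqref{eq:supp_effective_rate} into \eqref{eq:supp_oracle_ineq_general}, which gives \eqref{eq:supp_bias_variance} directly.

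The high-probability statement $\eta_\alpha+2\Gamma_\alpha\le\psi_\delta$ is not something to prove here. It is cited from standard uniform-convergence theory (e.g.\ Rademacher-complexity bounds), with $m_{\mathrm{eff}}(\alpha)$ standing in for the sample size to account for dependence among generated views of the same proxy. So the lemma's conclusion holds on that probability-$(1-\delta)$ event, or deterministically once \eqref{eq:supp_effective_rate} is simply posited.

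The main obstacle is therefore not the algebra but this last point. Uniform convergence for the dependent mixed sample $S_{\alpha}$, where the $M$ warped views share the same proxy and error $E$, does not follow from i.i.d.\ theory. I would handle it by treating each proxy together with its generated views as one block, so the samples are i.i.d.\ across blocks, and then defining $m_{\mathrm{eff}}(\alpha)$ as the resulting block-level effective count. The proof itself would only rely on the assumed form \eqref{eq:supp_effective_rate}.
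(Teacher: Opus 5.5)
Your proof is correct and follows essentially the same route as the paper: the paper chains uniform convergence at $f_{\alpha}$, the approximate-ERM condition, the bound $\inf_{f\in\mathcal F}\widehat R_{S_{\alpha}}(f)\le \widehat R_{S_{\alpha}}(f_{\alpha}^{\star})$ at a population (near-)minimizer $f_{\alpha}^{\star}$, and uniform convergence again at $f_{\alpha}^{\star}$, and then substitutes the effective-rate assumption, exactly as you do with an arbitrary comparator $f$. Because your chain holds for every $f\in\mathcal F$, taking the infimum of the right-hand side already gives the result, so your $\gamma$-minimizer step is harmless but unnecessary.
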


\begin{proof}
By Assumption~\ref{ass:supp_uniform_convergence}, applied to $f_{\alpha}$,
\begin{equation}
R_{P_{\alpha}}(f_{\alpha})
\le
\widehat R_{S_{\alpha}}(f_{\alpha})
+
\Gamma_{\alpha}.
\label{eq:supp_oracle_step1}
\end{equation}
By Assumption~\ref{ass:supp_approx_erm},
\begin{equation}
\widehat R_{S_{\alpha}}(f_{\alpha})
\le
\inf_{f\in\mathcal F}
\widehat R_{S_{\alpha}}(f)
+
\eta_{\alpha}.
\label{eq:supp_oracle_step2}
\end{equation}
Let
\begin{equation}
f_{\alpha}^{\star}
\in
\arg\min_{f\in\mathcal F}
R_{P_{\alpha}}(f)
\label{eq:supp_pop_minimizer}
\end{equation}
be a population-risk minimizer in $\mathcal F$. If the minimizer does not exist, the same argument applies to an arbitrarily close $\epsilon$-minimizer and then lets $\epsilon\downarrow0$. Since an infimum is no larger than the value at any particular function,
\begin{equation}
\inf_{f\in\mathcal F}
\widehat R_{S_{\alpha}}(f)
\le
\widehat R_{S_{\alpha}}(f_{\alpha}^{\star}).
\label{eq:supp_oracle_step3}
\end{equation}
Again by Assumption~\ref{ass:supp_uniform_convergence}, now applied to $f_{\alpha}^{\star}$,
\begin{equation}
\widehat R_{S_{\alpha}}(f_{\alpha}^{\star})
\le
R_{P_{\alpha}}(f_{\alpha}^{\star})
+
\Gamma_{\alpha}.
\label{eq:supp_oracle_step4}
\end{equation}
Combining Eqs.~\eqref{eq:supp_oracle_step1}--\eqref{eq:supp_oracle_step4} gives
\begin{equation}
\begin{aligned}
R_{P_{\alpha}}(f_{\alpha})
&\le
\widehat R_{S_{\alpha}}(f_{\alpha})
+
\Gamma_{\alpha}\\
&\le
\inf_{f\in\mathcal F}
\widehat R_{S_{\alpha}}(f)
+
\eta_{\alpha}
+
\Gamma_{\alpha}\\
&\le
\widehat R_{S_{\alpha}}(f_{\alpha}^{\star})
+
\eta_{\alpha}
+
\Gamma_{\alpha}\\
&\le
R_{P_{\alpha}}(f_{\alpha}^{\star})
+
\eta_{\alpha}
+
2\Gamma_{\alpha}\\
&=
\inf_{f\in\mathcal F}
R_{P_{\alpha}}(f)
+
\eta_{\alpha}
+
2\Gamma_{\alpha}.
\end{aligned}
\label{eq:supp_oracle_chain}
\end{equation}
This proves Eq.~\eqref{eq:supp_oracle_ineq_general}. Substituting Eq.~\eqref{eq:supp_effective_rate} gives Eq.~\eqref{eq:supp_bias_variance}.
\end{proof}

\begin{remark}[Effective sample size]
The quantity $m_{\mathrm{eff}}(\alpha)$ is not necessarily the raw number of proxy and generated samples. It represents the number of statistically useful views after accounting for correlation and redundancy among generated examples. Highly correlated generated views can make $m_{\mathrm{eff}}(\alpha)$ much smaller than the raw sample count, while genuinely diverse target-aligned views increase it. The specific decay rate of the statistical error depends on the complexity of $\mathcal F$, the loss, and the dependence structure of the generated samples. The form $C_v/m_{\mathrm{eff}}(\alpha)$ is therefore used as an oracle-type effective-view term.
\end{remark}

Define
\begin{equation}
\begin{aligned}
\Delta_{\alpha}
&:=
(1-\alpha)\Delta_p+\alpha\Delta_w+\alpha e_w,\\
\varepsilon(\alpha)
&:=
(1-\alpha)\varepsilon_p+\alpha\varepsilon_g,
\end{aligned}
\label{eq:supp_delta_epsilon_alpha}
\end{equation}
where
\begin{equation}
\varepsilon_p:=\Wass(P_p,P_p^{\star}),
\quad
\varepsilon_g:=\Wass(P_g,P_g^{\star}).
\label{eq:supp_noise_mix}
\end{equation}

\begin{theorem}[Unified target-risk bound]
\label{thm:supp_main}
Under Assumptions~\ref{ass:supp_lipschitz_loss}, \ref{ass:supp_approx_erm}, and~\ref{ass:supp_uniform_convergence}, and under the effective-rate condition in Eq.~\eqref{eq:supp_effective_rate},
\begin{equation}
\begin{aligned}
R_t(f_{\alpha})
\le\;&
R_t(F^{\star})
+B_0^2
+\alpha B_{\mathrm{warp}}^2\\
&+
\frac{C_v}{m_{\mathrm{eff}}(\alpha)}
+L\Delta_{\alpha}
+2L\varepsilon(\alpha).
\end{aligned}
\label{eq:supp_main_bound}
\end{equation}
Moreover, under Assumptions~\ref{ass:supp_lipschitz_degradation} and~\ref{ass:supp_bounded_overlap},
\begin{equation}
\begin{aligned}
\varepsilon_p
&\le
(1+L_D)\,\mathbb{E}\|E\|_F,\\
\varepsilon_g
&\le
(1+L_D)
\sqrt{\kappa\,\mathbb{E}\|E\|_F^2}.
\end{aligned}
\label{eq:supp_noise_mix_control}
\end{equation}
\end{theorem}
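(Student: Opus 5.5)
The bound follows from a chain of risk comparisons that moves from the actual mixture $P_{\alpha}$ to the target $P_t$. Each link uses one earlier lemma or one application of Assumption~\ref{ass:supp_lipschitz_loss}. Throughout, $P_{\alpha}^{\star}$ serves as the pivot.

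First, apply Assumption~\ref{ass:supp_lipschitz_loss} to $f_{\alpha}$ and pass from $P_t$ to $P_{\alpha}^{\star}$, then from $P_{\alpha}^{\star}$ to $P_{\alpha}$:
\[
R_t(f_{\alpha})\le R_{P_{\alpha}}(f_{\alpha})+L\,\Wass(P_{\alpha},P_{\alpha}^{\star})+L\,\Wass(P_t,P_{\alpha}^{\star}).
\]
Lemma~\ref{lem:mixture_gap} bounds the last term by $L\Delta_{\alpha}$. For the middle term, use joint convexity of $\Wass$ under mixtures with common weights. Concretely, take optimal couplings of $(P_p,P_p^{\star})$ and $(P_g,P_g^{\star})$ and mix them with weights $(1-\alpha,\alpha)$; this gives
\[
\Wass(P_{\alpha},P_{\alpha}^{\star})\le(1-\alpha)\varepsilon_p+\alpha\varepsilon_g=\varepsilon(\alpha).
\]
This contributes one copy of $L\varepsilon(\alpha)$.

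Second, Lemma~\ref{lem:supp_opt_gap}, under Assumptions~\ref{ass:supp_approx_erm}, \ref{ass:supp_uniform_convergence} and the effective-rate condition, gives
\[
R_{P_{\alpha}}(f_{\alpha})\le \inf_{f\in\mathcal F}R_{P_{\alpha}}(f)+C_v/m_{\mathrm{eff}}(\alpha)\le R_{P_{\alpha}}(f_{\mathcal F}^{\star})+C_v/m_{\mathrm{eff}}(\alpha).
\]
Then apply Assumption~\ref{ass:supp_lipschitz_loss} once more, this time to the comparator, and bound $\Wass(P_{\alpha},P_{\alpha}^{\star})$ again by $\varepsilon(\alpha)$:
\[
R_{P_{\alpha}}(f_{\mathcal F}^{\star})\le R_{P_{\alpha}^{\star}}(f_{\mathcal F}^{\star})+L\varepsilon(\alpha).
\]
This produces the second copy of $L\varepsilon(\alpha)$. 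Finally, Lemma~\ref{lem:supp_structural_bias} bounds $R_{P_{\alpha}^{\star}}(f_{\mathcal F}^{\star})$ by $R_t(F^{\star})+B_0^2+\alpha B_{\mathrm{warp}}^2$. Concatenating these inequalities gives exactly Eq.~\eqref{eq:supp_main_bound}. The second part, Eq.~\eqref{eq:supp_noise_mix_control}, is a direct restatement of Lemma~\ref{lem:pair_perturb} under Assumptions~\ref{ass:supp_lipschitz_degradation} and~\ref{ass:supp_bounded_overlap}.

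The only step not already packaged as an earlier lemma is the mixture bound $\Wass(P_{\alpha},P_{\alpha}^{\star})\le\varepsilon(\alpha)$. I would prove it directly: fix optimal (or $\epsilon$-optimal) couplings $\pi_p,\pi_g$ of the two branch pairs. The mixture $(1-\alpha)\pi_p+\alpha\pi_g$ is then a coupling of $P_{\alpha}$ and $P_{\alpha}^{\star}$, and its transport cost is $(1-\alpha)\varepsilon_p+\alpha\varepsilon_g$ (plus $\epsilon$, which we let go to zero).

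The other delicate point is bookkeeping. The comparator $f_{\mathcal F}^{\star}$ must lie in $\mathcal F$ for the infimum step. Assumption~\ref{ass:supp_lipschitz_loss} must be applied only to hypotheses in $\mathcal F$, namely $f_{\alpha}$ and $f_{\mathcal F}^{\star}$. It is never applied to the Bayes restorer $F^{\star}$, which enters only through the definition of $B_0^2$ and so is never transported between distributions.
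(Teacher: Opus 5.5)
Your proposal is correct and follows the paper's proof essentially step for step. The paper applies Assumption~\ref{ass:supp_lipschitz_loss} once to obtain $L\,\Wass(P_t,P_{\alpha})$ and then splits it through $P_{\alpha}^{\star}$ by the triangle inequality, which is equivalent to your two Lipschitz applications; the remaining links (Lemma~\ref{lem:supp_opt_gap} evaluated at $f_{\mathcal F}^{\star}$, the Lipschitz transfer of the comparator, Lemmas~\ref{lem:supp_structural_bias} and~\ref{lem:mixture_gap}, and Lemma~\ref{lem:pair_perturb} for the second part) are identical, and your coupling-mixture argument for $\Wass(P_{\alpha},P_{\alpha}^{\star})\le\varepsilon(\alpha)$ is precisely the joint convexity that the paper invokes under the phrase ``convexity in each argument,'' so it simply makes that step explicit.
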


\begin{proof}
By Assumption~\ref{ass:supp_lipschitz_loss},
\begin{equation}
R_t(f_{\alpha})
\le
R_{P_{\alpha}}(f_{\alpha})
+
L\,\Wass(P_t,P_{\alpha}).
\label{eq:supp_target_to_actual}
\end{equation}
Using Lemma~\ref{lem:supp_opt_gap} and evaluating the infimum at $f_{\mathcal F}^{\star}$ gives
\begin{equation}
R_{P_{\alpha}}(f_{\alpha})
\le
R_{P_{\alpha}}(f_{\mathcal F}^{\star})
+
\frac{C_v}{m_{\mathrm{eff}}(\alpha)}.
\label{eq:supp_opt_step}
\end{equation}
Again by Assumption~\ref{ass:supp_lipschitz_loss},
\begin{equation}
R_{P_{\alpha}}(f_{\mathcal F}^{\star})
\le
R_{P_{\alpha}^{\star}}(f_{\mathcal F}^{\star})
+
L\,\Wass(P_{\alpha},P_{\alpha}^{\star}).
\label{eq:supp_clean_compare}
\end{equation}
Combining Eqs.~\eqref{eq:supp_target_to_actual}--\eqref{eq:supp_clean_compare} yields
\begin{equation}
\begin{aligned}
R_t(f_{\alpha})
\le\;&
R_{P_{\alpha}^{\star}}(f_{\mathcal F}^{\star})
+
\frac{C_v}{m_{\mathrm{eff}}(\alpha)}\\
&+
L\,\Wass(P_t,P_{\alpha})
+
L\,\Wass(P_{\alpha},P_{\alpha}^{\star}).
\end{aligned}
\label{eq:supp_after_opt}
\end{equation}
The triangle inequality gives
\begin{equation}
\Wass(P_t,P_{\alpha})
\le
\Wass(P_t,P_{\alpha}^{\star})
+
\Wass(P_{\alpha}^{\star},P_{\alpha}).
\label{eq:supp_triangle_actual}
\end{equation}
Substituting Eq.~\eqref{eq:supp_triangle_actual} into Eq.~\eqref{eq:supp_after_opt} gives
\begin{equation}
\begin{aligned}
R_t(f_{\alpha})
\le\;&
R_{P_{\alpha}^{\star}}(f_{\mathcal F}^{\star})
+
\frac{C_v}{m_{\mathrm{eff}}(\alpha)}\\
&+
L\,\Wass(P_t,P_{\alpha}^{\star})
+
2L\,\Wass(P_{\alpha}^{\star},P_{\alpha}).
\end{aligned}
\label{eq:supp_after_transport}
\end{equation}
Lemma~\ref{lem:supp_structural_bias} bounds the first term on the right. Lemma~\ref{lem:mixture_gap} gives
\begin{equation}
\Wass(P_t,P_{\alpha}^{\star})
\le
\Delta_{\alpha}.
\label{eq:supp_delta_alpha_bound}
\end{equation}
By convexity of $\Wass$ in each argument,
\begin{equation}
\begin{aligned}
\Wass(P_{\alpha}^{\star},P_{\alpha})
&\le
(1-\alpha)\Wass(P_p^{\star},P_p)\\
&\quad+
\alpha\Wass(P_g^{\star},P_g)\\
&=
\varepsilon(\alpha).
\end{aligned}
\label{eq:supp_eps_alpha_bound}
\end{equation}
Substituting Eqs.~\eqref{eq:supp_delta_alpha_bound} and \eqref{eq:supp_eps_alpha_bound} into Eq.~\eqref{eq:supp_after_transport} proves Eq.~\eqref{eq:supp_main_bound}. Finally, Eq.~\eqref{eq:supp_noise_mix_control} follows directly from Lemma~\ref{lem:pair_perturb}.
\end{proof}

\section{Implementation Details of Improved unCLIP}
\label{sec:supp_unclip_details}

This section provides additional implementation details of the improved unCLIP module used for target-aligned RGB synthesis. We fine-tune the CLIP-vision encoder of unCLIP on COCO2017~\cite{COCO2017} using synthetically blurred image pairs. Specifically, images from the COCO2017 training set are randomly split into training and validation subsets with a ratio of 8:2. For each clean image, Gaussian blur is applied to construct the degraded counterpart. The blur kernel size is randomly selected from $\{3,5,7\}$, and the blur standard deviation is uniformly sampled from $[0.5,1.5]$.

During fine-tuning, the trainable CLIP-vision encoder takes the blurred image as input, while the original frozen CLIP-vision encoder extracts the target embedding from the corresponding clean image. The encoder is optimized with an embedding-alignment loss:
\begin{equation}
\mathcal{L}_{\mathrm{clip}}
=
\left\|
E_{\mathrm{clip}}^{\mathrm{ft}}(I_{\mathrm{blur}})
-
E_{\mathrm{clip}}^{\mathrm{frozen}}(I_{\mathrm{clean}})
\right\|_1 ,
\end{equation}
where $E_{\mathrm{clip}}^{\mathrm{ft}}$ denotes the fine-tuned CLIP-vision encoder and $E_{\mathrm{clip}}^{\mathrm{frozen}}$ denotes the original frozen encoder. This objective encourages blurred inputs to produce embeddings close to those of their clean counterparts, improving the robustness of image-conditioned generation under imperfect proxy RGBs.

We train the encoder for 10 epochs using Adam with a batch size of 8. The initial learning rate is set to $1\times10^{-6}$ and decayed by a cosine annealing schedule to a minimum learning rate of $1\times10^{-8}$. We use this conservative learning rate to adapt the encoder to blurred inputs while avoiding excessive drift from the original unCLIP embedding space.

During RGB generation, we use ``high resolution, highly detailed, 4k'' as the fixed positive prompt and ``blurry, horror, noise'' as the fixed negative prompt. Reverse diffusion is initialized from image embeddings perturbed with Gaussian noise with a standard deviation of 0.8 rather than from pure noise. For each proxy sample, we generate three target-aligned RGB guides for subsequent warp-based hyperspectral synthesis.

\section{Additional Generation and Warped Results}
\label{sec:supp_generation_warped}
\begin{figure*}[!t]
    \centering
    \includegraphics[width=\textwidth]{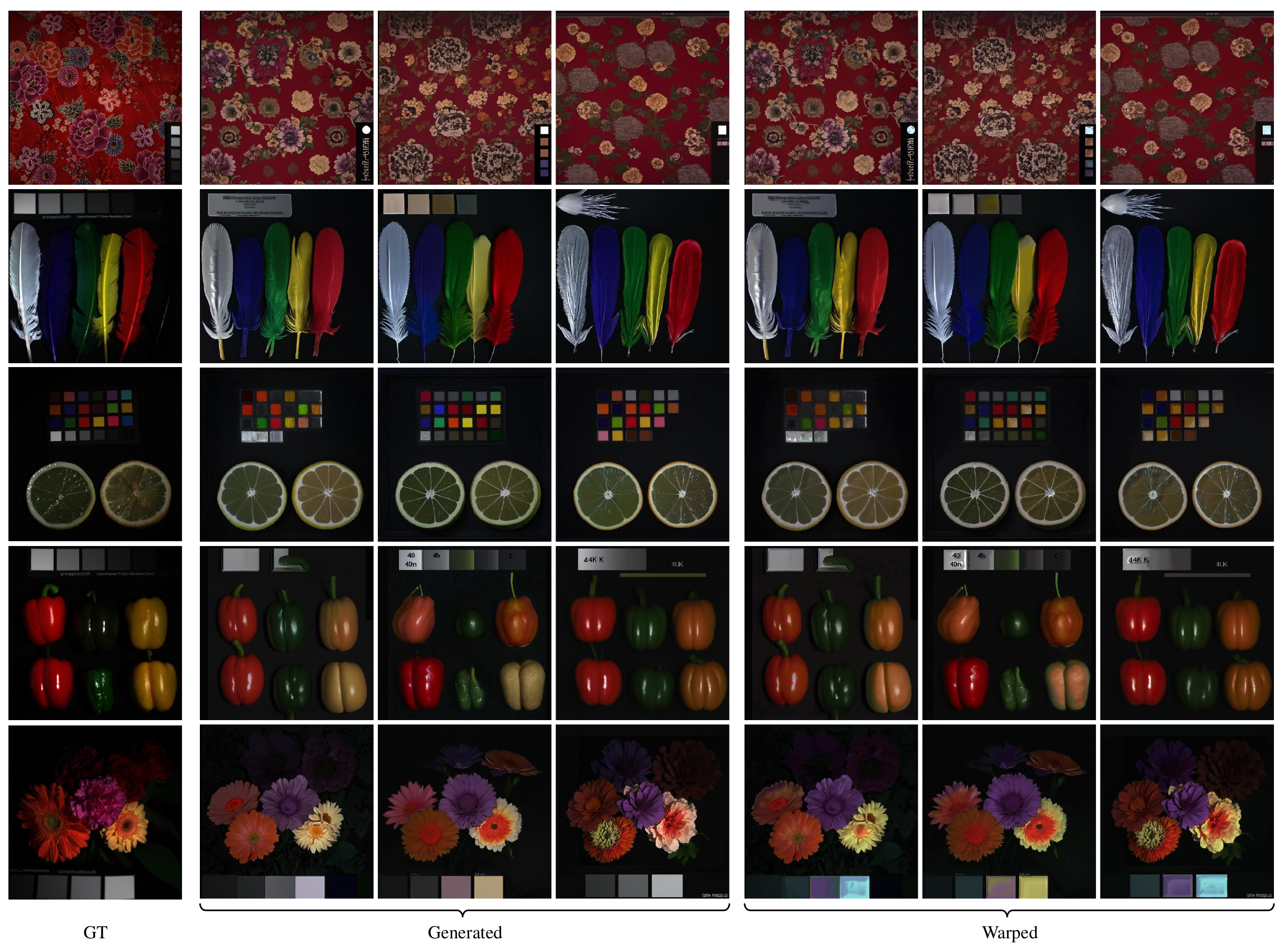}
    \caption{
    Additional generation and warped results on CAVE.
    }
    \label{fig:supp_cave_generation_warped}
\end{figure*}
\begin{figure*}[!t]
    \centering
    \includegraphics[width=\textwidth]{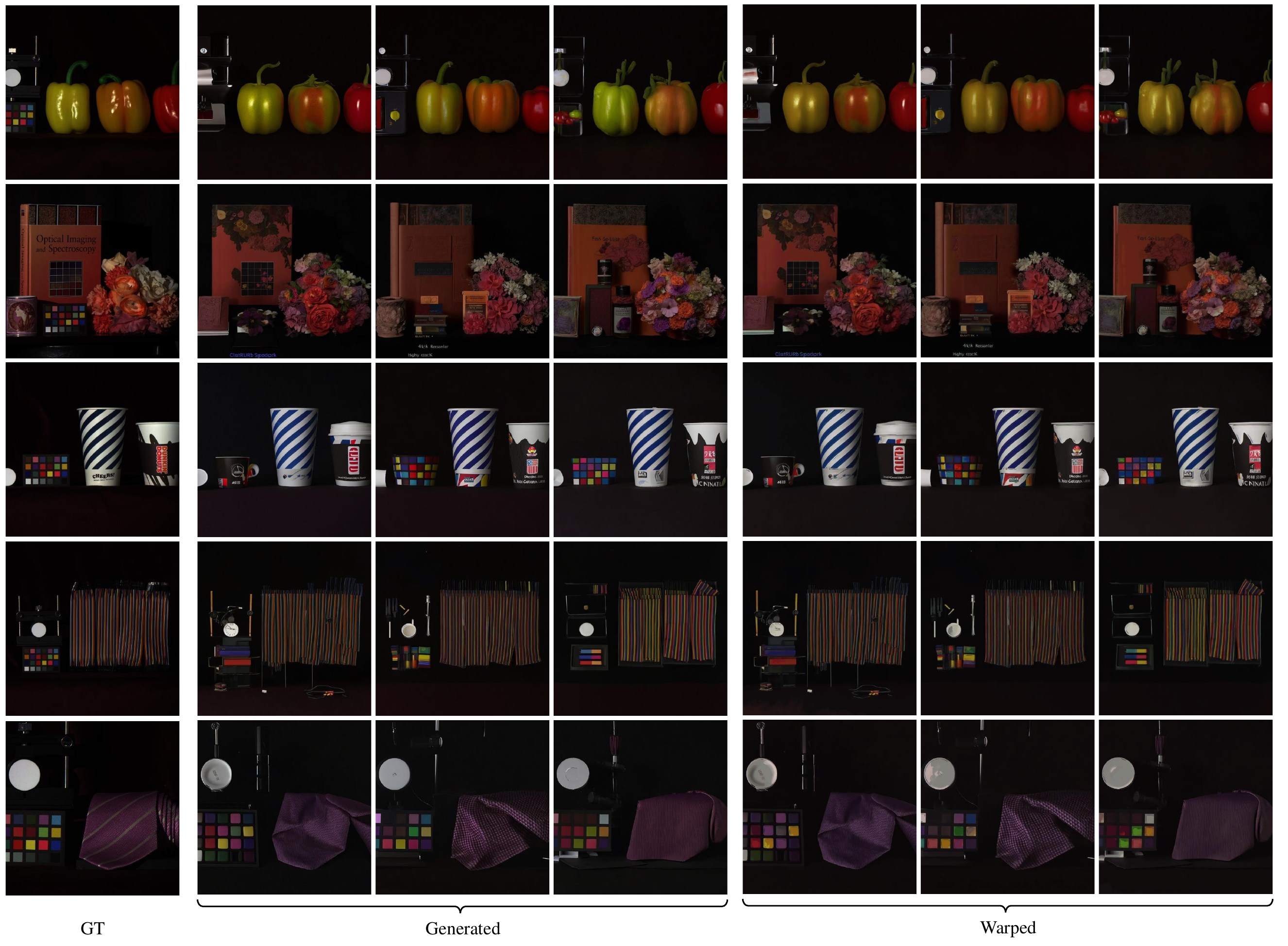}
    \caption{
    Additional generation and warped results on KAIST.
    }
    \label{fig:supp_kaist_generation_warped}
\end{figure*}
\begin{figure*}[!t]
    \centering
    \includegraphics[width=\textwidth]{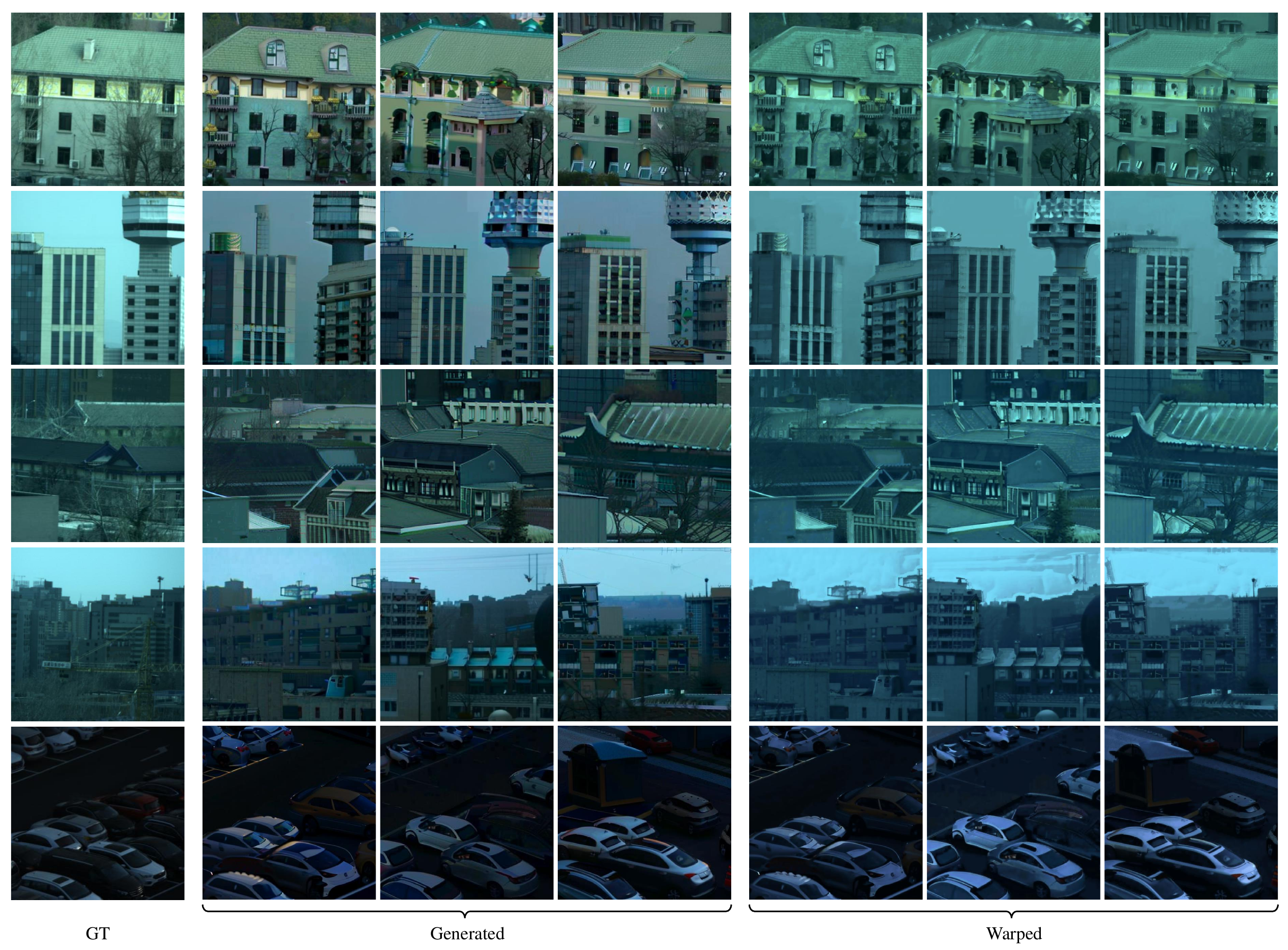}
    \caption{
    Additional generation and warped results on HSIDwrD.
    }
    \label{fig:supp_hsidwrd_generation_warped}
\end{figure*}
This section provides additional qualitative examples of the proposed distribution-adaptive synthesis module. In the main paper, we show representative generated samples and warped hyperspectral results to illustrate how HIR-ALIGN expands the target-domain training distribution. Here, we provide more examples on CAVE, KAIST, and HSIDwrD to further verify that the generation stage can produce diverse target-aligned appearances, while the warp-based spectral transfer module maps the proxy spectra to the generated layouts in a spatially coherent manner.

Fig.~\ref{fig:supp_cave_generation_warped} shows the generated and warped results on CAVE. The generated RGB views introduce variations in object layout, texture, illumination, and local appearance while still preserving scene-level semantics. After spectral warping, the corresponding hyperspectral samples remain visually aligned with the generated views, indicating that the estimated transport weights can transfer spectral information from the proxy HSI to the generated target-aligned layout.

Fig.~\ref{fig:supp_kaist_generation_warped} shows the corresponding visualization on KAIST. Compared with CAVE, KAIST contains more diverse indoor and outdoor scenes, including buildings, objects, and low-light environments. The results demonstrate that the proposed generation and warping pipeline is not restricted to a specific scene category. The generated samples remain semantically meaningful, and the warped results provide target-compatible hyperspectral supervision for subsequent finetuning.

Fig.~\ref{fig:supp_hsidwrd_generation_warped} further presents generation and warped results on HSIDwrD. Since HSIDwrD contains real-world hyperspectral observations, its images often include more complicated sensor characteristics and illumination variations than simulated benchmark data. The results show that HIR-ALIGN can still construct plausible generated views and corresponding warped hyperspectral samples, which supports its applicability to real-world target-domain adaptation.

\section{Additional Proxy Results}
\label{sec:supp_proxy_results}

This section visualizes the proxy results used by HIR-ALIGN across different restoration tasks. The proxy HSI is produced by applying a source-pretrained restorer to the degraded target observation, and it serves as the spectral anchor for both proxy-based finetuning and distribution-adaptive synthesis. Although the proxy is not assumed to be perfectly clean, it should preserve the dominant target-domain structure and approximate spectral information. Such a proxy provides a stable starting point for constructing target-aligned supervision.

Fig.~\ref{fig:supp_proxy_examples} presents proxy examples across six restoration tasks. For denoising tasks, SERT is used as the example proxy model. For super-resolution, MCNet is used. For the remaining tasks, including inpainting, deblurring, and band completion, MP-HSIR is used as the example proxy model. These examples show that the proxy branch can remove a substantial portion of the degradation while preserving scene content. This property is important because the downstream generated samples inherit their spectra from the proxy HSIs through the warp-based spectral transfer module.

\begin{figure*}[!t]
    \centering
    \includegraphics[width=\textwidth]{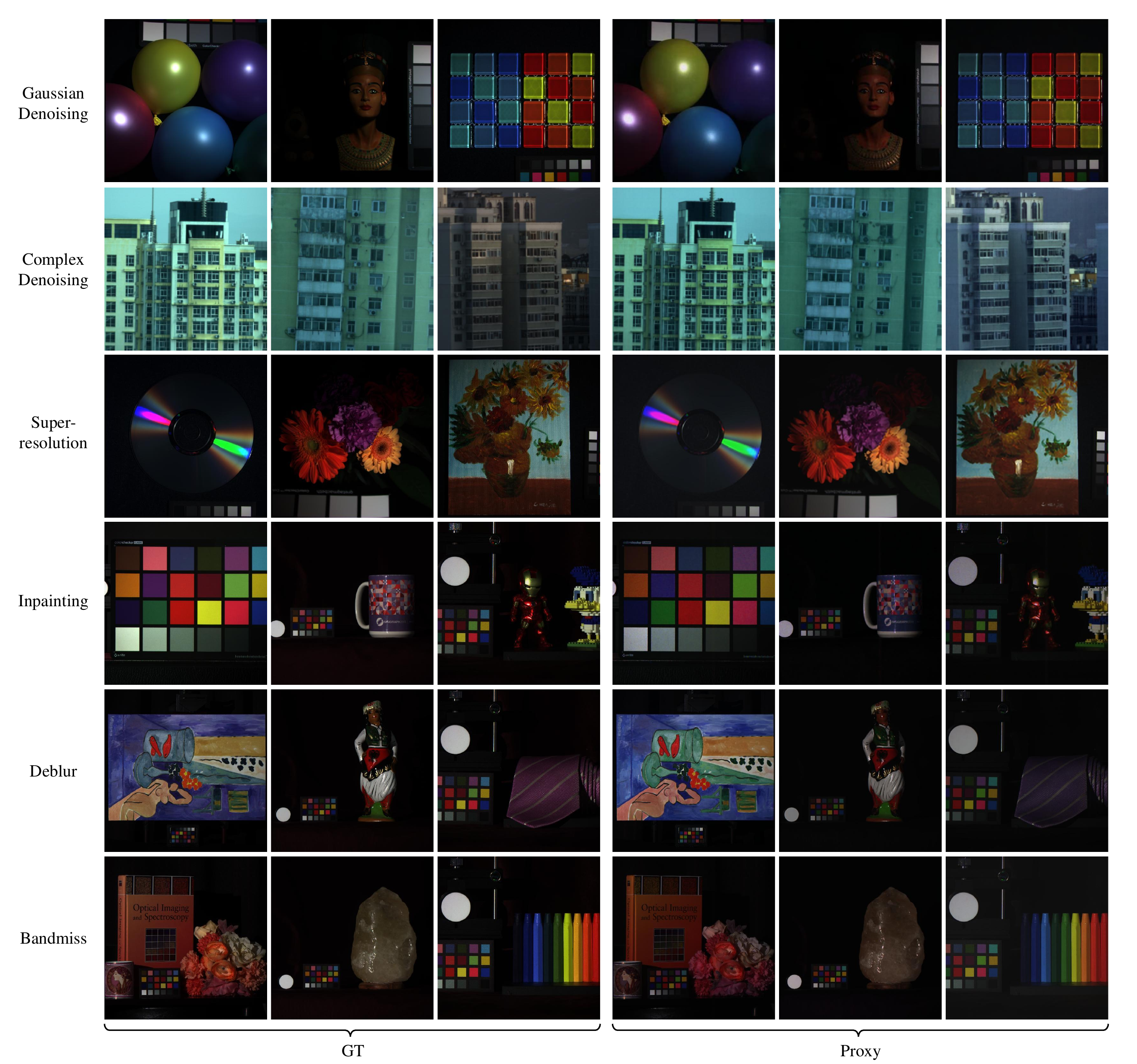}
    \caption{
    Proxy examples across different restoration tasks. The example models are SERT for denoising, MCNet for super-resolution, and MP-HSIR for the remaining tasks. The proxy results preserve the major scene structure and provide the spectral anchors used by HIR-ALIGN for target-adaptive data construction.
    }
    \label{fig:supp_proxy_examples}
\end{figure*}

\section{Additional Restoration Results}
\label{sec:supp_restoration_results}

This section provides additional qualitative restoration results of HIR-ALIGN. In the main paper, we report quantitative comparisons over multiple hyperspectral image restoration tasks. Here, we further visualize representative restored results to show the effectiveness of the proposed adaptation strategy across different datasets and degradation types.

Fig.~\ref{fig:supp_cave_restoration} shows the restoration results on CAVE. The examples cover six restoration tasks, demonstrating that HIR-ALIGN consistently improves the visual quality of restored hyperspectral images under different degradation settings. Compared with source-only restoration results, the adapted results better recover local structures, suppress degradation artifacts, and preserve more faithful spatial details. These observations indicate that the proposed target-adaptive finetuning can effectively improve restoration performance on the CAVE target domain.

\begin{figure*}[!t]
    \centering
    \includegraphics[width=0.8\textwidth]{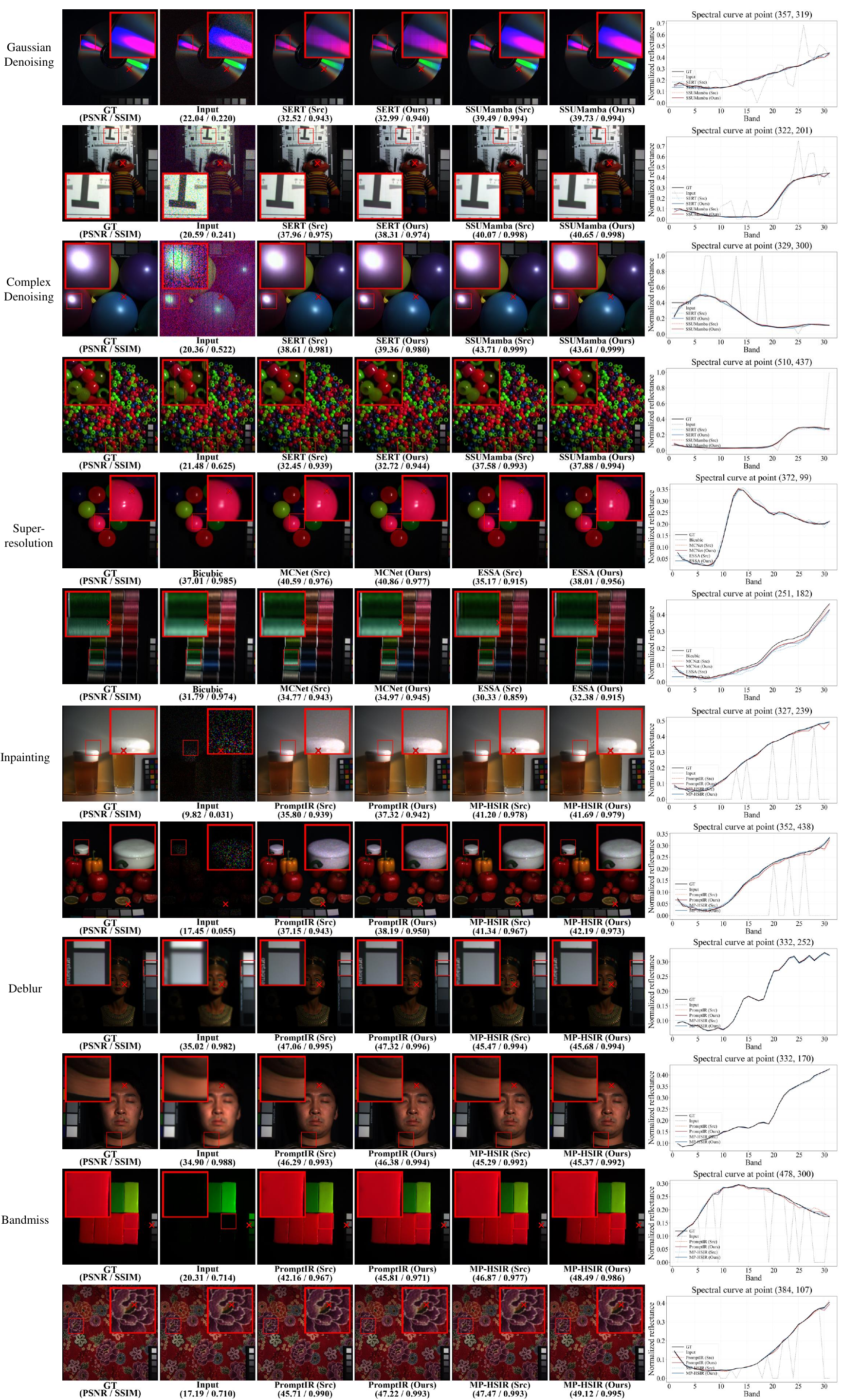}
    \caption{
    Additional restoration results on CAVE across six hyperspectral image restoration tasks. The adapted results produced by HIR-ALIGN recover clearer structures and reduce task-specific artifacts compared with source-only restoration results.
    }
    \label{fig:supp_cave_restoration}
\end{figure*}

Fig.~\ref{fig:supp_kaist_restoration} presents the corresponding restoration results on KAIST. Since KAIST contains more diverse scenes and imaging conditions, the visual comparisons further verify the generality of HIR-ALIGN. Across the six restoration tasks, the adapted models produce more visually consistent results and better preserve scene content. This suggests that the generated target-aligned supervision and proxy-anchored finetuning are effective beyond a single benchmark dataset.

\begin{figure*}[!t]
    \centering
    \includegraphics[width=0.8\textwidth]{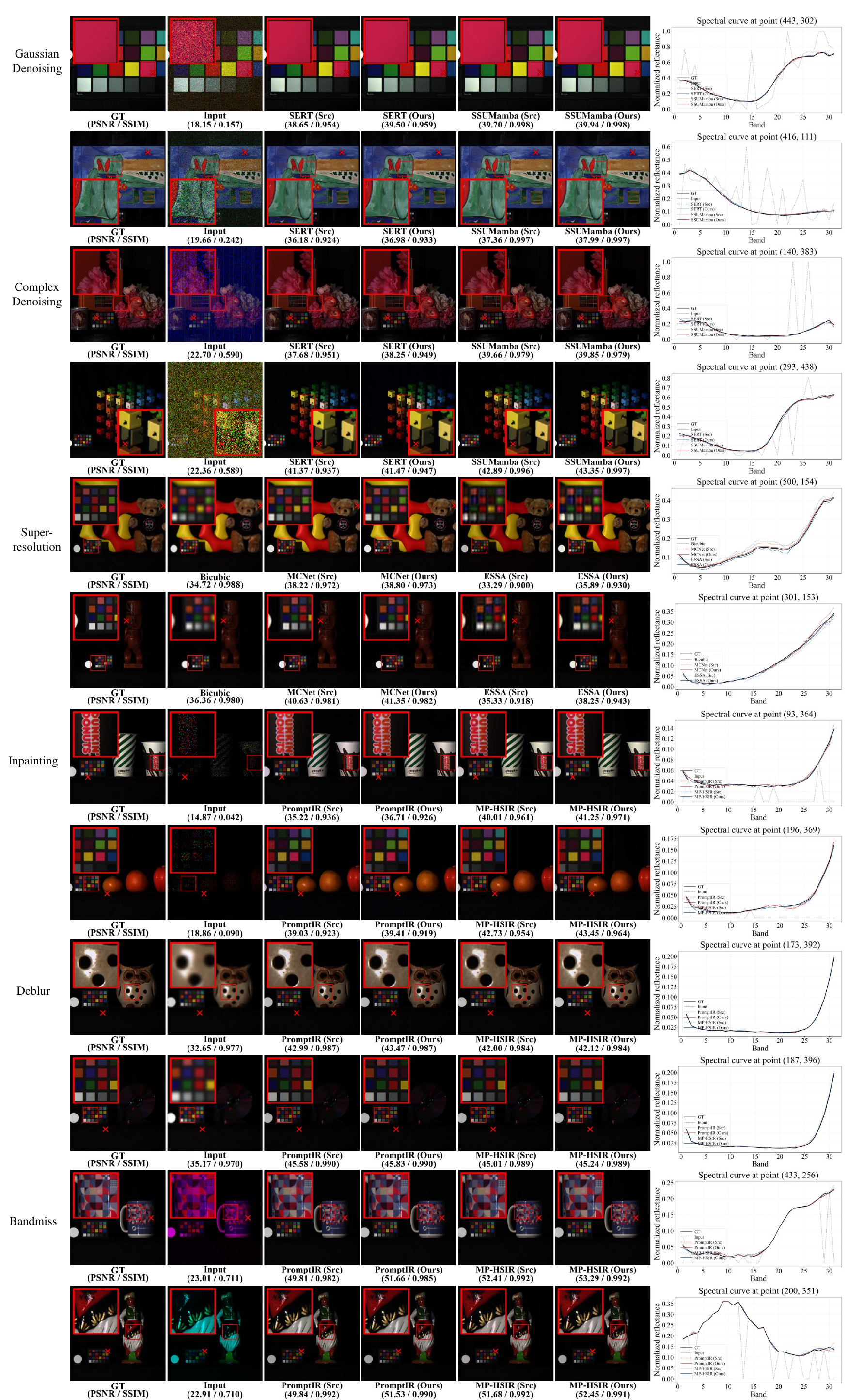}
    \caption{
    Additional restoration results on KAIST across six hyperspectral image restoration tasks. The results show that HIR-ALIGN improves restoration quality under diverse scene contents and degradation types.
    }
    \label{fig:supp_kaist_restoration}
\end{figure*}

Fig.~\ref{fig:supp_real_restoration} shows the denoising results on HSIDwrD. Compared with simulated benchmark data, real-world observations often contain more complex noise patterns and acquisition artifacts. The results demonstrate that HIR-ALIGN can still improve denoising quality in real scenarios, producing cleaner restored images while maintaining important spatial structures. This further supports the applicability of the proposed method to practical hyperspectral restoration problems.

\begin{figure*}[!t]
    \centering
    \includegraphics[width=0.7\textwidth]{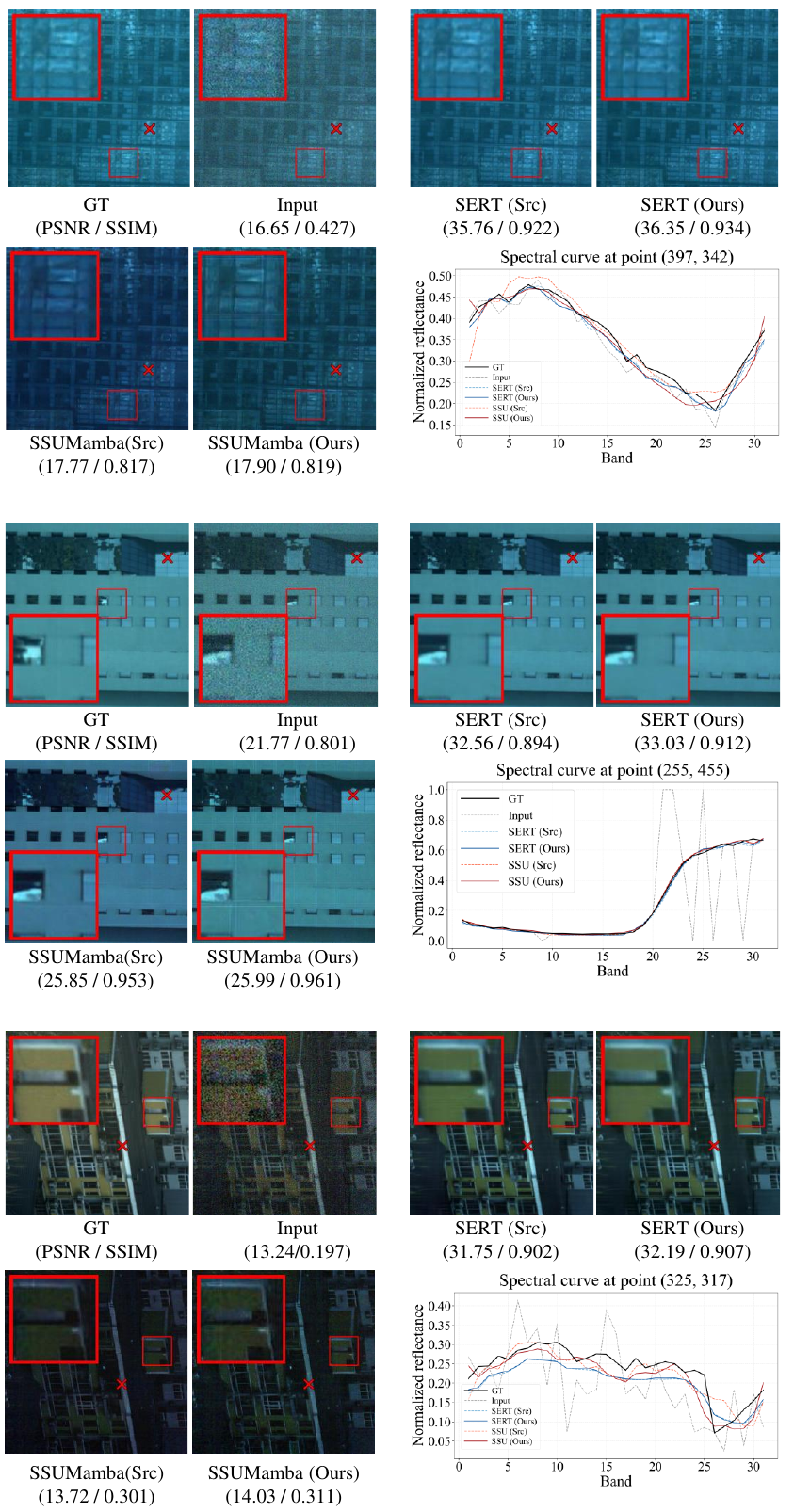}
    \caption{
    Additional denoising results on HSIDwrD. HIR-ALIGN suppresses real noise patterns while preserving spatial structures, showing its effectiveness under practical acquisition conditions.
    }
    \label{fig:supp_real_restoration}
\end{figure*}

\label{sec:supp_spectral_results}
\begin{figure*}[!t]
    \centering
    \includegraphics[width=\textwidth]{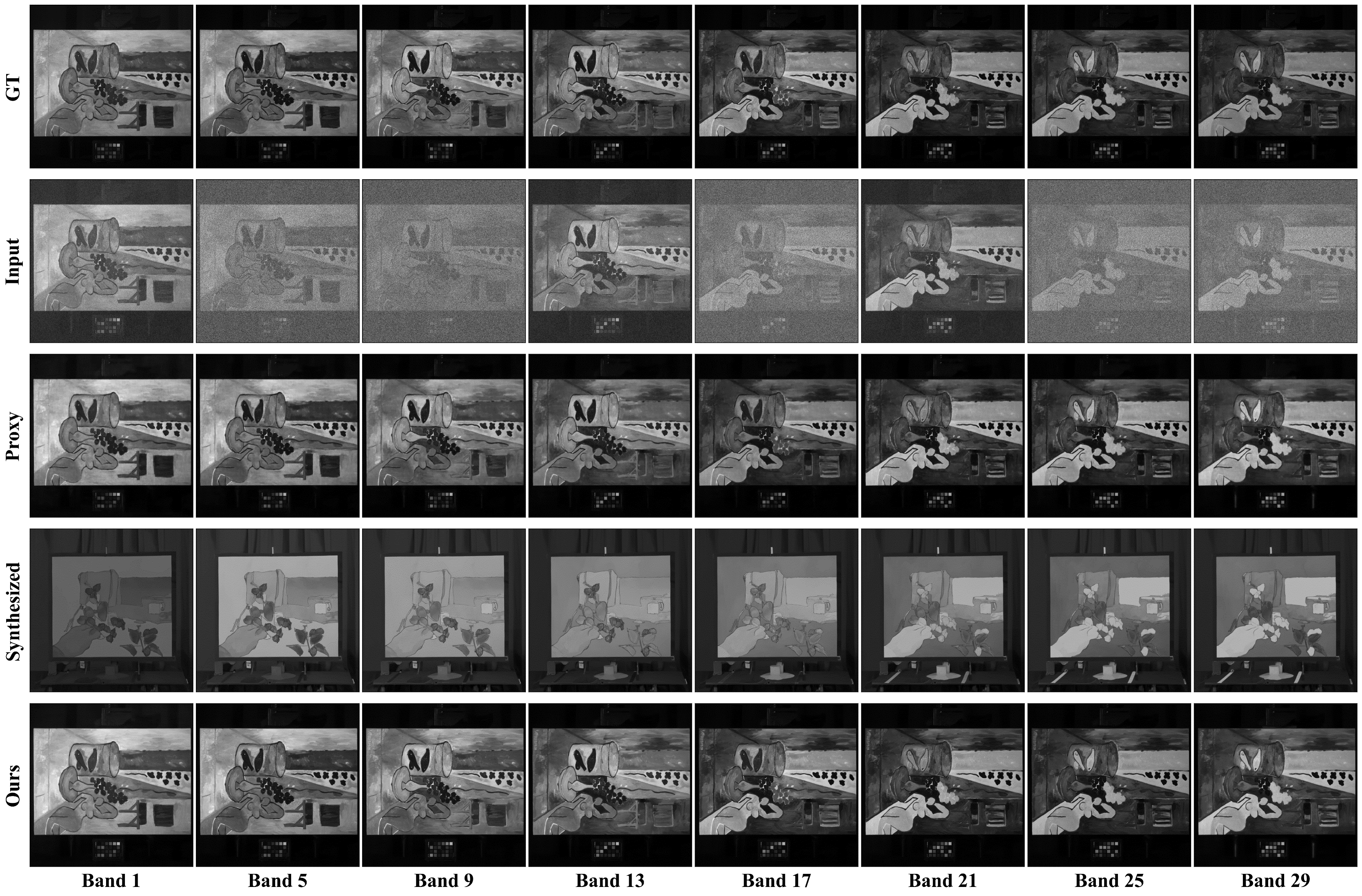}
    \caption{
    Band-wise visualization of representative hyperspectral images. The selected spectral bands show that the proposed method maintains more coherent structures across the spectral dimension and suppresses degradation artifacts in individual bands.
    }
    \label{fig:supp_band_examples}
\end{figure*}

\begin{figure*}[!t]
    \centering
    \includegraphics[width=\textwidth]{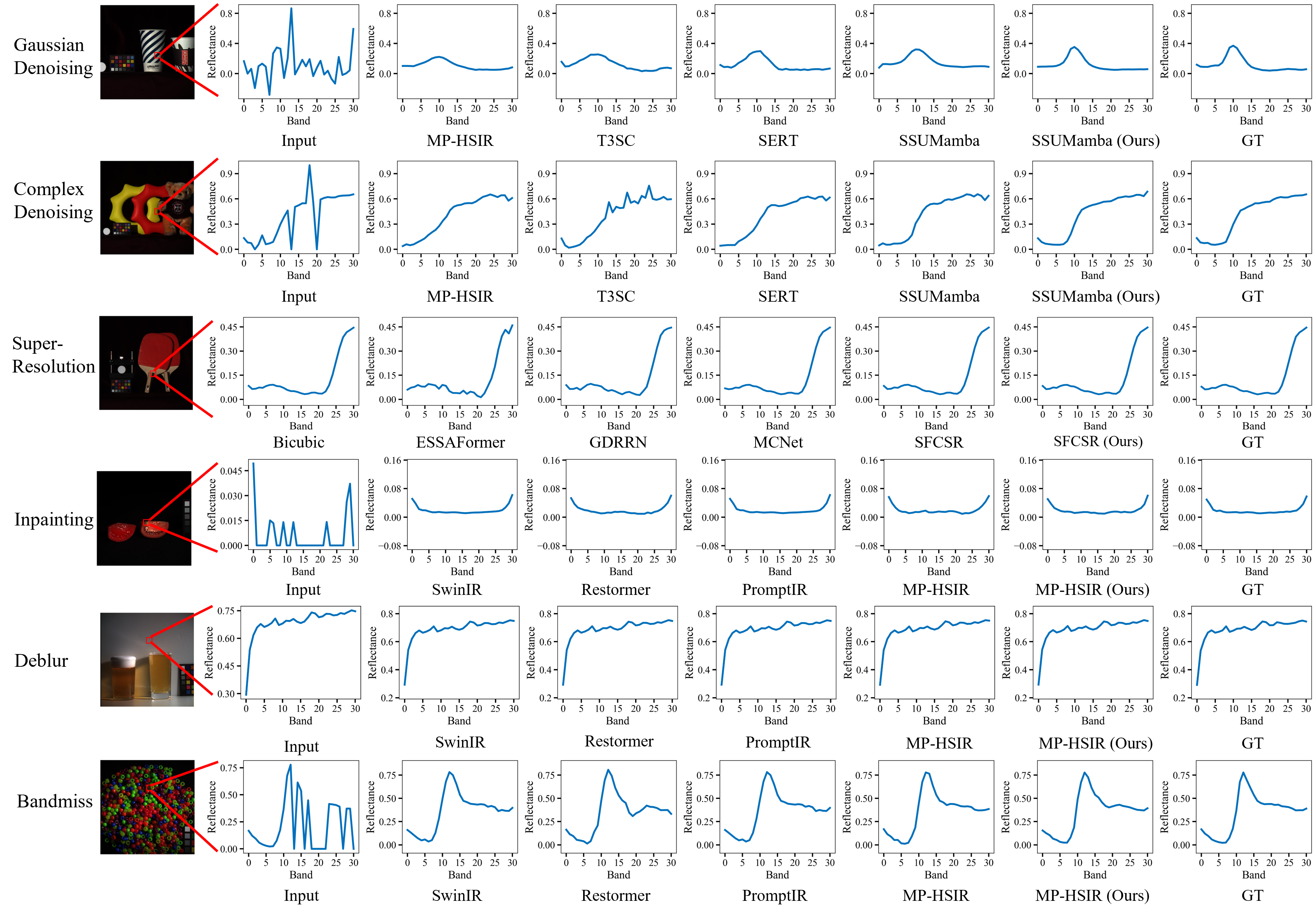}
    \caption{
    Spectral curve examples at representative spatial locations. The curves demonstrate that the adapted outputs better match the ground-truth spectral trends, showing improved spectral fidelity across bands.
    }
    \label{fig:supp_spectral_curve_examples}
\end{figure*}

Since hyperspectral restoration requires consistency over the full spectral cube, RGB-like visualizations alone are insufficient for evaluating restoration quality. We therefore provide additional spectral visualizations to show that the proposed adaptation improves not only spatial appearance but also spectral behavior. These results complement the quantitative PSNR, SSIM, and SAM comparisons in the main paper.

Fig.~\ref{fig:supp_band_examples} shows representative band-wise visualizations of a hyperspectral image. Each column corresponds to a selected spectral band, and each row presents a different source or restoration output. The comparison illustrates how degradation and restoration affect individual spectral bands. The proposed method produces more coherent spatial structures across different wavelengths, indicating that the restored hyperspectral cube is not merely visually plausible in RGB space but also consistent across the spectral dimension.

Fig.~\ref{fig:supp_spectral_curve_examples} further compares spectral curves at representative spatial locations. The marked pixels indicate the positions where the spectra are extracted. Compared with degraded or source-only results, the adapted outputs better follow the ground-truth spectral trend across bands. This provides a direct spectral-level verification that HIR-ALIGN improves hyperspectral restoration beyond RGB appearance, and that the proxy-anchored generation strategy helps maintain spectral fidelity.

Overall, these supplementary visualizations provide additional qualitative evidence for the effectiveness of HIR-ALIGN. The generation and warped results show that the proposed synthesis branch can enlarge the target-domain support, the proxy examples show that source-pretrained restorers can provide useful target-domain spectral anchors, and the spectral visualizations verify that the final outputs remain consistent across individual bands and full spectral curves.

\bibliographystyle{IEEEtran}
\bibliography{references_supp}

\end{document}